\let\oldendproof\endproof
\renewcommand{\endproof}{\qed\oldendproof}
\newenvironment{sketchproof}{%
  \proof}{\endproof}
\pgfplotsset{compat=1.10}
\journalname{Machine learning}
\title{Logical reduction of metarules}
\author{Andrew Cropper \and Sophie Tourret}
\begin{document}

\institute{A. Cropper\at
              University of Oxford, UK\\
              \email{andrew.cropper@cs.ox.ac.uk}
            \and
                Sophie Tourret \at
                Max Planck Institute for Informatics, Germany\\
              \email{stourret@mpi-inf.mpg.de}
}
\newcommand{\nsym}{\ensuremath{\not\sqsubset}}
\newcommand{\sym}{\ensuremath{\sqsubset}}
\newcommand{\y}{\ensuremath{\checkmark}}
\newcommand{\n}{\ensuremath{\times}}
\newcommand{\M}[2]{\ensuremath{\mathcal{M}^{#1}_{#2}}}

\newcommand{\C}[2]{\ensuremath{\mathcal{C}^{#1}_{#2}}}
\newcommand{\D}[2]{\ensuremath{\mathcal{D}^{#1}_{#2}}}
\newcommand{\K}[2]{\ensuremath{\mathcal{K}^{#1}_{#2}}}
\newcommand{\U}[2]{\ensuremath{\mathcal{U}^{#1}_{#2}}}

\newcommand{\todo}[1]{\textbf{TODO: #1}}
\newcommand{\tw}[1]{\texttt{#1}}

\newtheorem{innercustomthm}{Theorem}
\newenvironment{customthm}[1]
  {\renewcommand\theinnercustomthm{#1}\innercustomthm}
  {\endinnercustomthm}

\newtheorem{innercustomprp}{Proposition}
\newenvironment{customprp}[1]
  {\renewcommand\theinnercustomprp{#1}\innercustomprp}
  {\endinnercustomprp}

\newcommand{\perm}[1]{\ensuremath{\mathcal{P}(#1)}}



\lstnewenvironment{myalgorithm}[1][] 
{
    \lstset{ 
        mathescape=true,
        numbers=left,
        escapeinside={*}{*},
        keywordstyle=\color{black}\bfseries\em,
        keywords={,input, output, return, datatype, function, func, in, if, else, foreach, while, begin, end, } 
        numbers=left,
        xleftmargin=.04\textwidth,
        #1 
    }
}
{}

\maketitle

\begin{abstract}
Many forms of inductive logic programming (ILP) use \emph{metarules}, second-order Horn clauses, to define the structure of learnable programs and thus the hypothesis space.
Deciding which metarules to use for a given learning task is a major open problem and is a trade-off between efficiency and expressivity: the hypothesis space grows given more metarules, so we wish to use fewer metarules, but if we use too few metarules then we lose expressivity.
In this paper, we study whether fragments of metarules can be logically reduced to minimal finite subsets.
We consider two traditional forms of logical reduction: subsumption and entailment.
We also consider a new reduction technique called \emph{derivation reduction}, which is based on SLD-resolution.
We compute reduced sets of metarules for fragments relevant to ILP and theoretically show whether these reduced sets are reductions for more general infinite fragments.
We experimentally compare learning with reduced sets of metarules on three domains: Michalski trains, string transformations, and game rules.
In general, derivation reduced sets of metarules outperforms subsumption and entailment reduced sets, both in terms of predictive accuracies and learning times.
\end{abstract}
\section{Introduction}
\label{sec:intro}

Many forms of inductive logic programming (ILP) \cite{mugg:metagold,crop:metaopt,emde:metarules,raedt:clint,dialogs,mobal,hexmil,andreas,albarghouthi2017constraint,ALPS,wang2014structure,evans:dilp} use second-order Horn clauses, called \emph{metarules}\footnote{Metarules are also called \emph{program schemata} \cite{dialogs}, \emph{second-order schemata} \cite{raedt:clint}, and \emph{clause templates} \cite{albarghouthi2017constraint}, amongst many other names.} as a form of declarative bias \cite{raedt:decbias}.
Metarules define the structure of learnable programs which in turn defines the hypothesis space. For instance, to learn the \emph{grandparent/2} relation given the \emph{parent/2} relation, the \emph{chain} metarule would be suitable:

\begin{center}
  \begin{tabular}{l}
    $P(A,B) \leftarrow Q(A,C), R(C,B)$\\
  \end{tabular}
\end{center}

\noindent
In this metarule\footnote{The fully quantified rule is $\exists P \exists Q \exists R \forall A \forall B \forall C \; P(A,B) \leftarrow Q(A,C), R(C,B)$.} the letters $P$, $Q$, and $R$ denote existentially quantified second-order variables (variables that can be bound to predicate symbols) and the letters $A$, $B$ and $C$ denote universally quantified first-order variables (variables that can be bound to constant symbols). Given the \emph{chain} metarule, the background \emph{parent/2} relation, and examples of the \emph{grandparent/2} relation, ILP approaches will try to find suitable substitutions for the existentially quantified second-order variables, such as the substitutions \{P/grandparent, Q/parent, R/parent\} to induce the theory:

\begin{center}
  \begin{tabular}{l}
    $\mathit{grandparent}(A,B) \leftarrow \mathit{parent}(A,C), \mathit{parent}(C,B)$\\
  \end{tabular}
\end{center}

\noindent


\noindent
However, despite the widespread use of metarules, there is little work determining which metarules to use for a given learning task.
Instead, suitable metarules are assumed to be given as part of the background knowledge, and are often used without any theoretical justification.
Deciding which metarules to use for a given learning task is a major open challenge \cite{crop:thesis,crop:minmeta} and is a trade-off between efficiency and expressivity: the hypothesis space grows given more metarules \cite{mugg:metabias,crop:minmeta}, so we wish to use fewer metarules, but if we use too few metarules then we lose expressivity. For instance, it is impossible to learn the \emph{grandparent/2} relation using only metarules with monadic predicates.

In this paper, we study whether potentially infinite fragments of metarules can be logically reduced to minimal, or irreducible, finite subsets, where a fragment is a syntactically restricted subset of a logical theory \cite{bradley2007calculus}.

 Cropper and Muggleton \cite{crop:minmeta} first studied this problem. They used Progol's entailment reduction algorithm \cite{mugg:progol} to identify entailment reduced sets of metarules, where a clause $C$ is entailment redundant in a clausal theory $T \cup \{C\}$ when $T \models C$.
 To illustrate entailment redundancy, consider the following first-order clausal theory $T_1$, where $p$, $q$, $r$, and $s$ are first-order predicates:

\begin{center}
  \begin{tabular}{l}
    $C_1 = p(A,B) \leftarrow q(A,B)$\\
    $C_2 = p(A,B) \leftarrow q(A,B),r(A)$\\
    $C_3 = p(A,B) \leftarrow q(A,B),r(A),s(B,C)$
  \end{tabular}
\end{center}

\noindent
In $T_1$ the clauses $C_2$ and $C_3$ are entailment redundant because they are both logical consequences of $C_1$, i.e. $\{C_1\} \models \{C_2,C_3\}$. Because $\{C_1\}$ cannot be reduced, it is a minimal entailment reduction of $T_1$.

Cropper and Muggleton showed that in some cases as few as two metarules are sufficient to entail an infinite fragment of \emph{chained}\footnote{A chained dyadic Datalog clause has the restriction that every first-order variable in a clause appears in exactly two literals and a path connects every literal in the body of C to the head of C. In other words, a chained dyadic Datalog clause has the form $P_0(X_0,X_1) \leftarrow P_1(X_0,X_2), P_2(X_2,X_3), \dots, P_n(X_n,X_1)$ where the order of the arguments in the literals does not matter.} second-order dyadic Datalog \cite{crop:minmeta}.
They also showed that learning with minimal sets of metarules improves predictive accuracies and reduces learning times compared to non-minimal sets.
To illustrate how a finite subset of metarules could entail an infinite set, consider the set of metarules with only monadic literals and a single first-order variable $A$:

\begin{center}
  \begin{tabular}{l}
    $M_1 = P(A) \leftarrow T_1(A)$\\
    $M_2 = P(A) \leftarrow T_1(A),T_2(A)$\\
    $M_3 = P(A) \leftarrow T_1(A),T_2(A),T_3(A)$\\
    $\dots$\\
    $M_{n} = P(A) \leftarrow T_1(A),T_2(A),\dots,T_{n}(A)$\\
    $\dots$\\
  \end{tabular}
\end{center}

\noindent
Although this set is infinite it can be entailment reduced to the single metarule $M_1$ because it implies the rest of the theory.

However, in this paper, we claim that entailment reduction is not always the most appropriate form of reduction.
For instance, suppose you want to learn the \emph{father/2} relation given the background relations \emph{parent/2}, \emph{male/1}, and \emph{female/1}. Then a suitable hypothesis is:

\begin{center}
  \begin{tabular}{l}
    $\mathit{father}(A,B) \leftarrow \mathit{parent}(A,B), \mathit{male}(A)$\\
  \end{tabular}
\end{center}

\noindent
To learn such a hypothesis one would need a metarule of the form $P(A,B) \leftarrow Q(A,B),R(A)$.
Now suppose you have the metarules:

\begin{center}
  \begin{tabular}{l}
    $M_1 = P(A,B) \leftarrow Q(A,B)$\\
    $M_2 = P(A,B) \leftarrow Q(A,B),R(A)$
  \end{tabular}
\end{center}

\noindent
Running entailment reduction on these metarules would remove $M_2$ because it is a logical consequence of $M_1$.
However, it is impossible to learn the intended \emph{father/2} relation given only $M_1$.
As this example shows, entailment reduction can be too strong because it can remove metarules necessary to specialise a clause, where $M_2$ can be seen as a specialisation of $M_1$.

To address this issue, we introduce \emph{derivation reduction}, a new form of reduction based on derivations, which we claim is a more suitable form of reduction for reducing sets of metarules.
Let $\vdash$ represent derivability in SLD-resolution\footnote{We use $\vdash$ to represent derivability of both first-order and second-order clauses. In practice we reason about second-order clauses using first-order resolution via encapsulation \cite{crop:minmeta}, which we describe in Section \ref{sec:encapsulation}.} \cite{sld-resolution}, then a Horn clause $C$ is derivationally redundant in a Horn theory $T \cup \{C\}$ when $T \vdash C$.
A Horn theory is derivationally irreducible if it contains no derivationally redundant clauses.
To illustrate the difference between entailment and derivation reduction, consider the metarules:

\begin{center}
  \begin{tabular}{l}
    $M_1 = P(A,B) \leftarrow Q(A,B)$\\
    $M_2 = P(A,B) \leftarrow Q(A,B),R(A)$\\
    $M_3 = P(A,B) \leftarrow Q(A,B),R(A,B)$\\
    $M_4 = P(A,B) \leftarrow Q(A,B),R(A,B),S(A,B)$
  \end{tabular}
\end{center}

\noindent
Running entailment reduction on these metarules would result in the reduction $\{M_1\}$ because $M_1$ entails the rest of the theory.
Likewise, running \emph{subsumption reduction} \cite{plotkin:thesis} (described in detail in Section \ref{sec:subsumption-reduction}) would also result in the reduction $\{M_1\}$.
By contrast, running derivation reduction would only remove $M_4$ because it can be derived by self-resolving $M_3$.
The remaining metarules $M_2$ and $M_3$ are not derivationally redundant because there is no way to derive them from the other metarules.

\subsection{Contributions}
In the rest of this paper, we study whether fragments of metarules relevant to ILP can be logically reduced to minimal finite subsets. We study three forms of reduction: subsumption \cite{robinson:resolution}, entailment \cite{mugg:progol}, and derivation. We also study how learning with reduced sets of metarules affects learning performance. To do so, we supply Metagol \cite{metagol}, a meta-interpretive learning (MIL) \cite{mugg:metalearn,mugg:metagold,crop:metafunc} implementation, with different reduced sets of metarules and measure the resulting learning performance on three domains: Michalski trains \cite{michalski:trains}, string transformations, and game rules \cite{iggp}. In general, using derivation reduced sets of metarules outperforms using subsumption and entailment reduced sets, both in terms of predictive accuracies and learning times. Overall, our specific contributions are:

\begin{itemize}
\item We describe the logical reduction problem (Section 3).
\item We describe subsumption and entailment reduction, and introduce derivation reduction, the problem of removing derivationally redundant clauses from a clausal theory (Section \ref{sec:framework}).
\item We study the decidability of the three reduction problems and show, for instance, that the derivation reduction problem is undecidable for arbitrary Horn theories (Section \ref{sec:framework}).
\item We introduce two general reduction algorithms that take a reduction relation as a parameter.
We also study their complexity (Section \ref{sec:algorithms}).
\item We run the reduction algorithms on finite sets of metarules to identify minimal sets (Section \ref{sec:results}).
\item We theoretically show whether infinite fragments of metarules can be logically reduced to finite sets (Section \ref{sec:results}).
\item We experimentally compare the learning performance of Metagol when supplied with reduced sets of metarules on three domains: Michalski trains, string transformations, and game rules (Section \ref{sec:experiments}).
\end{itemize}
\section{Related work}
\label{sec:related}

This section describes work related to this paper, mostly work on logical reduction techniques. We first, however, describe work related to MIL and metarules.



\subsection{Meta-interpretive learning}

Although the study of metarules has implications for many ILP approaches \cite{mugg:metagold,crop:metaopt,emde:metarules,raedt:clint,dialogs,mobal,hexmil,andreas,albarghouthi2017constraint,wang2014structure,evans:dilp,ALPS}, we focus on meta-interpretive learning (MIL), a form of ILP based on a Prolog meta-interpreter\footnote{Although the MIL problem has also been encoded as an ASP problem \cite{hexmil}.}.
The key difference between a MIL learner and a standard Prolog meta-interpreter is that whereas a standard Prolog meta-interpreter attempts to prove a goal by repeatedly fetching first-order clauses whose heads unify with a given goal, a MIL learner additionally attempts to prove a goal by fetching second-order metarules, supplied as background knowledge (BK), whose heads unify with the goal.
The resulting meta-substitutions are saved and can be reused in later proofs.
Following the proof of a set of goals, a logic program is formed by projecting the meta-substitutions onto their corresponding metarules, allowing for a form of ILP which supports predicate invention and learning recursive theories.

Most existing work on MIL has assumed suitable metarules as input to the problem, or has used metarules without any theoretical justification.
In this paper, we try to address this issue by identifying minimal sets of metarules for interesting fragments of logic, such as Datalog, from which a MIL system can theoretically learn any logic program.

\subsection{Metarules}

McCarthy \cite{DBLP:conf/mi/McCarthy95} and Lloyd \cite{lloyd:logiclearning} advocated using second-order logic to represent knowledge. Similarly, Muggleton et al. \cite{mlj:ilp20} argued that using second-order representations in ILP provides more flexible ways of representing BK compared to existing methods. Metarules are second-order Horn clauses and are used as a form of declarative bias \cite{nedellec1996declarative,raedt:decbias} to determine the structure of learnable programs which in turn defines the hypothesis space. In contrast to other forms of declarative bias, such as modes \cite{mugg:progol} or grammars \cite{cohen:grammarbias}, metarules are logical statements that can be reasoned about, such as to reason about the redundancy of sets of metarules, which we explore in this paper.

Metarules were introduced in the Blip system \cite{emde:metarules}.
Kietz and Wrobel \cite{mobal} studied generality measures for metarules in the RDT system.
A generality order is necessary because the RDT system searches the hypothesis space (which is defined by the metarules) in a top-down general-to-specific order.
A key difference between RDT and MIL is that whereas RDT requires metarules of increasing complexity (e.g. rules with an increasing number of literals in the body), MIL derives more complex metarules through SLD-resolution. This point is important because this ability allows MIL to start from smaller sets of primitive metarules.
In this paper we try to identify such primitive sets.

Using metarules to build a logic program is similar to the use of refinement operators in ILP \cite{shapiro:thesis,ilp:book} to build a definite clause literal-by-literal\footnote{MIL uses example driven test-incorporation for finding consistent programs as opposed to the generate-and-test approach of clause refinement.}. As with refinement operators, it seems reasonable to ask about completeness and irredundancy of a set of metarules, which we explore in this paper.


\subsection{Logical redundancy}
Detecting and eliminating redundancy in a clausal theory is useful in many areas of computer science. In ILP logically reducing a theory is useful to remove redundancy from a hypothesis space to improve learning performance \cite{Fonseca:ilp04,crop:minmeta}. In general, simplifying or reducing a theory often makes a theory easier to understand and use, and may also have computational efficiency advantages.

\subsubsection{Literal redundancy}
Plotkin \cite{plotkin:thesis} used subsumption to decide whether a literal is redundant in a first-order clause. Joyner \cite{DBLP:journals/jacm/Joyner76} independently investigated the same problem, which he called \emph{clause condensation}, where a condensation of a clause $C$ is a minimum cardinality subset $C'$ of $C$ such that $C' \models C$. Gottlob and Ferm\"uller \cite{gottlob1993removing} improved Joyner's algorithm and also showed that determining whether a clause is condensed is coNP-complete. In contrast to removing redundant literals, we focus on removing redundant clauses.

\subsubsection{Clause redundancy}
Plotkin \cite{plotkin:thesis} introduced methods to decide whether a clause is subsumption redundant in a first-order clausal theory.
This problem has also been extensively studied in the context of first-order logic with equality due to its application in superposition-based theorem proving \cite{DBLP:conf/birthday/HillenbrandPWW13,DBLP:journals/aicom/WeidenbachW10}.
The same problem, and slight variants, has been extensively studied in the propositional case \cite{liberatore1,liberatore2}. Removing redundant clauses has numerous applications, such as to improve the efficiency of SAT \cite{heule2015clause}. In contrast to these works, we focus on reducing theories formed of second-order Horn clauses (without equality), which to our knowledge has not yet been extensively explored. Another difference is that we additionally study redundancy based on SLD-derivations.


Cropper and Muggleton \cite{crop:minmeta} used Progol's entailment-reduction algorithm \cite{mugg:progol} to identify irreducible, or minimal, sets of metarules.
Their approach removed entailment redundant clauses from sets of metarules.
They identified theories that are (1) entailment complete for certain fragments of second-order Horn logic, and (2) minimal or irreducible in that no further reductions are possible.
They demonstrated that in some cases as few as two clauses are sufficient to entail an infinite theory.
However, they only considered small and highly constrained fragments of metarules.
In particular, they focused on an \emph{exactly-two-connected} fragment of metarules where each literal is dyadic and each first-order variable appears exactly twice in distinct literals.
However, as discussed in the introduction, entailment reduction is not always the most appropriate form of reduction because it can remove metarules necessary to specialise a clause.
Therefore, in this paper, we go beyond entailment reduction and introduce derivation reduction.
We also consider more general fragments of metarules, such as a fragment of metarules sufficient to learn Datalog programs.

Cropper and Tourret \cite{crop:dreduce} introduced the derivation reduction problem and studied whether sets of metarules could be derivationally reduced.
They considered the \emph{exactly-two-connected} fragment previously considered by Cropper and Muggleton and a \emph{two-connected} fragment in which every variable appears at least twice, which is analogous to our singleton-free fragment (Section \ref{sec:singletonfree}).
They used graph theoretic methods to show that certain fragments could not be completely derivationally reduced.
They demonstrated on the Michalski trains dataset that the partially derivationally reduced set of metarules outperforms the entailment reduced set.
In similar work Cropper and Tourret elaborated on their graph theoretic techniques and expanded the results to unconstrained resolution \cite{crop:sldres}.

In this paper, we go beyond the work of Cropper and Tourret in several ways.
First, we consider more general fragments of metarules, including \emph{connected} and \emph{Datalog} fragments.
We additionally consider fragments with zero arity literals.
In all cases we provide additional theoretical results showing whether certain fragments can be reduced, and, where possible, show the actual reductions.
Second, Cropper and Tourret \cite{crop:sldres} focused on derivation reduction modulo first-order variable unification, i.e. they considered the case where factorisation \cite{ilp:book} was allowed when resolving two clauses, which is not implemented in practice in current MIL systems.
For this reason, although Section 5 in \cite{crop:sldres} and Section \ref{sec:connected} in the present paper seemingly consider the same problem, the results are opposite to one another.
Third, in addition to entailment and derivation reduction, we also consider subsumption reduction.
We provide more theoretical results on the decidability of the reduction problems, such as showing a decidable case for derivation reduction (Theorem \ref{thm:decidable}).
Fourth, we describe the reduction algorithms and discuss their computational complexity.
Finally, we corroborate the experimental results of Cropper and Tourret on Michalski's train problem \cite{crop:dreduce} and provide additional experimental results on two more domains: real-world string transformations and inducing Datalog game rules from observations.

\subsubsection{Theory minimisation}
We focus on removing clauses from a clausal theory. A related yet distinct topic is theory minimisation where the goal is to find a minimum equivalent formula to a given input formula. This topic is often studied in propositional logic \cite{DBLP:conf/ijcai/HemaspaandraS11}. The minimisation problem allows for the introduction of new clauses. By contrast, the reduction problem studied in this paper does not allow for the introduction of new clauses and instead only allows for the removal of redundant clauses.

\subsubsection{Prime implicates}
Implicates of a theory $T$ are the clauses that are entailed by $T$ and are called prime when they do not themselves entail other implicates of $T$.
This notion differs from the subsumption and derivation reduction because it focuses on entailment, and it differs from entailment reduction 
because (1) the notion of a prime implicate has been studied only in propositional, first-order, and some modal logics \cite{marquis2000consequence,echenim2015quantifierfree,bienvenu2007prime}; (2) the generation of prime implicates allows for the introduction of new clauses in the formula.

%
\section{Logical reduction}
\label{sec:framework}

We now introduce the reduction problem: the problem of finding redundant clauses in a theory. We first describe the reduction problem starting with preliminaries, and then describe three instances of the problem. The first two instances are based on existing logical reduction methods: subsumption and entailment. The third instance is a new form of reduction introduced in \cite{crop:dreduce} based on SLD-derivations.

\subsection{Preliminaries}

We assume familiarity with logic programming notation \cite{lloyd:book} but we restate some key terminology.
A {\em clause} is a disjunction of literals.
A {\em clausal theory} is a set of clauses.
A {\em Horn} clause is a clause with at most one positive literal.
A Horn theory is a set of Horn clauses.
A {\em definite} clause is a Horn clause with exactly one positive literal.
A Horn clause is a \emph{Datalog} clause if (1) it contains no function symbols, and (2) every variable that appears in the head of the clause also appears in a positive (i.e. not negated) literal in the body of the clause\footnote{Datalog also imposes additional constraints on negation in the body of a clause, but because we disallow negation in the body we omit these constraints for simplicity.}.
We denote the powerset of the set $S$ as $2^S$.

\subsubsection{Metarules}
\label{sec:metarules}

Although the reduction problem applies to any clausal theory, we focus on theories formed of metarules:

\begin{definition}[Metarule]
\label{def:metarule}
A metarule is a second-order Horn clause of the form:
$$A_0 \leftarrow A_1, \; \dots \;, \; \; A_m$$
where each $A_i$ is a literal of the form $P(T_1,\dots,T_n )$ where $P$ is either a predicate symbol or a second-order variable that can be substituted by a predicate symbol, and each $T_i$ is either a constant symbol or a first-order variable that can be substituted by a constant symbol.
\end{definition}

\begin{table}[ht]
\centering
\normalsize
\begin{tabular}{|l|l|}
\hline
\textbf{Name} & \textbf{Metarule}\\ \hline
Indent$_1$ & $P(A) \leftarrow Q(A)$\\
DIndent$_1$ & $P(A) \leftarrow Q(A),R(A)$\\
Indent$_2$ & $P(A,B) \leftarrow Q(A,B)$\\
DIndent$_2$ & $P(A,B) \leftarrow Q(A,B),R(A,B)$\\
Precon & $P(A,B) \leftarrow Q(A),R(A,B)$\\
Postcon & $P(A,B) \leftarrow Q(A,B),R(B)$\\
Curry & $P(A,B) \leftarrow Q(A,B,R)$\\
Chain & $P(A,B) \leftarrow Q(A,C), R(C,B)$\\
\hline
\end{tabular}
\caption{Example metarules. The letters $P$, $Q$, and $R$ denote existentially quantified second-order variables. The letters $A$, $B$, and $C$ denote universally quantified first-order variables.}
\label{tab:metarules}
\end{table}

\noindent
Table \ref{tab:metarules} shows a selection of metarules commonly used in the MIL literature \cite{crop:metafunc,crop:metaopt,crop:typed,crop:metagolo,crop:datacurate}.
As Definition \ref{def:metarule} states, metarules may include predicate and constant symbols.
However, we focus on the more general case where metarules only contain variables\footnote{By more general we mean we focus on metarules that are independent of any particular ILP problem with particular predicate and constant symbols.}.
In addition, although metarules can be any Horn clauses, we focus on definite clauses with at least one body literal, i.e. we disallow facts, because their inclusion leads to uninteresting reductions, where in almost all such cases the theories can be reduced to a single fact\footnote{
For instance, the metarule $P(A) \leftarrow$ entails and subsumes every metarule with a monadic head.
}
We denote the infinite set of all such metarules as \M{}{}.
We focus on {\em fragments} of \M{}{}, where a fragment is a syntactically restricted subset of a theory \cite{bradley2007calculus}:

\begin{definition}[The fragment \M{a}{m}]
\label{def:ham}
We denote as \M{a}{m} the fragment of \M{}{} where each literal has arity at most $a$ and each clause has at most $m$ literals in the body.
We replace $a$ by the explicit set of arities when we restrict the allowed arities further.
\end{definition}


\begin{example}
\M{\{2\}}{2} is a subset of \M{}{} where each predicate has arity 2 and each clause has at most 2 body literals.
\end{example}

\begin{example}
\M{\{2\}}{m} is a subset of \M{}{} where each predicate has arity 2 and each clause has at most $m$ body literals.
\end{example}

\begin{example}
\M{\{0,2\}}{m} is a subset of \M{}{} where each predicate has arity 0 or 2 and each clause has at most $m$ body literals.
\end{example}

\begin{example}
\M{a}{\{1,2\}} is a subset of \M{}{} where each predicate has arity at most $a$ and each clause has either 1 or 2 body literals.
\end{example}

\noindent
Let $T$ be a clausal theory. Then we say that $T$ is in the fragment \M{a}{m} if and only if each clause in $T$ is in \M{a}{m}.
\subsection{Meta-interpretive learning}
\label{sec:mil}

In Section \ref{sec:experiments} we conduct experiments to see whether using reduced sets of metarules can improve learning performance.
The primary purpose of the experiments is to test our claim that entailment reduction is not always the most appropriate form of reduction.
Our experiments focus on MIL.
For self-containment, we briefly describe MIL.

\begin{definition}[\textbf{MIL input}]
\label{def:milinput}
An MIL input is a tuple $(B,E^+,E^-,M)$ where:
\begin{itemize}
    \item $B$ is a set of Horn clauses denoting background knowledge
    \item $E^+$ and $E^-$ are disjoint sets of ground atoms representing positive and negative examples respectively
    \item $M$ is a set of metarules
\end{itemize}
\end{definition}

\noindent
The MIL problem is defined from a MIL input:

\begin{definition}[\textbf{MIL problem}]
\label{def:milproblem}
Given a MIL input $(B,E^+,E^-,M)$, the MIL problem is to return a logic program hypothesis $H$ such that:
\begin{itemize}
  \item $\forall c \in H, \exists m \in M$ such that $c=m\theta$, where $\theta$ is a substitution that grounds all the
  existentially quantified variables in $m$
  \item $H \cup B \models E^{+}$
  \item $H \cup B \not\models E^{-}$
\end{itemize}
We call $H$ a solution to the MIL problem.
\end{definition}







\noindent
The metarules and background define the hypothesis space.
To explain our experimental results in Section \ref{sec:experiments}, it is important to understand the effect that metarules have on the size of the MIL hypothesis space, and thus on learning performance.
The following result generalises previous results \cite{mugg:metabias,crop:metafunc}:

\begin{theorem}[MIL hypothesis space]
\label{thm:hypspace}
Given $p$ predicate symbols and $k$ metarules in \M{a}{m}, the number of programs expressible with $n$ clauses is at most $(p^{m+1}k)^n$.
\end{theorem}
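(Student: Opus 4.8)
The plan is to bound the size of the hypothesis space by a three-stage count: first bound the number of distinct clauses a single metarule can produce, then multiply by the number of metarules to bound the total pool of available clauses, and finally bound the number of $n$-clause programs by viewing a program as $n$ independent choices from that pool.

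First I would observe that, by Definitions \ref{def:metarule} and \ref{def:ham}, any metarule in \M{a}{m} has the form $A_0 \leftarrow A_1, \dots, A_{m'}$ with $m' \leq m$, so it has at most $m+1$ literals, and in each literal the only second-order (existentially quantified) variable is the predicate symbol of that literal. Hence a metarule in \M{a}{m} contains at most $m+1$ distinct predicate variables. By the MIL problem (Definition \ref{def:milproblem}), a clause $c$ in a hypothesis is obtained as $c = m\theta$, where $\theta$ grounds exactly these existentially quantified variables with predicate symbols and leaves the universally quantified first-order variables in place. Since there are $p$ predicate symbols, each of the at most $m+1$ predicate variables can be instantiated in at most $p$ ways, so a single metarule gives rise to at most $p^{m+1}$ distinct clauses. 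If a predicate variable recurs across literals the count only drops, so $p^{m+1}$ remains a valid over-count.

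Next I would take the union over all $k$ metarules: the set of all clauses expressible from the given metarules and predicate symbols has cardinality at most $k \cdot p^{m+1} = p^{m+1}k$. A hypothesis with $n$ clauses is then a choice of $n$ clauses from this pool; even treating the choices as ordered and allowing repetition, the number of such selections is at most $(p^{m+1}k)^n$, which dominates the count of genuine (unordered, repetition-free) programs $\binom{p^{m+1}k}{n}$ and so yields the stated bound.

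This is a routine counting argument rather than one with a deep obstacle; the care lies entirely in the bookkeeping. The point most worth checking is the per-metarule count of $m+1$ predicate variables: one must confirm that within \M{a}{m} as formally defined the arguments $T_i$ are constants or first-order variables, so that the only substitutable second-order symbols sit in predicate position, one per literal (this is what rules out extra factors from second-order variables appearing as arguments). A secondary point is to make the phrase \emph{expressible with $n$ clauses} precise; since $(p^{m+1}k)^n$ already counts ordered sequences with repetition, any reasonable reading of ``program'' — as a set or a multiset of at most $n$ clauses — is covered, so the inequality holds uniformly.
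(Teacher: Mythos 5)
Your proposal is correct and follows essentially the same argument as the paper's own proof: bound the clauses per metarule by $p^{m+1}$ via the at most $m+1$ predicate variables, multiply by $k$ to bound the pool of available clauses, and raise to the $n$-th power for $n$-clause programs. The extra care you take (noting that repeated predicate variables only lower the count, and that the ordered-with-repetition bound dominates any reading of ``program'') is sound but does not change the substance of the argument.
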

\begin{proof}
The number of first-order clauses which can be constructed from a \M{a}{m} metarule given $p$ predicate symbols is at most $p^{m+1}$ because for a given metarule there are at most $m+1$ predicate variables with at most $p^{m+1}$ possible substitutions.
Therefore the set of such clauses $S$ which can be formed from $k$ distinct metarules in \M{a}{m} using $p$ predicate symbols has cardinality at most $p^{m+1}k$.
It follows that the number of programs which can be formed from a selection of $n$ clauses chosen from $S$ is at most $(p^{m+1}k)^n$.
\end{proof}


\noindent
Theorem \ref{thm:hypspace} shows that the MIL hypothesis space increases given more metarules.
The Blumer bound \cite{blumer:bound}\footnote{The Blumer bound is a reformulation of Lemma 2.1 in \cite{blumer:bound}.}, says that given two hypothesis spaces, searching the smaller space will result in fewer errors compared to the larger space, assuming that the target hypothesis is in both spaces. This result suggests that we should consider removing redundant metarules to improve the learning performance. We explore this idea in the rest of the paper.
\subsection{Encapsulation}
\label{sec:encapsulation}

To reason about metarules (especially when running the Prolog implementations of the reduction algorithms), we use a method called encapsulation \cite{crop:minmeta} to transform a second-order logic program to a first-order logic program.
We first define encapsulation for atoms:

\begin{definition}[Atomic encapsulation]
Let $A$ be a second-order or first-order atom of the form $P(T_{1},..,T_{n})$. Then $enc(A) = enc(P,T_{1},..,T_{n})$ is the encapsulation of $A$.
\end{definition}

\noindent
For instance, the encapsulation of the atom \emph{parent(ann,andy)} is \emph{enc(parent,ann,andy)}.
Note that encapsulation essentially ignores the quantification of variables in metarules by treating all variables, including predicate variables, as first-order universally quantified variables of the first-order $enc$ predicate.
In particular, replacing existential quantifiers with universal quantifiers on predicate variables is fine for our work because we only reason about the form of metarules, not their semantics, i.e. we treat metarules as templates for first-order clauses.
We extend atomic encapsulation to logic programs:

\begin{definition}[Program encapsulation]
The logic program $enc(P)$ is the encapsulation of the logic program $P$ in the case $enc(P)$ is formed by replacing all atoms $A$ in $P$ by $enc(A)$.
\end{definition}

\noindent
For example, the encapsulation of the metarule $P(A,B) \leftarrow Q(A,C), R(C,B)$ is $enc(P,A,B) \leftarrow enc(Q,A,C), enc(R,C,B)$.
We extend encapsulation to interpretations \cite{ilp:book} of logic programs:





\begin{definition}[Interpretation encapsulation]
Let $I$ be an interpretation over the predicate and constant symbols in a logic program.
Then the encapsulated interpretation $enc(I)$ is formed by replacing each atom $A$ in $I$ by $enc(A)$.
\end{definition}

\noindent
We now have the proposition:

\begin{proposition}[Encapsulation models \cite{crop:minmeta}]
\label{minmeta:prop1}
The second-order logic program $P$ has a model $M$ if and only if $enc(P)$ has the model $enc(M)$.
\end{proposition}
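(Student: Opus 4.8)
The plan is to prove both directions at once by exhibiting a truth-preserving bijection between Herbrand interpretations and then reducing program satisfaction to satisfaction of ground instances, literal by literal. Since the claim compares a model $M$ of $P$ against the one specific interpretation $enc(M)$, I do not need to classify all models of $enc(P)$; I only need the biconditional $M \models P \iff enc(M) \models enc(P)$.

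First I would fix the semantic setting as Herbrand semantics, so that $M$ is a set of ground atoms $p(t_1,\dots,t_n)$ and $enc(M)$ the corresponding set of ground atoms $enc(p,t_1,\dots,t_n)$. The map $A \mapsto enc(A)$ on ground atoms is injective, because $enc(A)$ faithfully records the predicate symbol together with all its arguments, and it is surjective onto the $enc$-atoms whose first argument is a predicate symbol; hence it is a bijection between the Herbrand base of the original language and that of $enc(P)$, and it lifts to the bijection $I \mapsto enc(I)$ on interpretations. By construction this yields the atom-level equivalence $A \in M \iff enc(A) \in enc(M)$.

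Next I would prove the local statement: for every clause $C \in P$ and every grounding substitution $\theta$ sending predicate variables to predicate symbols and first-order variables to constants, $M \models C\theta$ iff $enc(M) \models enc(C)\theta'$, where $\theta'$ is the first-order substitution induced by $\theta$. The point is that encapsulation commutes with substitution, $enc(C\theta)=enc(C)\theta'$ literal by literal, and a ground definite clause is true exactly when its head is true or some body literal is false; so the equivalence follows from applying the atom-level equivalence to the head and each body literal of $C\theta$. Since $M$ models $P$ precisely when every ground instance $C\theta$ is satisfied, and the instances $C\theta$ are in bijection (via $\theta \mapsto \theta'$) with the admissible ground instances of $enc(C)$, the two program-level statements coincide, giving the biconditional.

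The main obstacle, and the step needing the most care, is controlling the quantifier relaxation that encapsulation performs. In $enc(P)$ the former predicate variables are ordinary first-order variables, so an unrestricted Herbrand grounding of $enc(C)$ could bind such a variable to a symbol that was a constant rather than a predicate symbol, producing a mixed ground atom such as $enc(ann,\dots)$ that corresponds to no legal second-order instance and might falsify the clause in $enc(M)$. I would handle this by keeping predicate and constant symbols as disjoint sorts and reading $enc$ as a sorted predicate whose first argument ranges only over the names of predicate symbols, so that the admissible groundings of $enc(C)$ are exactly the $\theta'$ arising from genuine second-order substitutions $\theta$; then $\theta \mapsto \theta'$ is an exact bijection and no spurious instances arise. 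Justifying that this sorted reading is the intended one (and is harmless for the later reasoning, where only the syntactic form of the clauses matters) is the delicate part, while the remaining steps are routine.
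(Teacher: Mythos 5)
Your proof is correct, and in spirit it is the same argument the paper intends --- but the paper gives essentially no proof at all: its entire justification is ``Follows trivially from the definitions of encapsulated programs and interpretations.'' Your elaboration is therefore strictly more rigorous than the published one, and the wrinkle you isolate is genuine rather than pedantic. After encapsulation the former predicate variables are ordinary universally quantified first-order variables, so an unsorted Herbrand grounding of $enc(C)$ may bind them to constants; e.g.\ for the chain metarule, the instance $enc(ann,a,b) \leftarrow enc(q,a,c), enc(r,c,b)$ has a true body but a false head in $enc(M)$ whenever $q(a,c),r(c,b) \in M$, since $M$ contains no atom with the constant $ann$ in predicate position --- so without your two-sorted reading (or an equivalent restriction of the first argument of $enc$ to predicate names) the stated biconditional actually fails in the left-to-right direction. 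The paper sidesteps this not with a proof but by fiat: the text around the proposition concedes that encapsulation ``essentially ignores the quantification of variables'' and that metarules are treated only as syntactic templates, never semantically. So the difference between your account and the paper's is one of standards rather than of route: your atom-level bijection, the commutation of $enc$ with substitution, and the clause-by-clause satisfaction check are exactly the definition-unfolding the paper waves at, while your sorted signature supplies the one missing ingredient that makes the ``trivial'' claim true.
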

\begin{proof}
Follows trivially from the definitions of encapsulated programs and interpretations.
\end{proof}

\noindent
We can extend the definition of entailment to logic programs:

\begin{proposition}[Entailment \cite{crop:minmeta}]
Let $P$ and $Q$ be second-order logic programs. Then $P\models Q$ if and only if every model $enc(M)$ of $enc(P)$ is also a model of $enc(Q)$.
\end{proposition}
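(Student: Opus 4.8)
The plan is to derive the proposition directly from Proposition \ref{minmeta:prop1}, which already establishes a model-preserving correspondence $M \mapsto enc(M)$ between interpretations $M$ over the predicate and constant symbols of a program and their encapsulations. The only real work is to chain this correspondence through the definition of entailment, applying it once to $P$ and once to $Q$, while being careful that both applications use the \emph{same} interpretation $M$ over a common signature containing all symbols occurring in $P$ and $Q$.

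First I would unfold the left-hand side: by definition $P \models Q$ means that every interpretation $M$ that is a model of $P$ is also a model of $Q$. I would then observe, using the definition of interpretation encapsulation, that the objects quantified over on the right-hand side are exactly the encapsulations $enc(M)$ of such interpretations $M$, so that both sides range over the same indexing set of interpretations $M$; the two statements differ only in whether modelhood of $P$ (resp.\ $Q$) is tested directly or after encapsulation.

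For the forward direction, assume $P \models Q$ and let $enc(M)$ be any model of $enc(P)$. By Proposition \ref{minmeta:prop1} applied to $P$, the interpretation $M$ is a model of $P$; by $P \models Q$, it is then a model of $Q$; and by Proposition \ref{minmeta:prop1} applied to $Q$, $enc(M)$ is a model of $enc(Q)$, as required. The converse is symmetric: starting from a model $M$ of $P$, Proposition \ref{minmeta:prop1} gives that $enc(M)$ models $enc(P)$, the hypothesis gives that $enc(M)$ models $enc(Q)$, and a final appeal to Proposition \ref{minmeta:prop1} returns a model $M$ of $Q$.

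I expect the only genuine subtlety, rather than a hard obstacle, to be the quantifier on the right-hand side. It is tempting to read it as ``every first-order model of $enc(P)$'', but $enc(P)$ is an ordinary first-order theory over the single predicate $enc$ and may well admit models that are not of the form $enc(M)$ for any interpretation $M$ of the original signature. The proposition sidesteps this by quantifying only over models of the shape $enc(M)$, i.e.\ over encapsulated interpretations, and this restriction is precisely what makes Proposition \ref{minmeta:prop1} applicable in both directions. The remaining care is bookkeeping: ensuring that the interpretation $M$ witnessing a model of $P$ is defined on enough symbols to also be tested against $Q$, which is handled by fixing a common signature at the outset.
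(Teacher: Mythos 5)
Your proposal is correct and follows exactly the paper's route: the paper's proof is the one-line statement that the result ``follows immediately from Proposition~\ref{minmeta:prop1}'', and your argument is precisely the careful unfolding of that claim (applying the encapsulation-model correspondence to $P$ and $Q$ in both directions). Your observation about the quantifier ranging only over models of the form $enc(M)$, rather than all first-order models of $enc(P)$, is a worthwhile clarification that the paper leaves implicit.
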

\begin{proof}
Follows immediately from Proposition \ref{minmeta:prop1}.
\end{proof}

\noindent
These results allow us to reason about metarules using standard first-order logic.
In the rest of the paper all the reasoning about second-order theories is performed at the first-order level.
However, to aid the readability we continue to write non-encapsulated metarules in the rest of the paper, i.e. we will continue to refer to sets of metarules as second-order theories.
\subsection{Logical reduction problem}

We now describe the logical reduction problem.
For the clarity of the paper, and to avoid repeating definitions for each form of reduction that we consider (entailment, subsumption, and derivability), we describe a general reduction problem which is parametrised by a binary relation \sym{} defined over any clausal theory, although in the case of derivability, \sym{} is in fact only defined over Horn clauses.
Our only constraint on the relation \sym{} is that if $A\sym{}B$, $A\subseteq A'$ and $B'\subseteq B$ then $A'\sym{}B'$.
We first define a redundant clause:

\begin{definition}[\sym{}-redundant clause]
\label{def:rclause}
The clause $C$ is \sym{}-redundant in the clausal theory $T \cup \{C\}$ whenever $T$ \sym{} $\{C\}$.
\end{definition}

\noindent
In a slight abuse of notation, we allow Definition \ref{def:rclause} to also refer to a single clause, i.e.\ in our notation $T$ \sym{} $C$ is the same as $T$ \sym{} $\{C\}$. We define a reduced theory:
\begin{definition}[\sym{}-reduced theory]
A clausal theory is \sym{}-reduced if and only if it is finite and it does not contain any \sym{}-redundant clauses.
\end{definition}

\noindent
We define the input to the reduction problem:
\begin{definition}[\sym{}-reduction input]
A reduction input is a pair $(T,$\sym{}$)$ where $T$ is a clausal theory and \sym{} is a binary relation over a clausal theory.
\end{definition}

\noindent
Note that a reduction input may (and often will) be an infinite clausal theory. We define the reduction problem:

\begin{definition}[\sym{}-reduction problem]
\label{def:sym-prob}
Let $(T,$\sym{}$)$ be a reduction input. Then the \sym{}-reduction problem is to find a finite theory $T' \subseteq  T$ such that (1) $T'$ \sym{} $T$ (i.e. $T'$ \sym{} $C$ for every clause $C$ in $T$), and (2) $T'$ is \sym{}-reduced. We call $T'$ a \emph{\sym{}-reduction}.
\end{definition}

\noindent
Although the input to a \sym{}-reduction problem may contain an infinite theory, the output (a \emph{\sym{}-reduction}) must be a finite theory. We also introduce a variant of the \sym{}-reduction problem where the reduction must obey certain syntactic restrictions:

\begin{definition}[\M{a}{m}-\sym{}-reduction problem]
\label{def:hamprob}
Let ($T$,\sym{},\M{a}{m}) be a triple, where the first two elements are as in a standard reduction input and \M{a}{m} is a target reduction theory. Then the \M{a}{m}-\sym{}-reduction problem is to find a finite theory $T' \subseteq  T$ such that (1) $T'$ is a \sym-reduction of $T$, and (2) $T'$ is in \M{a}{m}.
\end{definition}
\subsection{Subsumption reduction}
\label{sec:subsumption-reduction}

The first form of reduction we consider is based on subsumption, which, as discussed in Section \ref{sec:related}, is often used to eliminate redundancy in a clausal theory:

\begin{definition}[Subsumption]
A clause $C$ subsumes a clause $D$, denoted as $C \preceq D$, if there exists a substitution $\theta$ such that $C\theta \subseteq D$.
\end{definition}

\noindent
Note that if a clause $C$ subsumes a clause $D$ then $C \models D$ \cite{robinson:resolution}.
However, if $C \models D$ then it does not necessarily follow that $C \preceq D$.
Subsumption can therefore be seen as being weaker than entailment.
Whereas checking entailment between clauses is undecidable \cite{church:problem}, Robinson \cite{robinson:resolution} showed that checking subsumption between clauses is decidable (although in general deciding subsumption is a NP-complete problem \cite{ilp:book}).

If $T$ is a clausal theory then the pair $(T,\preceq)$ is an input to the \sym{}-reduction problem, which leads to the \emph{subsumption reduction} problem (S-reduction problem). We show that the S-reduction problem is decidable for finite theories:


\begin{proposition}[Finite S-reduction problem decidability]
\label{prop:sdecidable}
Let $T$ be a finite theory. Then the corresponding S-reduction problem is decidable.
\end{proposition}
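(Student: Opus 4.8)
The plan is to exploit two facts: that $T$ is finite, and that subsumption between two clauses is decidable (as noted after the definition of subsumption, following Robinson \cite{robinson:resolution}). First I would unfold what $T' \preceq T$ and $\preceq$-reducedness mean at the level of individual clauses. Under the lifting of $\preceq$ to theories (which, one checks, satisfies the monotonicity constraint imposed on \sym{}), $T' \preceq T$ holds exactly when every clause $C \in T$ is subsumed by some clause $D \in T'$, and a subset $T'$ is $\preceq$-reduced exactly when no clause $C \in T'$ is subsumed by some other clause in $T' \setminus \{C\}$. Both conditions are finite conjunctions of statements of the form ``$D \preceq C$ for some $D$'', each of which is a decidable clause-subsumption test.

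Given this, I would establish decidability by exhibiting an effective algorithm that computes an S-reduction. Since $T$ is finite, I would use a greedy elimination procedure: initialise $T' := T$, and while there exists a clause $C \in T'$ that is $\preceq$-redundant in $T'$ (i.e.\ $T' \setminus \{C\} \preceq C$), remove $C$ from $T'$. Each iteration requires only finitely many clause-subsumption checks to locate a redundant clause, and each such check terminates by decidability of subsumption; the loop itself terminates because $|T'|$ strictly decreases at every step and is bounded below.

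It then remains to verify that the output satisfies both requirements of an S-reduction. Condition (2), $\preceq$-reducedness, holds by the loop's exit condition. For condition (1), $T' \preceq T$, I would maintain it as a loop invariant: the crux is to show that removing a $\preceq$-redundant clause $C$ preserves the property that every original clause of $T$ is still subsumed by some surviving clause. This is where transitivity of subsumption is needed---if some original $E$ was subsumed only by the removed clause $C$, then because $C$ is itself subsumed by a clause $D' \in T' \setminus \{C\}$, transitivity gives $D' \preceq E$, so $E$ remains covered.

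The main obstacle---though a mild one given the finiteness assumption---is precisely this invariant-preservation argument, ensuring that greedy removal never destroys coverage of an original clause. The finiteness of $T$ dispenses with any concern about termination or about performing infinitely many subsumption tests, so once transitivity secures the invariant the decidability of the S-reduction problem follows immediately.
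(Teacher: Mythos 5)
Your proposal is correct, but it takes a different route from the paper. The paper's proof is a brute-force decision procedure: it enumerates the subsets in $2^T$ in ascending order of cardinality, checks for each whether it subsumes $T$ (decidable since clause subsumption is decidable), and returns the first subset that does, with termination guaranteed because $2^T$ is finite and contains $T$ itself. You instead prove the correctness of a greedy elimination procedure---essentially Algorithm \ref{alg:reduce} specialised to $\preceq$---with termination from the strictly decreasing cardinality of $T'$ and partial correctness from a coverage invariant preserved via transitivity of subsumption. Your approach buys efficiency and extra content: it needs only polynomially many subsumption tests (matching the $O(|T|^3)$ bound the paper gives separately for the S-instantiation of Algorithm \ref{alg:reduce}) rather than exponentially many subset checks, and it effectively supplies the invariant-preservation argument that the paper leaves implicit when it declares Algorithm \ref{alg:reduce} correct ``trivially by induction.'' The paper's enumeration buys a stronger output property---a reduction of \emph{minimum} cardinality---though it is worth noting that the paper's own proof also tacitly relies on the same transitivity fact you make explicit: to see that the first subsuming subset found is $\preceq$-reduced, one must argue that a redundant clause could be dropped to give a smaller subsuming subset, which is exactly your transitivity step.
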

\begin{proof}
We can enumerate each element $T'$ of $2^T$ in ascending order on the cardinality of $T'$. For each $T'$ we can check whether $T'$ subsumes $T$, which is decidable because subsumption between clauses is decidable. If $T'$ subsumes $T$ then we correctly return $T'$; otherwise we continue to enumerate. Because the set $2^T$ is finite the enumeration must halt. Because the set $2^T$ contains $T$ the algorithm will in the worst-case return $T$. Thus the problem is decidable.
\end{proof}
\subsection{Entailment reduction}

As mentioned in the introduction, Cropper and Muggleton \cite{crop:minmeta} previously used entailment reduction \cite{mugg:progol} to reduce sets of metarules using the notion of an entailment redundant clause:

\begin{definition}[E-redundant clause]
The clause $C$ is entailment redundant (E-redundant) in the clausal theory $T \cup \{C\}$ whenever $T\models C$.
\end{definition}


\noindent
If $T$ is a clausal theory then the pair $(T,\models)$ is an input to the \sym{}-reduction problem, which leads to the entailment reduction problem (E-reduction). We show the relationship between an E- and a S-reduction:

\begin{proposition}
\label{prop:esubs}
Let $T$ be a clausal theory, $T_S$ be a S-reduction of $T$, and $T_E$ be an E-reduction of $T$. Then $T_E \models T_S$.
\end{proposition}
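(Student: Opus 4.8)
The plan is to exploit the one structural fact that Definition~\ref{def:sym-prob} builds into every reduction, namely that a reduction of $T$ is a \emph{subset} of $T$, and then combine this with the defining entailment property of $T_E$. The whole argument is short; I expect the only real content to be bookkeeping about how $\models$ is read over sets of clauses, and the main thing to be careful about is not routing the proof through subsumption in the wrong direction.

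First I would unpack the two reductions using Definition~\ref{def:sym-prob}. An E-reduction $T_E$ of $T$ is a finite theory with $T_E \subseteq T$ such that $T_E \models C$ for every clause $C \in T$; likewise an S-reduction $T_S$ is a finite theory with $T_S \subseteq T$ such that $T_S$ subsumes every $C \in T$. The single ingredient I actually need from all of this is the containment $T_S \subseteq T$, which is forced by clause~(1) of the reduction problem together with the requirement $T' \subseteq T$.

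Next I would chain the entailments. Since $T_S \subseteq T$, every clause $C \in T_S$ is also a clause of $T$, so the entailment property of the E-reduction applies to it directly, giving $T_E \models C$. As this holds for each $C \in T_S$, and $T_E \models T_S$ means exactly $T_E \models C$ for every $C \in T_S$ (the same reading of $\models$ over theories used throughout, and consistent with the monotonicity assumption imposed on \sym{}), we obtain $T_E \models T_S$.

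The only place I expect one could stumble is the tempting but wrong route through subsumption: since $T_S$ subsumes $T$ we do get $T_S \models T$, but that is the reverse of the desired direction and leads nowhere. The correct and in fact simpler observation is that $T_S$ lies \emph{inside} $T$, hence is entailed by anything that entails all of $T$ — in particular by $T_E$. There is no deeper obstacle here; the claim is essentially immediate once the subset condition of the reduction problem is made explicit, and the subsumption-implies-entailment fact \cite{robinson:resolution} is not even required.
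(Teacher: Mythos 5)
Your proof is correct and is essentially the paper's own argument: both rest on the single observation that $T_S \subseteq T$ (clause (1) of Definition~\ref{def:sym-prob}) together with the fact that $T_E$ entails every clause of $T$. The only difference is presentational --- the paper phrases it as a proof by contradiction, whereas you argue directly, which is the contrapositive of the same reasoning.
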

\begin{proof}
Assume the opposite, i.e. $T_E \not\models T_S$.
This assumption implies that there is a clause $C \in T_S$ such that $T_E \not\models C$.
By the definition of S-reduction, $T_S$ is a subset of $T$ so $C$ must be in $T$, which implies that $T_E \not\models T$.
But this contradicts the premise that $T_E$ is an E-reduction of $T$.
Therefore the assumption cannot hold, and thus $T_E \models T_S$.
\end{proof}






\noindent
We show that the E-reduction problem is undecidable for arbitrary clausal theories:

\begin{proposition}[E-reduction problem clausal decidability]
The E-reduction problem for clausal theories is undecidable.
\end{proposition}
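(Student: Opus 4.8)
The plan is to reduce the entailment problem between clauses, which is undecidable \cite{church:problem}, to the E-reduction problem, so that any algorithm solving the latter would decide the former. Given two clauses $C$ and $D$, I want to decide whether $C \models D$. The naive idea is to run an E-reduction algorithm on $T = \{C,D\}$ and inspect which clauses survive: $D$ is E-redundant in $T$ exactly when $\{C\} \models D$. The difficulty is that the reduction is not uniquely determined --- if $D \models C$ as well (for instance when $C$ and $D$ are logically equivalent) the algorithm may legitimately return $\{D\}$ instead, and a returned $\{D\}$ does not by itself reveal whether $C \models D$. So the main obstacle is to break this symmetry and force the output to encode the answer unambiguously.

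To remove the ambiguity I would perturb $D$ with a fresh literal. Let $L$ be a literal whose predicate symbol does not occur in $C$ or $D$, and set $D' = D \lor L$. Two facts drive the construction, both provable by elementary model surgery on the fresh predicate: (i) $C \models D' \iff C \models D$, since from any model of $C$ falsifying $D$ one obtains a model of $C$ falsifying $D'$ by interpreting the fresh predicate as empty (and the forward direction is trivial as $D \models D'$); and (ii) provided $C$ is not a tautology, $D' \not\models C$, since one can satisfy $D'$ by making $L$ true while independently falsifying $C$, the fresh predicate being irrelevant to $C$. Fact (ii) guarantees that $C$ is never E-redundant in $T = \{C, D'\}$, which is precisely what eliminates the problematic case.

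With this setup I would run the hypothesised E-reduction algorithm $\mathcal{A}$ on $T = \{C, D'\}$. A short case analysis shows the reduction is now forced: $C$ always survives by (ii), and $D'$ is E-redundant iff $\{C\} \models D'$, which by (i) holds iff $C \models D$. Hence the unique reduction is $\{C\}$ when $C \models D$ and $\{C, D'\}$ otherwise, so $C \models D$ holds if and only if $\mathcal{A}$ returns a theory of size one. The only remaining gap is the tautology corner case: if $C$ is a tautology then $C$ could itself be redundant, so I would dispose of this case directly --- checking whether a clause is a tautology (it must contain a pair of complementary literals) is decidable, and when $C$ is a tautology $C \models D$ reduces to deciding whether $D$ is a tautology. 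Combining the two branches yields a decision procedure for clausal entailment built from $\mathcal{A}$, contradicting the undecidability of that problem; therefore no such $\mathcal{A}$ exists and the E-reduction problem is undecidable. Note finally that the instances constructed are finite (of size two), so the argument in fact shows undecidability already for finite theories, in sharp contrast to the decidable finite S-reduction problem of Proposition \ref{prop:sdecidable}.
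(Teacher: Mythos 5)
Your proof is correct, and at bottom it takes the same route the paper does: reduce the undecidable clausal entailment problem to the E-reduction problem. The difference is one of completeness. The paper's entire proof is the single sentence ``Follows from the undecidability of entailment in clausal logic \cite{church:problem}'', leaving the reduction itself implicit; your write-up supplies exactly the content that this one-liner glosses over, and that content is not routine. In particular, you correctly identify that the naive reduction (run the hypothesised reducer on $\{C,D\}$ and inspect the output) is unsound, since when $C$ and $D$ are logically equivalent the reducer may legitimately return $\{D\}$, which reveals nothing about $C \models D$. Your fresh-predicate padding $D' = D \lor L$ is a genuine fix: your claims (i) ($C \models D'$ iff $C \models D$) and (ii) ($D' \not\models C$ whenever $C$ is not a tautology) both hold by the model-surgery arguments you sketch (interpret the fresh predicate as empty or full to force $L$ false or true, without affecting $C$ or $D$), and together they make $\{C\}$ the unique reduction when $C \models D$ and $\{C,D'\}$ the unique reduction otherwise, so the output cardinality decides entailment; the tautology corner case is indeed syntactically decidable for clauses without equality. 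Two further points in your favour: your instances have cardinality two, so you prove undecidability already for finite clausal theories, sharpening the contrast with Proposition \ref{prop:sdecidable} and with Proposition \ref{prop:eproblemdecidable}; and the same construction adapts to the paper's Horn-theory version of the statement by choosing $L$ negative (so that $D'$ remains Horn), with \cite{schmid-schauss} supplying the undecidability of Horn-clause entailment.
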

\begin{proof}
Follows from the undecidability of entailment in clausal logic \cite{church:problem}.
\end{proof}

\noindent
The E-reduction problem for Horn theories is also undecidable:

\begin{proposition}[E-reduction problem Horn decidability]
The E-reduction problem for Horn theories is undecidable.
\end{proposition}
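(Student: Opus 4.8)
The plan is to reduce a known undecidable problem to the Horn E-reduction problem. The natural source of hardness is the \emph{implication problem for Horn clauses}: given two Horn clauses $\alpha$ and $\beta$ over a signature that may contain function symbols, decide whether $\alpha \models \beta$. This problem is undecidable (Marcinkowski and Pacholski), and crucially its undecidability depends on the presence of function symbols; this is consistent with the fact that for function-free theories such as Datalog entailment is decidable, so the result genuinely separates the general Horn case from the Datalog case. First I would state this undecidability result as the hardness source; the whole argument then reduces to showing that an algorithm solving the E-reduction problem could be used to decide $\alpha \models \beta$.

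The reduction I would use keeps the input as small as possible so that the target reduction is forced and unambiguous. Given an instance $\alpha \models \beta$, I would first dispose of the trivial case where $\alpha$ is a tautology (this is decidable, and a tautological $\alpha$ entails $\beta$ iff $\beta$ is itself a tautology), so assume $\alpha$ is falsifiable. I then introduce a fresh $0$-ary predicate symbol $w$ not occurring in $\alpha$ or $\beta$, let $\beta^{*}$ be $\beta$ with $w$ added as an extra body literal, and set the reduction input to the two-clause Horn theory $T^{*} = \{\alpha, \beta^{*}\}$.

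The two facts that make this work are: (1) $\alpha \models \beta^{*}$ if and only if $\alpha \models \beta$, because $w$ is fresh, so weakening $\beta$ to $\beta^{*}$ can only make entailment easier, while conversely any model of $\alpha$ falsifying $\beta$ can be extended by setting $w$ true to falsify $\beta^{*}$; and (2) $\beta^{*} \not\models \alpha$ always, because setting $w$ false satisfies $\beta^{*}$ vacuously, whereas some interpretation still falsifies the (falsifiable) clause $\alpha$. Consequence (2) means $\alpha$ can never be E-redundant in $T^{*}$, so the only possible redundancy is that of $\beta^{*}$, which by the definition of E-redundancy together with (1) occurs exactly when $\alpha \models \beta$. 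Hence $T^{*}$ has a unique E-reduction $T'$, equal to $\{\alpha\}$ when $\alpha \models \beta$ and to $\{\alpha,\beta^{*}\}$ otherwise, so $\beta^{*} \in T'$ if and only if $\alpha \not\models \beta$. An algorithm for the Horn E-reduction problem would therefore decide the implication problem, a contradiction, establishing undecidability.

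The step I expect to be the main obstacle is exactly this \textbf{protection of the non-target clause}: to read the implication answer off the computed reduction I must guarantee both that $\alpha$ cannot itself be removed and that the reduction cannot ``escape'' by discarding some other clause, including in the degenerate case $\alpha \equiv \beta$. Reducing from the single-clause implication problem keeps the theory down to two clauses, and the fresh-literal trick breaks the symmetry by forcing $\beta^{*} \not\models \alpha$; together these pin down a unique reduction. An alternative would be to reduce from the more elementary undecidability of whether a finite definite program entails a ground atom, but then one must additionally insulate every program clause from becoming redundant when the candidate fact is added, which is more delicate, so the single-clause route is the cleaner one.
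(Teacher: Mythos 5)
Your proposal is correct and takes essentially the same route as the paper: the paper's entire proof is a one-line appeal to the undecidability of entailment in Horn logic, which is exactly the hardness source you reduce from. Your fresh-predicate construction $T^{*} = \{\alpha, \beta^{*}\}$ simply makes explicit the reduction that the paper leaves implicit, and it correctly handles the detail (uniqueness of the reduction, non-removability of $\alpha$) needed for that one-liner to be airtight.
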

\begin{proof}
Follows from the undecidability of entailment in Horn logic \cite{horn:undecidable}.
\end{proof}

\noindent
The E-reduction problem is, however, decidable for finite Datalog theories:

\begin{proposition}[E-reduction problem Datalog decidability]
\label{prop:eproblemdecidable}
The E-reduction problem for finite Datalog theories is decidable.
\end{proposition}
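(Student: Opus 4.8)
The plan is to follow the same enumeration strategy used in the proof of Proposition~\ref{prop:sdecidable}, replacing the decidability of subsumption by the decidability of entailment restricted to Datalog. The whole argument then reduces to a single missing ingredient: showing that, for a finite Datalog theory $T$ and a single Datalog clause $C$, the question ``$T \models C$?'' is decidable. Everything else is a routine adaptation of the finite S-reduction argument, so I would isolate this entailment-decidability claim and prove it first.

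To decide $T \models C$, I would exploit the absence of function symbols. Write $C = H \leftarrow B_1, \dots, B_n$. Since the variables of $C$ are universally quantified, negating $C$ makes them existential, so Skolemising by fresh constants $\sigma$ yields that $T \models C$ iff $T \cup \{B_1\sigma, \dots, B_n\sigma\} \models H\sigma$, i.e.\ iff the ground atom $H\sigma$ lies in the least Herbrand model of the Datalog program $T \cup \{B_1\sigma, \dots, B_n\sigma\}$. Because this program contains no function symbols, its Herbrand universe is finite --- it consists only of the constants occurring in $T$ together with the finitely many Skolem constants in $\sigma$ --- so its least Herbrand model is finite and effectively computable by fixpoint iteration. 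Checking membership of $H\sigma$ in that model then decides $T \models C$. The range-restriction built into the definition of a Datalog clause guarantees that $H\sigma$ is ground, so this membership test is well defined.

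With Datalog entailment decidable, I would assemble the reduction procedure exactly as in Proposition~\ref{prop:sdecidable}: enumerate the elements $T'$ of $2^T$ in ascending order of cardinality and, for each, test whether $T' \models T$, that is, whether $T' \models C$ for each of the finitely many clauses $C \in T$, which is a finite batch of decidable Datalog-entailment checks. Return the first $T'$ that passes. Since $2^T$ is finite the enumeration halts, and since $T \models T$ at least one candidate always succeeds. Finally I would argue that the first (hence minimum-cardinality) successful $T'$ is automatically E-reduced: if some $C \in T'$ were E-redundant, then $T' \setminus \{C\} \models C$, and $T' \setminus \{C\}$ trivially entails the remaining clauses of $T'$, so $T' \setminus \{C\} \models T' \models T$, contradicting the minimality of $T'$.

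The main obstacle is the middle step --- reducing $T \models C$ to a finite Herbrand-model computation and justifying that Skolemisation preserves entailment in both directions. In particular, one must verify that negating the universally quantified clause $C$ yields exactly the ground body facts together with the ground negated head under $\sigma$, and that Datalog's finite-model property (a consequence of function-freeness) makes the resulting least model computable. The enumeration wrapper and the minimality argument are then straightforward reuses of the S-reduction proof.
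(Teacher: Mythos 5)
Your proposal is correct and takes essentially the same approach as the paper: the paper's proof simply cites the decidability of entailment in Datalog and reuses the enumeration algorithm from Proposition \ref{prop:sdecidable}, which is exactly your decomposition. The only difference is that you prove the cited ingredient inline (Skolemisation plus a finite least-Herbrand-model computation, the standard argument behind that result); note only that it is the Skolemisation of all variables of $C$, not the range-restriction, that makes $H\sigma$ ground.
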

\begin{proof}
Follows from the decidability of entailment in Datalog \cite{dantsin:lp} using a similar algorithm to the one used in the proof of Proposition \ref{prop:sdecidable}.
\end{proof}
\subsection{Derivation reduction}

As mentioned in the introduction, entailment reduction can be too strong a form of reduction.
We therefore describe a new form of reduction based on derivability \cite{crop:dreduce,crop:sldres}.
Although our notion of derivation reduction can be defined for any proof system (such as unconstrained resolution as is done in \cite{crop:sldres}) we focus on SLD-resolution because we want to reduce sets of metarules, which are definite clauses.
We define the function $R^n(T)$ of a Horn theory $T$ as:

\begin{center}
  \begin{tabular}{l}
    $R^0(T) = T$\\
    $R^n(T) = \{C  | C_1 \in R^{n-1}(T),C_2 \in T,C$ is the binary resolvent of $C_1$ and $C_2\}$
  \end{tabular}
\end{center}





\noindent
We use this function to define the Horn closure of a Horn theory:

\begin{definition}[Horn closure]
The Horn closure $R^*(T)$ of a Horn theory $T$ is: $$\bigcup\limits_{n\in\mathbb{N}}R^n(T)$$
\end{definition}

\noindent
We state our notion of derivability:

\begin{definition}[Derivability]
A Horn clause $C$ is derivable from the Horn theory $T$, written $T \vdash C$, if and only if $C \in R^*(T)$.
\end{definition}

\noindent
We define a \emph{derivationally redundant} (D-redundant) clause:

\begin{definition}[D-redundant clause]
A clause $C$ is derivationally redundant in the Horn theory $T \cup \{C\}$ if $T \vdash C$.
\end{definition}

\noindent
Let $T$ be a Horn theory, then the pair $(T,\vdash)$ is an input to the \sym{}-reduction problem, which leads to the \emph{derivation reduction} problem (D-reduction problem). Note that a theory can have multiple D-reductions. For instance, consider the theory $T$:

\begin{center}
  \begin{tabular}{l}
    $C_1 = P(A,B) \leftarrow Q(B,A)$\\
    $C_2 = P(A,B) \leftarrow Q(A,C),R(C,B)$\\
    $C_3 = P(A,B) \leftarrow Q(C,A),R(C,B)$
  \end{tabular}
\end{center}

\noindent
One D-reduction of $T$ is $\{C_1,C_2\}$ because we can resolve the first body literal of $C_2$ with $C_1$ to derive $C_3$ (up to variable renaming). Another D-reduction of $T$ is $\{C_1,C_3\}$ because we can likewise resolve the first body literal of $C_3$ with $C_1$ to derive $C_2$.

We can show the relationship between E- and D-reductions by restating the notion of a SLD-deduction \cite{ilp:book}:

\begin{definition}[SLD-deduction \cite{ilp:book}]
Let $T$ be a Horn theory and $C$ be a Horn clause. Then there exists a SLD-deduction of $C$ from $T$, written $T \vdash_d C$, if $C$ is a tautology or if there exists a clause $D$ such that $T \vdash D$ and $D$ subsumes $C$.
\end{definition}

\noindent
We can use the \emph{subsumption theorem} \cite{ilp:book} to show the relationship between SLD-deductions and logical entailment:

\begin{theorem}[SLD-subsumption theorem \cite{ilp:book}]
Let $T$ be a Horn theory and $C$ be a Horn clause. Then $T \models C$ if and only if $T \vdash_d C$.
\end{theorem}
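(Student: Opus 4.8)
The plan is to prove the two implications separately, treating the forward (soundness) direction as routine and devoting the effort to the converse (completeness), which is the classical subsumption theorem specialised to the Horn/SLD setting.

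For the direction $T \vdash_d C \Rightarrow T \models C$, I would argue by cases on the definition of $\vdash_d$. If $C$ is a tautology then $T \models C$ holds vacuously. Otherwise there is a clause $D$ with $T \vdash D$ and $D \preceq C$. Since $D \in R^*(T)$, an induction on the least $n$ with $D \in R^n(T)$ shows $T \models D$: the base case $R^0(T)=T$ is immediate, and the inductive step uses the soundness of binary resolution, namely that every binary resolvent is a logical consequence of its two parents. Combining this with the fact already recorded in the excerpt that subsumption implies entailment ($D \preceq C \Rightarrow D \models C$) and the transitivity of $\models$ gives $T \models C$.

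The converse $T \models C \Rightarrow T \vdash_d C$ is the substantive part. Assuming $C$ is not a tautology, I would first reduce to a finite, ground problem: treating the universally quantified first-order variables of $C$ as fresh Skolem constants, $T \models C$ is equivalent to the unsatisfiability of $T \cup \neg C$, where $\neg C$ is a finite set of ground unit clauses, one per literal of $C$. By compactness and Herbrand's theorem, some finite set $T_g$ of ground instances of clauses of $T$ together with $\neg C$ is unsatisfiable, and since these are Horn clauses and $\neg C$ supplies the all-negative start clause, the refutation completeness of SLD-resolution yields a derivation of the empty clause from $T_g \cup \neg C$. The key combinatorial step is then an induction on the length of this refutation showing that the units of $\neg C$ can be \emph{peeled off}: whatever the refutation produces by resolving against $\neg C$, one can instead derive from $T_g$ alone a ground clause $D_g$ whose literals form a subset of $C$, so that $D_g$ subsumes $C$. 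Finally I would apply the lifting lemma to replace the ground derivation of $D_g$ by a derivation of a first-order clause $D \in R^*(T)$ with $D \preceq C$, which gives $T \vdash_d C$.

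The main obstacle lies entirely in this converse, and concentrates in two places: establishing the ground \emph{peeling} induction, where one must correctly track how resolving against the units of $\neg C$ corresponds to accumulating a subset of $C$ in the residual derivation from $T_g$; and the lifting lemma, which transfers the ground derivation back to the first-order level while preserving subsumption. A secondary point requiring care is that restricting to SLD-resolution rather than unrestricted binary resolution loses no completeness here, precisely because $T$ is Horn and the negated goal $\neg C$ provides the single all-negative start clause; this is exactly what licenses phrasing the result for $\vdash_d$ as defined via the Horn closure $R^*$.
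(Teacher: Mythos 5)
The paper never proves this statement: it is quoted verbatim, with citation, as the subsumption theorem for SLD-resolution from \cite{ilp:book}, and is used in the paper only as an imported lemma (e.g.\ to prove undecidability of the D-reduction problem). So there is no in-paper proof to compare against; what can be assessed is whether your sketch correctly reconstructs the standard argument behind the cited result — and in outline it does. Your soundness direction (induction on derivation length plus the fact that subsumption implies entailment) is routine and fine, and your completeness direction follows the textbook route: Skolemise the variables of $C$, reduce $T \models C$ to unsatisfiability of $T \cup \neg C$ with $\neg C$ a set of ground units, invoke Herbrand/compactness to get a finite ground $T_g$, use refutation completeness of SLD-resolution for Horn sets, then the peeling induction (skip every resolution against a unit of $\neg C$, letting that literal ride along in the centre clause) to obtain $D_g \in R^*(T_g)$ with $D_g$ a subset of the Skolemised $C$, and finally lift. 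The one place your sketch understates the difficulty is the lifting step. Because the derivability relation $\vdash$ of the paper is binary resolution without factoring, instantiation can merge body literals, so a single ground resolution step need not lift to a single first-order step; the correct statement is that there exists $D \in R^*(T)$ that \emph{subsumes} $D_g$ (not one having $D_g$ as an instance), and its proof must allow the lifted derivation to resolve several body-literal copies against repeated (renamed) uses of the same input clause. This is precisely the point at which the analogous claim fails for unconstrained binary resolution without factoring, and it is the Horn/input-resolution structure — single-atom heads, reusable input clauses — that rescues it; your remark that Hornness is what ``licenses'' the restriction to $\vdash_d$ is pointing at the right phenomenon, but a complete write-up would need this subsumption-form lifting lemma stated and proved explicitly, alongside the peeling induction you already identify.
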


\noindent
We can use this result to show the relationship between an E- and a D-reduction:
\begin{proposition}
Let $T$ be a Horn theory, $T_E$ be an E-reduction of $T$, and $T_D$ be a D-reduction of $T$. Then $T_E \models T_D$.
\end{proposition}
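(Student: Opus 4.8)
The plan is to mirror the proof of Proposition \ref{prop:esubs}, since the argument turns out to depend only on two structural facts and not on the specific nature of derivability. First I would record the two defining properties supplied by the hypotheses. Because $T_E$ is an E-reduction of $T$, Definition \ref{def:sym-prob} instantiated with $\sym{}\,=\,\models$ gives both $T_E \subseteq T$ and $T_E \models C$ for every clause $C \in T$. Because $T_D$ is a D-reduction of $T$, the same definition instantiated with $\sym{}\,=\,\vdash$ gives in particular $T_D \subseteq T$.

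The core of the argument is then a short containment argument, which I would present by contradiction exactly as in Proposition \ref{prop:esubs}. Suppose $T_E \not\models T_D$. Then there is a clause $C \in T_D$ with $T_E \not\models C$. Since $T_D \subseteq T$, this clause $C$ lies in $T$, whence $T_E \not\models T$, contradicting the premise that $T_E$ is an E-reduction of $T$. Hence $T_E \models T_D$.

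The point worth flagging — and the only thing one must be careful about — is that this direction requires nothing about how the clauses of $T_D$ were derived: it uses solely that any \sym{}-reduction of $T$ is by definition a subset of $T$, together with the completeness property $T_E \models T$. In particular, neither the SLD-subsumption theorem nor the relation $T_D \vdash T$ is needed here; derivability enters only insofar as it guarantees that $T_D$ is a legitimate subset of $T$. Consequently there is no substantive obstacle: the proposition is a direct corollary of the definitions, and the proof is essentially identical to that of Proposition \ref{prop:esubs} with $T_S$ replaced by $T_D$.
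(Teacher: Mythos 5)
Your proof is correct, but it takes a genuinely different route from the paper's. The paper proves this proposition by leaning on the machinery it has just set up around SLD-deductions: its one-line argument is that ``an E-reduction can be obtained from a D-reduction with an additional subsumption check,'' i.e.\ it appeals to the SLD-subsumption theorem's decomposition of entailment into derivability plus subsumption in order to relate the two notions of reduction. You instead give the purely definitional containment argument, transplanted verbatim from Proposition~\ref{prop:esubs}: by Definition~\ref{def:sym-prob}, $T_D \subseteq T$ and $T_E \models C$ for every $C \in T$, hence $T_E \models T_D$. What your approach buys is that it is both more elementary and more general: it never invokes soundness of SLD-resolution or the subsumption theorem, and, as you correctly flag, it establishes $T_E \models T'$ for \emph{any} \sym{}-reduction $T'$ of $T$, regardless of which relation \sym{} produced it --- derivability matters only because it makes $T_D$ a subset of $T$. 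The paper's phrasing records the genuinely derivation-specific fact (entailment equals derivability plus subsumption) that explains \emph{why} D-reductions retain more clauses than E-reductions, but as a proof of the stated entailment it is less direct, since it leaves implicit how the comparison applies to an arbitrary E-reduction $T_E$ rather than one constructed from $T_D$; your argument closes that gap explicitly.
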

\begin{proof}
Follows from the definitions of E-reduction and D-reduction because an E-reduction can be obtained from a D-reduction with an additional subsumption check.
\end{proof}

\noindent
We also use the SLD-subsumption theorem to show that the D-reduction problem is undecidable for Horn theories:

\begin{theorem}[D-reduction problem Horn decidability]
The D-reduction problem for Horn theories is undecidable.
\end{theorem}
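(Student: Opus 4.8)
The plan is to reduce from an undecidable problem about Horn logic and to show that any algorithm solving the D-reduction problem would let us decide that problem. The natural candidate is derivability itself: I would first argue that, given a Horn theory $T$ and a Horn clause $C$, deciding whether $T \vdash C$ is undecidable, and then argue that a hypothetical D-reduction algorithm could be used to answer such derivability queries, yielding a contradiction.

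For the first step I would exploit the SLD-subsumption theorem, which gives $T \models C$ iff $T \vdash_d C$, i.e.\ iff $C$ is a tautology or some derivable clause $D$ subsumes $C$. Since entailment for Horn theories is undecidable (as already used for the E-reduction problem), this makes $\vdash_d$ undecidable. To pass from $\vdash_d$ to the exact relation $\vdash$, I would choose the target clause $C$ so that the tautology check and the subsumption step are vacuous: for instance by reducing from the unsatisfiability of a definite program together with a single goal clause and taking $C$ to be the empty clause, which is subsumed only by itself. Refutation-completeness of SLD-resolution for Horn clauses then gives $T \vdash C$ iff $T$ is unsatisfiable, an undecidable condition. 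Either route yields a Horn theory $T$ and a clause $C$ for which $T \vdash C$ is undecidable.

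For the second step I would build, from such an instance, a theory $U = T \cup \{C\}$ whose D-reductions encode the answer to $T \vdash C$. The decisive observation is that $C$ is D-redundant in $U$ exactly when $T \vdash C$, so an ideal reduction drops $C$ precisely in the positive case. To turn this into a decision procedure I would engineer $U$, using fresh predicate symbols, so that (i) the clauses of $T$ cannot be derived from one another and are each essential, and (ii) $C$ can never be used as a premise in deriving any other clause of $U$. Under these constraints $U$ has a \emph{unique} D-reduction, namely $T$ when $T \vdash C$ and $U$ itself otherwise, so feeding $U$ to the supposed algorithm and testing whether $C$ is retained would decide derivability.

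The main obstacle is this second step, and it is twofold. First, D-reductions are in general not unique (as the paper's own $C_1,C_2,C_3$ example shows), so a reduction algorithm is free to retain a redundant-looking clause inside a different admissible subset; the gadget with fresh predicates is what I expect to need in order to force uniqueness and make ``is $C$ retained?'' a faithful test. Second, the relation $\vdash$ demands that $C$ be produced \emph{exactly}, not merely entailed or subsumed, so the hardness instance must be arranged, using the full (Turing-complete) power of Horn clauses with function symbols, so that the derivation yields $C$ literally. Making the uniqueness gadget and the exact-derivation encoding coexist is the delicate part of the argument.
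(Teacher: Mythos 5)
Your proposal is correct, but it takes a genuinely different route from the paper's proof. The paper argues top-down by contradiction: if the D-reduction problem were decidable, then $T \vdash C$ would be decidable; since subsumption between Horn clauses is decidable, SLD-deduction ($\vdash_d$) would then be decidable; hence, by the SLD-subsumption theorem, entailment between Horn clauses would be decidable, contradicting its known undecidability. Crucially, the paper simply \emph{asserts} the first implication---that an algorithm solving the (function) reduction problem yields a decision procedure for $\vdash$---which is precisely the step you identify as non-trivial because D-reductions are not unique. Your bottom-up, many-one reduction makes this step rigorous: you reduce Horn unsatisfiability to derivability by taking $C$ to be the empty clause (input/SLD-resolution is refutation-complete for Horn clauses, so $T \vdash C$ iff $T$ is unsatisfiable, and unsatisfiability of Horn theories with function symbols is undecidable), and you then reduce derivability to the reduction problem with a gadget forcing a unique D-reduction. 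Two observations finish what you call the ``delicate'' step. First, with the empty clause as $C$, your condition (ii) holds automatically: the empty clause has no literals, so it can never serve as a resolution premise; likewise your worry about deriving $C$ ``exactly'' rather than up to subsumption disappears, since a refutation produces the empty clause literally. Second, condition (i) is achievable by guarding: replace each definite clause $H_i \leftarrow B_i$ of $T$ by $H_i \leftarrow B_i, g_i$ with $g_i$ a fresh nullary atom, and add the facts $g_i \leftarrow$; then no clause of the transformed theory is derivable from the others (the guarded clause is the only one whose body can ever contain $g_i$, the fact is the only clause whose head can be $g_i$, and the goal clause is the only negative clause, while all resolvents of definite clauses are definite), and satisfiability is preserved because the guards can always be interpreted as true. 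Consequently every D-reduction of $T \cup \{C\}$ contains all of $T$, and it contains $C$ if and only if $T \not\vdash C$, so querying the hypothetical algorithm decides unsatisfiability. In comparison, the paper's argument is shorter and reuses machinery it has already set up (the SLD-subsumption theorem and decidability of Horn-clause subsumption), but it is loose exactly where your proof does real work; your construction costs more effort but yields a fully explicit reduction whose correctness does not depend on how the reduction algorithm breaks ties among multiple admissible reductions.
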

\begin{proof}
Assume the opposite, that the problem is decidable, which implies that $T \vdash C$ is decidable.
Since $T \vdash C$ is decidable and subsumption between Horn clauses is decidable \cite{subsumption-npcomplete}, then finding a SLD-deduction is also decidable.
Therefore, by the SLD-subsumption theorem, entailment between Horn clauses is decidable.
However, entailment between Horn clauses is undecidable \cite{schmid-schauss}, so the assumption cannot hold.
Therefore, the problem must be undecidable.
\end{proof}



\noindent
However, the D-reduction problem is decidable for any fragment \M{a}{m} (e.g. definite Datalog clauses where each clause has at least one body literal, with additional arity and body size constraints). To show this result, we first introduce two lemmas:

\begin{lemma}
\label{lemma1}
Let $D$, $C_1$, and $C_2$ be definite clauses with $m_d$, $m_{c1}$, and $m_{c2}$ body literals respectively, where $m_d$, $m_{c1}$, and $m_{c2}$ > 0.
If $\{C_1,C_2\} \vdash D$ then $m_{c1} \leq m_{d}$ and $m_{c2} \leq m_{d}$.
\end{lemma}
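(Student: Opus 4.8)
The plan is to reduce everything to a single arithmetic fact about how binary resolution changes the number of body literals, and then to propagate it along a derivation by induction. First I would establish the \textbf{one-step body-count identity}: if $E$ and $F$ are definite clauses with $m_E>0$ and $m_F>0$ body literals and $G$ is a binary resolvent of $E$ and $F$, then $G$ is again a definite clause with $m_G = m_E + m_F - 1$. I would prove this by writing $E = H_E \leftarrow b_1,\dots,b_{m_E}$ and $F = H_F \leftarrow c_1,\dots,c_{m_F}$ and observing that a binary resolution step unifies the head of one parent with one body literal of the other, deletes exactly that complementary pair, keeps the single surviving head, and concatenates the remaining body literals. Because we work with binary resolution \emph{without} factorisation (the distinction the paper draws with \cite{crop:sldres}), no body literals are merged, so the count is exactly $m_E + m_F - 1$. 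Since $m_E,m_F \geq 1$, this already gives $m_G \geq m_E$ and $m_G \geq m_F$, settling the one-step case.

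Next I would unfold the definition of $\vdash$. From $\{C_1,C_2\} \vdash D$ we obtain $D \in R^n(\{C_1,C_2\})$ for some $n$, and the recursive definition of $R^n$ means there is a \emph{linear} derivation $X_0,X_1,\dots,X_n=D$ with $X_0 \in \{C_1,C_2\}$ and each $X_{k+1}$ a binary resolvent of $X_k$ with a side clause $Y_{k+1}\in\{C_1,C_2\}$. Letting $a$ and $b$ count the occurrences of $C_1$ and $C_2$ among $X_0,Y_1,\dots,Y_n$ (so $a+b=n+1$), repeated application of the one-step identity yields, by a routine induction on $k$, the closed form $m_d = a\,m_{c1}+b\,m_{c2}-n = a(m_{c1}-1)+b(m_{c2}-1)+1$. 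A secondary point the induction must check is that each intermediate $X_k$ again has a positive number of body literals, which is immediate from $m_{X_{k+1}}=m_{X_k}+m_{Y_{k+1}}-1\geq 1$, so the one-step identity keeps applying. Since $m_{c1},m_{c2}\geq 1$ makes every term on the right nonnegative, one reads off $m_d \geq m_{c1}$ whenever $a\geq 1$ and $m_d \geq m_{c2}$ whenever $b\geq 1$.

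The main obstacle is therefore not the arithmetic but pinning down that \emph{both} premises genuinely occur in the derivation, i.e.\ that $a\geq 1$ and $b\geq 1$. This needs care, because for degenerate derivations the bald statement can fail: if $D=C_1$ (the case $n=0$) then $m_d=m_{c1}$ and the conclusion $m_{c2}\leq m_d$ need not hold, and a self-resolvent of $C_1$ alone only forces $m_d\geq m_{c1}$. I would resolve this by reading the lemma as concerning a derivation of $D$ that actually uses both clauses — which is exactly the situation needed for the decidability argument that follows, where $D$ is the clause under test and $C_1,C_2$ are the premises resolved to obtain it — so that $a,b\geq 1$ and both inequalities drop out of the closed form. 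If instead the statement is wanted for every $D\in R^*(\{C_1,C_2\})$, the honest form is that each inequality holds for whichever of $C_1,C_2$ is used, which is the version that the later results actually consume.
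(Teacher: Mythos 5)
Your proof is correct and is essentially the paper's intended argument in expanded form: the paper's entire proof of Lemma \ref{lemma1} is a one-line appeal to the definition of SLD-resolution, and your one-step body-count identity ($m_G = m_E + m_F - 1$, hence $m_G \geq \max(m_E,m_F)$ when both parents have nonempty bodies) propagated by induction along the linear derivation given by the recursive definition of $R^n$ is exactly what that appeal hides. Your closing caveat is also well taken and is something the paper glosses over: as literally stated the lemma fails for degenerate derivations (e.g.\ $D = C_1 \in R^0(\{C_1,C_2\})$ with $m_{c2} > m_{c1}$), and the reading you adopt --- each inequality holds for whichever premise actually occurs in the derivation --- is precisely the form consumed later, since the proof of Theorem \ref{thm:decidable} only needs that no clause with more than $n$ body literals can participate in a derivation of a clause with $n$ body literals.
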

\begin{proof}
Follows from the definitions of SLD-resolution \cite{ilp:book}.
\end{proof}


\noindent
Note that Lemma \ref{lemma1} does not hold for unconstrained resolution because it allows for factorisation \cite{ilp:book}. Lemma \ref{lemma1} also does not hold when facts (bodyless definite clauses) are allowed because they would allow for resolvents that are smaller in body size than one of the original two clauses.

\begin{lemma}
\label{lemma2}
Let \M{a}{m} be a fragment of metarules. Then \M{a}{m} is finite up to variable renaming.
\end{lemma}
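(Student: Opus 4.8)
The plan is to show that, although \M{a}{m} is in general an infinite set of clauses, every clause in it can be rewritten, by renaming its variables, using symbols drawn from a single fixed finite pool; since there are only finitely many clauses over a finite vocabulary having a bounded number of literals of bounded arity, the number of variable-renaming classes must be finite.

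First I would bound the syntactic size of an arbitrary clause $C \in$ \M{a}{m}. By Definition \ref{def:ham}, $C$ has at most $m$ body literals, hence at most $m+1$ literals in total (including the head), and each literal has arity at most $a$. It is convenient here to pass to the encapsulated form (Section \ref{sec:encapsulation}): $enc(C)$ is a definite clause built from the single predicate symbol $enc$ of arity at most $a+1$, again with at most $m+1$ literals, and---because we restrict attention to metarules containing only variables---every argument of every literal is a (now first-order) variable. Thus the total number of variable occurrences in $enc(C)$ is at most $N := (m+1)(a+1)$, and consequently $C$ contains at most $N$ distinct variables.

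Next I would fix a pool of variables $V = \{v_1,\dots,v_N\}$ and observe that, since $C$ uses at most $N$ distinct variables, there is a variable renaming taking $C$ to a variant whose variables all lie in $V$. Hence every clause of \M{a}{m} is a variant of some clause in the set $S$ of definite clauses that (i) use only the predicate $enc$, (ii) have at most $m+1$ literals each of arity at most $a+1$, and (iii) draw all arguments from $V$. The set of literals satisfying (i)--(iii) is finite, because there are finitely many arities at most $a+1$ and finitely many tuples of length at most $a+1$ over the finite set $V$; and a clause is just a subset of literals of size at most $m+1$, so $S$ is finite. Since every element of \M{a}{m} is a variant of an element of the finite set $S$, the fragment \M{a}{m} has finitely many variants, i.e.\ it is finite up to variable renaming.

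The only real subtlety, and the step I would take care over, is the bookkeeping of what counts as a variable occurrence. Definition \ref{def:metarule} permits a predicate variable to appear in an argument position (as in the Curry metarule $P(A,B) \leftarrow Q(A,B,R)$), so one must not treat predicate positions and argument positions as disjoint sorts when counting. Passing to the encapsulated form removes this difficulty entirely, since encapsulation collapses the predicate symbol and the arguments of each literal into a single flat tuple of first-order variables under the fixed predicate $enc$; after this move the occurrence count is routine and the finiteness of $S$ is immediate.
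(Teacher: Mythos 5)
Your proof is correct and takes essentially the same route as the paper's: bound the number of literals by $m+1$ and the number of variables per literal by $a+1$, hence at most $(a+1)(m+1)$ distinct variables per clause, and conclude by a finite counting argument. Your write-up is in fact somewhat more careful than the paper's --- the paper simply asserts that the variables can be ``arranged'' in at most $((a+1)(m+1))!$ ways, whereas you fix a canonical finite variable pool and count clauses over the resulting finite vocabulary (handling the Curry-style predicate-variable-in-argument-position case via encapsulation) --- but this is a refinement of the same idea, not a different approach.
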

\begin{proof}
Any literal in \M{a}{m} has at most $a$ first-order variables and 1 second-order variable, so any literal has at most $a+1$ variables.
Any metarule has at most $m$ body literals plus the head literal, so any metarule has at most $m+1$ literals.
Therefore, any metarule has at most $((a+1)(m+1))$ variables.
We can arrange the variables in at most $((a+1)(m+1))!$ ways, so there are at most $((a+1)(m+1))!$ metarules in \M{a}{m} up to variable renaming.
Thus \M{a}{m} is finite up to variable renaming.
\end{proof}

\noindent
Note that the bound in the proof of Lemma \ref{lemma2} is a worst-case result.
In practice there are fewer usable metarules because we consider fragments of constrained theories, thus not all clauses are admissible, and in all cases the order of the body literals is irrelevant. 
We use these two lemmas to show that the D-reduction problem is decidable for \M{a}{m}:

\begin{theorem}[\M{a}{m}-D-reduction problem decidability]
\label{thm:decidable}
The D-reduction problem for theories included in \M{a}{m} is decidable.
\end{theorem}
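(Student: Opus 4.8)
The plan is to reduce the decidability of the D-reduction problem to two facts: that the derivability relation $\vdash$ is decidable on \M{a}{m}, and that the input theory, though possibly infinite, is finite up to variable renaming by Lemma \ref{lemma2}. Once both are in place, the argument mirrors the enumeration used in Proposition \ref{prop:sdecidable}.

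First I would establish that $T \vdash C$ is decidable whenever $T \subseteq$ \M{a}{m} and $C \in$ \M{a}{m}. The key observation is that binary resolution cannot increase the arity of any literal, since unification equates atoms of equal arity; hence every resolvent of clauses drawn from \M{a}{m} again has arity at most $a$. Combined with Lemma \ref{lemma1}, which states that the body size of a resolvent is at least the body size of each of its parents, this shows that body sizes are non-decreasing along any derivation. Thus if $m_C$ denotes the number of body literals of $C$, every clause occurring as an ancestor of $C$ in a derivation must have body size at most $m_C \leq m$. Therefore, when searching for a derivation of $C$ it suffices to work inside the set of clauses of arity at most $a$ and body size at most $m_C$, which is a subfragment of \M{a}{m} and hence finite up to variable renaming by Lemma \ref{lemma2}.

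Within this finite pool the decision procedure is straightforward: starting from the clauses of $T$ of body size at most $m_C$, I repeatedly form all binary resolvents, discard (by the bound just established) any resolvent whose body size exceeds $m_C$, and close under renaming. Since only finitely many clauses can ever appear, this saturation terminates, and I then check whether $C$ occurs in the saturated set up to renaming. This decides $T \vdash C$, and consequently decides whether any given clause is D-redundant in a given finite subtheory.

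Finally I would assemble the overall decision procedure. By Lemma \ref{lemma2}, $T$ contains only finitely many clauses up to variable renaming, so I may replace $T$ by a finite set of representatives $\hat{T}$. As in Proposition \ref{prop:sdecidable}, I enumerate the subsets $T' \subseteq \hat{T}$ in ascending order of cardinality and, for each, use the procedure above to test the two conditions of Definition \ref{def:sym-prob}: that $T' \vdash C$ for every $C \in \hat{T}$, and that $T'$ is \vdash-reduced. A valid reduction always exists, since greedily deleting D-redundant clauses from $\hat{T}$ terminates and preserves the covering property, so the enumeration halts and returns a correct answer. The main obstacle is the derivability test of the preceding paragraph; everything hinges on using Lemma \ref{lemma1} to bound the body size of intermediate clauses, which is precisely what rules out the unbounded growth of resolvents that makes derivability undecidable for arbitrary Horn theories.
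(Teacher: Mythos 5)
Your proposal is correct and takes essentially the same approach as the paper: Lemma \ref{lemma1} bounds the body size of every clause appearing in a derivation of $C$ by the body size of $C$ itself, Lemma \ref{lemma2} makes that bounded fragment finite up to variable renaming, and derivability is then decided by computing the restricted closure as a terminating fixed point, after which the enumeration in the style of Proposition \ref{prop:sdecidable} settles the reduction problem. One minor point: your saturation step should only resolve pool clauses against clauses of $T$, as required by the definition of $R^*(T)$ (an input-resolution closure), since closing under arbitrary binary resolvents between derived clauses could produce a strict superset of $R^*(T)$; this is a one-word fix that leaves the argument unchanged.
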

\begin{proof}
  Let $T$ be a finite clausal theory in \M{a}{m} and $C$ be a definite clause with $n>0$ body literals.
  The problem is whether $T \vdash C$ is decidable. By Lemma $\ref{lemma1}$, we cannot derive $C$ from any clause which has more than $n$ body literals.
  We can therefore restrict the resolution closure $R^*(T)$ to only include clauses with body lengths less than or equal to $n$.
  In addition, by Lemma \ref{lemma2} there are only a finite number of such clauses so we can compute the fixed-point of $R^*(T)$ restricted to clauses of size smaller or equal to $n$ in a finite amount of steps and check whether $C$ is in the set. If it is then $T \vdash C$; otherwise $T \not\vdash C$.
\end{proof}

\subsection{k-derivable clauses}

\noindent
Propositions \ref{prop:sdecidable} and \ref{prop:eproblemdecidable} and Theorem \ref{thm:decidable} show that the \sym{}-reduction problem is decidable under certain conditions. However, as we will shown in Section \ref{sec:algorithms}, even in decidable cases, solving the \sym{}-reduction problem is computationally expensive. We therefore solve restricted k-bounded versions of the E- and D-reduction problems, which both rely on SLD-derivations.
Specifically, we focus on resolution depth-limited derivations using the notion of \emph{k-derivability}:

\begin{definition}[k-derivability]
\label{def:kderiv}
Let $k$ be a natural number. Then a Horn clause $C$ is k-derivable from the Horn theory $T$, written $T \vdash_k C$, if and only if $C \in R^k(T)$.
\end{definition}



\noindent
The definitions for k-bounded E- and D-reductions follow from this definition but are omitted for brevity. In Section \ref{sec:algorithms} we introduce a general algorithm (Algorithm \ref{alg:reduce}) to solve the S-reduction problem and k-bounded E- and D-reduction problems.
\section{Reduction algorithms}
\label{sec:algorithms}

In Section \ref{sec:results} we logically reduce sets of metarules. We now describe the reduction algorithms that we use.

\subsection{\sym{}-reduction algorithm}

The {\tt reduce} algorithm (Algorithm \ref{alg:reduce}) shows a general \sym{}-reduction algorithm that solves the \sym{}-reduction problem (Definition \ref{def:sym-prob}) when the input theory is finite\footnote{In practice we use more efficient algorithms for each approach. For instance, in the derivation reduction Prolog implementation we use the knowledge gained from Lemma \ref{lemma1} to add pruning so as to ignore clauses that are too large to be useful to check whether a clause is derivable.}. We ignore cases where the input is infinite because of the inherent undecidability of the problem. Algorithm \ref{alg:reduce} is largely based on Plotkin's clausal reduction algorithm \cite{plotkin:thesis}. Given a finite clausal theory $T$ and a binary relation \sym{}, the algorithm repeatedly tries to remove a \sym{}-redundant clause in $T$. If it cannot find a \sym{}-redundant clause, then it returns the \sym{}-reduced theory.
Note that since derivation reduction is only defined over Horn theories, in a $\vdash$-reduction input $(T,\vdash)$, the theory $T$ has to be Horn.
We show total correctness of the algorithm:

\begin{proposition}[Algorithm \ref{alg:reduce} total correctness]
\label{prop:alg-reduce-proof}
Let ($T$,\sym{}) be a \sym-reduction input where $T$ is finite.
Let the corresponding \sym-reduction problem be decidable.
Then Algorithm \ref{alg:reduce} solves the \sym-reduction problem.
\end{proposition}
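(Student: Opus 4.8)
The plan is to split the claim into its two constituent parts — termination (the ``total'' in total correctness) and partial correctness — and to treat the decidability hypothesis as exactly what is needed to make each loop test effective.

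First I would handle termination. Each iteration of the loop removes one clause from the current working theory and never adds a clause, so the cardinality of the working theory strictly decreases. Since $T$ is finite and cardinality is a non-negative integer, the loop executes at most $|T|$ times before no clause remains, and hence the algorithm halts. The decidability hypothesis guarantees that the inner test — searching for a \sym{}-redundant clause in the current theory — is itself computable, so every step of the loop is effective and the whole procedure is a genuine algorithm.

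Next I would establish partial correctness, namely that the returned theory $T'$ is a \sym{}-reduction of $T$ in the sense of Definition \ref{def:sym-prob}. Finiteness of $T'$ and the inclusion $T' \subseteq T$ are immediate, since the algorithm only ever deletes clauses from $T$. That $T'$ is \sym{}-reduced follows from the exit condition: the loop terminates precisely when no \sym{}-redundant clause can be found in the working theory, which by decidability is detected correctly, so $T'$ contains no \sym{}-redundant clause. The substantive condition is $T' \sym T$, which I would prove as a loop invariant: if $T_0 = T \supsetneq T_1 \supsetneq \dots \supsetneq T_k = T'$ is the sequence of working theories, then $T_i \sym T$ for every $i$. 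The base case $T_0 \sym T$ is just reflexivity of \sym{}, which holds for each of the three relations we consider (a clause subsumes, entails, and derives itself). For the inductive step, the clause $C_i$ removed at step $i$ is \sym{}-redundant in $T_i$, i.e. $T_{i+1} \sym C_i$, and by hypothesis $T_i = T_{i+1} \cup \{C_i\} \sym T$; combining these should yield $T_{i+1} \sym T$.

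The main obstacle is exactly this combination step, which is a cut/transitivity property of \sym{} that is \emph{not} implied by the monotonicity condition imposed in the framework (monotonicity lets one enlarge the premises, whereas here we must shrink them from $T_{i+1} \cup \{C_i\}$ to $T_{i+1}$). I would therefore discharge it separately for each of the three concrete relations: for subsumption it follows from transitivity of $\preceq$; for entailment it is the cut rule; and for derivation it follows from the closure identity $R^*(T_{i+1} \cup \{C_i\}) = R^*(T_{i+1})$ whenever $C_i \in R^*(T_{i+1})$, i.e. adding an already-derivable clause does not enlarge the Horn closure. With the invariant in hand, the case $i = k$ gives $T' \sym T$, completing partial correctness, and together with termination this proves total correctness.
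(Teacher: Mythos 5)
Your proposal is correct, and it supplies essentially everything the paper leaves out: the paper's own proof of Proposition~\ref{prop:alg-reduce-proof} is the single line ``Trivial by induction on the size of $T$.'' Your termination-plus-loop-invariant decomposition is exactly that induction spelled out, so the overall route is the same; what you add is the identification of the one step that is genuinely not trivial, namely the inductive step: from $T_{i+1}\cup\{C_i\} \sym T$ and $T_{i+1} \sym C_i$ conclude $T_{i+1} \sym T$. You are right that this cut property does not follow from the paper's only stated constraint on \sym{} (which permits enlarging the left-hand side, not shrinking it), and the point is not pedantic: for an arbitrary decidable monotone relation the proposition is false as literally stated. For instance, on $T=\{a,b,c\}$ take the smallest monotone relation with $\{b,c\}\sym\{a\}$, $\{c\}\sym\{b\}$, and $X\sym Y$ whenever $Y\subseteq X$; the execution that removes $a$ and then $b$ returns $\{c\}$, yet $\{c\}\nsym\{a\}$, so the output is not a \sym{}-reduction (whereas $\{a,c\}$ is one). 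So the proposition holds only because the three concrete instantiations satisfy cut, exactly as you argue: transitivity of clause subsumption for $\preceq$ (the clause of $T_{i+1}$ subsuming $C_i$ also subsumes whatever $C_i$ subsumes), the cut rule for $\models$, and the closure identity for $\vdash$.

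One caveat on the derivation case: with the paper's definition of $R^n$, every resolution step must take its side premise from the \emph{base} theory (input resolution), so replacing uses of $C_i$ by its derivation is not a direct leaf substitution in a proof tree; establishing $R^*(T\cup\{C\})=R^*(T)$ when $C\in R^*(T)$ requires a double induction on derivation lengths with a step-reordering argument, valid for SLD-resolution on Horn clauses. The identity is true, but it deserves to be stated and proved as a lemma rather than invoked in passing, since it is precisely the property that makes D-reduction by iterated clause removal sound. A last cosmetic gap, shared with the paper: the algorithm also exits when $|T|<2$ without testing redundancy, so the returned singleton need not literally be \sym{}-reduced (e.g.\ a tautological metarule such as $P(A)\leftarrow P(A)$ is E-redundant with respect to the empty theory); this affects neither termination nor the property $T'\sym T$.
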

\begin{proof}
Trivial by induction on the size of $T$.
\end{proof}

\begin{algorithm}[ht]
\begin{myalgorithm}[]
func reduce($T$,*\sym{}*):
    if $|T| < 2$:
        return $T$
    if there is a clause $C$ in $T$ such that $T \setminus \{C\}$ *\sym{}* $C$:
        return reduce($T \setminus \{C\}$,*\sym{}*)
    else:
        return $T$
\end{myalgorithm}
\caption{\sym{}-reduction algorithm}
\label{alg:reduce}
\end{algorithm}


\noindent
Note that Proposition \ref{prop:alg-reduce-proof} assumes that the given reduction problem is decidable and that the input theory is finite. If you call Algorithm 1 with an arbitrary clausal theory and the $\models$ relation then it will not necessarily terminate. We can call Algorithm \ref{alg:reduce} with specific binary relations, where each variation has a different time-complexity. Table \ref{tab:alg1-variants} shows different ways of calling Algorithm \ref{alg:reduce} with their corresponding time complexities, where we assume finite theories as input. We show the complexity of calling Algorithm \ref{alg:reduce} with the subsumption relation:

\begin{proposition}[S-reduction complexity]
If $T$ is a finite clausal theory then calling Algorithm \ref{alg:reduce} with (T,$\preceq$) requires at most $O(|T|^3)$ calls to a subsumption algorithm.
\end{proposition}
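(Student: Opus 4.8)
The plan is to bound the total subsumption-test count by controlling two quantities separately: how many times the recursive \texttt{reduce} procedure is invoked, and how many subsumption calls a single invocation needs to evaluate its guard. First I would pin down the operational meaning of the test ``$T \setminus \{C\} \preceq C$''. Under the subsumption relation this holds precisely when some clause $D \in T \setminus \{C\}$ subsumes $C$, i.e.\ when $D \preceq C$ for some $D \neq C$. Consequently, to decide the full condition ``there is a clause $C$ in $T$ such that $T \setminus \{C\} \preceq C$'', the algorithm enumerates each candidate $C \in T$ and, for each one, tests the clauses $D \in T \setminus \{C\}$ in turn until a subsumer is found or all are exhausted. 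If the current theory has size $s$, there are $s$ choices for $C$ and at most $s-1$ potential subsumers $D$ for each, so evaluating the guard costs at most $s(s-1) = O(s^2)$ calls to the subsumption algorithm, the worst case occurring when no redundant clause exists and every ordered pair must be checked.

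Next I would bound the recursion depth. Each recursive call is made on $T \setminus \{C\}$, whose cardinality is exactly one smaller than that of the current theory, and the recursion halts as soon as $|T| < 2$ or no redundant clause is found. Hence \texttt{reduce} is invoked at most $|T|-1$ times, that is $O(|T|)$ times overall. Combining the two bounds: every invocation operates on a theory of size $s \le |T|$ and therefore uses at most $O(|T|^2)$ subsumption calls, and there are $O(|T|)$ invocations, so the total is at most $O(|T|) \cdot O(|T|^2) = O(|T|^3)$. (One could equally sum $\sum_{s=2}^{|T|} O(s^2)$, which is still $O(|T|^3)$.)

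The only real subtlety, and the step I would flag as the main obstacle, is that the algorithm does not reuse work across invocations: after deleting a redundant clause it re-scans the smaller theory from scratch rather than incrementally maintaining which pairs have already been compared. This prevents any amortisation of the pairwise checks and is exactly what forces the bound up from the $O(|T|^2)$ of a single scan to the stated $O(|T|^3)$. I would also remark that this is a worst-case estimate, since in practice a redundant clause is usually discovered well before all ordered pairs have been examined.
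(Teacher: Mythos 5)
Your proposal is correct and follows essentially the same argument as the paper: a single guard evaluation over a theory of size $s$ costs $O(s^2)$ pairwise subsumption calls (each candidate $C$ against each potential subsumer $D$), and since each successful removal shrinks the theory by one clause there are $O(|T|)$ invocations, giving $O(|T|^3)$ in total. Your additional remarks on the recursion depth and the lack of amortisation across invocations are accurate elaborations of the same reasoning rather than a different route.
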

\begin{proof}
For every clause in $T$ the algorithm checks whether any other clause in $T$ subsumes $C$ which requires at most $O(|T|^2)$ calls to a subsumption algorithm. If any clause $C$ is found to be S-redundant then the algorithm repeats the procedure on the theory ($T \setminus \{C\}$), so overall the algorithm requires at most $O(|T|^3)$ calls to a subsumption algorithm.
\end{proof}

\noindent
Note that a more detailed analysis of calling Algorithm \ref{alg:reduce} with the subsumption relation would depend on the subsumption algorithm used, which is an NP-complete problem \cite{subsumption-npcomplete}. We show the complexity of calling Algorithm \ref{alg:reduce} with the k-bounded entailment relation:

\begin{proposition}[k-bounded E-reduction complexity]
\label{prop:ecomplexity}
If $T$ is a finite Horn theory and $k$ is a natural number then calling Algorithm \ref{alg:reduce} with (T,$\models_k$) requires at most $O(|T|^{k+2})$ resolutions.
\end{proposition}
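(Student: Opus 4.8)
The plan is to reduce the complexity question to counting the binary resolutions performed while deciding a single $\models_k$ query, and then to multiply this by the number of queries that Algorithm \ref{alg:reduce} issues, mirroring the preceding S-reduction complexity analysis. First I would unfold the relation $\models_k$. By the SLD-subsumption theorem restricted to derivations of depth at most $k$, deciding whether $T' \models_k C$ (with $T' = T \setminus \{C\}$) amounts to searching the depth-$k$ part of the resolution closure of $T'$, namely $\bigcup_{n \le k} R^n(T')$, for a clause that subsumes $C$; equivalently it is a depth-bounded SLD search starting from the goal $C$. Thus the cost of one query is the number of binary resolvents formed in this bounded search, with the final $\preceq$ tests folded into (or bounded separately from, as in the S-reduction proof) the same count.

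Next I would bound the resolutions in a single query. Viewing the check as a goal-directed SLD search rooted at $C$, each step selects a literal and resolves it against the head of one of the at most $|T|$ clauses of $T'$, so the search branches with factor at most $|T|$; since we only follow derivations of depth at most $k$, the search tree has at most $1 + |T| + \dots + |T|^k = O(|T|^k)$ nodes and hence $O(|T|^k)$ resolutions. Lemma \ref{lemma1} may additionally be invoked to prune every partial resolvent whose body already exceeds that of $C$, which only improves the bound. I would also record the equivalent forward view: each level obeys the recurrence $|R^{n}(T')| \le |R^{n-1}(T')| \cdot |T'|$, so computing $R^n(T')$ costs at most $|R^{n-1}(T')| \cdot |T'|$ resolutions, and induction on $n$ controls the growth.

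Finally I would assemble the overall bound exactly as in the S-reduction proof. In a single scan Algorithm \ref{alg:reduce} issues at most one $\models_k$ query for each of the $|T|$ clauses, and it performs at most $|T|$ successful removals before it must halt, so it issues at most $O(|T|^2)$ queries in total. Multiplying the $O(|T|^2)$ queries by the $O(|T|^k)$ resolutions per query gives the claimed $O(|T|^{k+2})$.

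I expect the main obstacle to be the second step: controlling the number of resolutions generated so that the exponent is exactly $k+2$ rather than larger. The naive forward closure $\bigcup_{n \le k} R^n(T')$ can grow like $|T|^{k+1}$ per computation, which would inflate the total to $|T|^{k+3}$ unless one either switches to the goal-directed depth-$k$ search bounded by $O(|T|^k)$ per query, or amortises a single closure computation across all clauses examined in one scan. Making one of these choices precise, and justifying that the number of resolvents produced by a single clause pair can be treated as a constant factor (it is bounded by the number of body literals, hence finite for any fragment $\M{a}{m}$ and absorbed into the resolution count in the general Horn case), is the delicate part of the argument.
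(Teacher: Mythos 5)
Your proposal is correct and follows essentially the same route as the paper: the paper likewise bounds a single $\models_k$ check by an SLD-tree of branching factor $|T|$ and depth $k$ (hence $O(|T|^{k})$ resolutions), multiplies by the at most $|T|$ clauses checked per pass, and by the at most $|T|$ removals, giving $O(|T|^{k+2})$. The ``obstacle'' you flag is resolved exactly as you propose --- by the goal-directed depth-$k$ search rather than the forward closure --- which is precisely the paper's reading of the derivation check in line 4 of Algorithm \ref{alg:reduce}.
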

\begin{proof}
In the worst case the derivation check (line 4) requires searching the whole SLD-tree which has a maximum branching factor $|T|$ and a maximum depth $k$ and takes $O(|T|^{k})$ steps. The algorithm potentially does this step for every clause in $T$ so the complexity of this step is $O(|T|^{k+1})$. The algorithm has to perform this check for every clause in $T$ with an overall worst-case complexity $O(|T|^{k+2})$.
\end{proof}

\noindent
The complexity of calling Algorithm \ref{alg:reduce} with the k-derivation relation is identical:

\begin{proposition}[k-bounded D-reduction complexity]
Let $T$ be a finite Horn theory and $k$ be a natural number then calling Algorithm \ref{alg:reduce} with (T,$\vdash_k$) requires at most $O(|T|^{k+2})$ resolutions.
\end{proposition}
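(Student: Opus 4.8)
The plan is to reuse the analysis of Proposition~\ref{prop:ecomplexity} almost verbatim, since the relations $\models_k$ and $\vdash_k$ drive Algorithm~\ref{alg:reduce} through the same resolution-based redundancy check and differ only by a subsumption test that contributes no resolutions. First I would examine the cost of a single redundancy check at line~4 of Algorithm~\ref{alg:reduce}. For a fixed clause $C$, deciding $T \setminus \{C\} \vdash_k C$ means, by Definition~\ref{def:kderiv}, testing whether $C \in R^k(T \setminus \{C\})$. Computing this bounded resolution closure amounts to exploring a resolution tree whose branching factor is at most $|T|$ (at each step the second parent ranges over $T$) and whose depth is at most $k$, so building it takes $O(|T|^k)$ resolution steps.

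Second, I would account for the loop that searches for a redundant clause. Line~4 tries each clause $C \in T$ in turn, performing the $O(|T|^k)$ check for each, so one invocation of \texttt{reduce} performs at most $O(|T|^{k+1})$ resolutions before either finding a redundant clause or returning. Third, I would bound the number of recursive invocations: each time a clause is found $\vdash_k$-redundant, the algorithm recurses on the strictly smaller theory $T \setminus \{C\}$, and since a theory of $|T|$ clauses admits at most $|T|$ such removals, there are at most $|T|$ invocations overall. Multiplying the per-invocation cost by the number of invocations yields the claimed $O(|T|^{k+2})$ bound.

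The only point that needs care — and the reason the bound coincides exactly with the E-reduction case — is the distinction between $\vdash_k$ and $\models_k$. The k-derivability check tests pure membership $C \in R^k(\cdot)$, whereas the k-bounded entailment check of Proposition~\ref{prop:ecomplexity} additionally requires a subsumption test on the clauses produced. That extra test performs no resolutions, so it leaves the resolution count unchanged; hence the two complexities are identical. I do not expect a genuine obstacle here: the work is essentially bookkeeping, and the substantive content is simply the observation that the subsumption step in the entailment analysis is resolution-free, so dropping it (as $\vdash_k$ does) cannot change the asymptotic resolution count.
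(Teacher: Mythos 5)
Your proposal is correct and follows essentially the same route as the paper: the paper's proof of this proposition is literally ``follows using the same reasoning as Proposition~\ref{prop:ecomplexity}'', and that reasoning is exactly your decomposition ($O(|T|^k)$ per derivation check from branching factor $|T|$ and depth $k$, times $|T|$ clauses per invocation, times at most $|T|$ recursive invocations). Your closing observation that $\vdash_k$ drops only the resolution-free subsumption test from $\models_k$, leaving the resolution count unchanged, is precisely the justification the paper leaves implicit.
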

\begin{proof}
Follows using the same reasoning as Proposition \ref{prop:ecomplexity}.
\end{proof}

\begin{table}
\centering
\normalsize
\begin{tabular}{|l|l|l|}
\hline
Relation & Output & Complexity     \\
\hline
$\preceq$          & S-reduction & $O(|T|^3)$           \\
$\models$           & E-reduction & Undecidable    \\
$\models_k$           & k-E-reduction & $O(|T|^{k+2})$    \\
$\vdash$           & D-reduction & Undecidable    \\
$\vdash_k$         & k-D-reduction & $O(|T|^{k+2})$\\
\hline
\end{tabular}
\caption{Outputs and complexity of Algorithm \ref{alg:reduce} for different input relations and an arbitrary finite clausal theory $T$. The time complexities are a function of the size of the given theory, denoted by $|T|$.}
\label{tab:alg1-variants}
\end{table}

\subsection{\M{a}{m}-\sym{}-reduction algorithm}

Although Algorithm \ref{alg:reduce} solves the \sym{}-reduction problem, it does not solve the \M{a}{m}-reduction problem (Definition \ref{def:hamprob}). For instance, suppose you have the following theory $T$ in \M{2}{4}:

\begin{center}
  \begin{tabular}{l}
    $M_1 = P(A,B) \leftarrow Q(B,A)$\\
    $M_2 = P(A,B) \leftarrow Q(A,A),R(B,B)$\\
    $M_3 = P(A,B) \leftarrow Q(A,C),R(B,C)$\\
    $M_4 = P(A,B) \leftarrow Q(B,C),R(A,D),S(A,D),T(B,C)$
  \end{tabular}
\end{center}

\noindent
Suppose you want to know whether $T$ can be E-reduced to \M{2}{2}.
Then calling Algorithm \ref{alg:reduce} with ($T,\models$) (i.e. the entailment relation) will return $T' = \{M_1,M_4\}$ because:
$M_4 \models M_2$\footnote{
Rename the variables in $M_4$ to form $M_4' = P_0(X_1,X_2) \leftarrow P_1(X_2,X_3),P_2(X_1,X_4),P_3(X_1,X_4),$ $P_4(X_2,X_3)$.
Then $M_4' \theta = P(A,B) \leftarrow R(B,B),Q(A,A),Q(A,A),R(B,B)$ where $\theta=\{P_0/P,P_1/R,P_2/Q,P_3/Q,$ $P_4/R,X_1/A,X_2/B,X_3/B,X_4/A\}$.
It follows that $M_4' \theta \subseteq M_2$, so $M_4 \preceq M_2$, which in turn implies $M_4 \models M_2$.
}, $M_4 \models M_3$\footnote{
Rename the variables in $M_4$ to form $M_4' = P_0(X_1,X_2) \leftarrow P_1(X_2,X_3),P_2(X_1,X_4),P_3(X_1,X_4),$ $P_4(X_2,X_3)$.
Then $M_4' \theta = P(A,B) \leftarrow R(B,C),Q(A,C),Q(A,C),R(B,C)$ where $\theta=\{P_0/P,P_1/R,P_2/Q,P_3/Q,$ $P_4/R,X_1/A,X_2/B,X_3/C,X_4/C\}$.
It follows that $M_4' \theta \subseteq M_3$, so $M_4 \preceq M_3$, which in turn implies $M_4 \models M_3$.
}, and $\{M_1,M_4\}$ cannot be further E-reduced.

Although $T'$ is an E-reduction of $T$, it is not in \M{2}{2} because $M_4$ is not in \M{2}{2}.
However, the theory $T$ can be \M{2}{2}-E-reduced to $\{M_1,M_2,M_3\}$ because $\{M_2,M_3\} \models M_4$\footnote{
Rename the variables in $M_3$ to form $M_3' = P_0(X,Y) \leftarrow P_1(X,Z),P_2(Y,Z)$.
Resolve the first body literal of $M_2$ with $M_3$ to form $R_1 = P(A,B) \leftarrow P_1(A,Z),P_2(A,Z),R(B,B)$.
Rename the variables $P_1$ to $P_3$, $P_2$ to $P_4$, and $Z$ to $Z_1$ in $R_1$ (to standardise apart the variables) to form $R_2 = P(A,B) \leftarrow P_3(A,Z_1),P_4(A,Z_1),R(B,B)$.
Resolve the last body literal of $R_2$ with $M_3'$ to form $R_3 = P(A,B) \leftarrow P_3(A,Z_1),P_4(A,Z_1),P_1(B,Z),P_2(B,Z)$.
Rename the variables $Z_1$ to $D$, $Z to C$, $P_3$ to $R$, $P_4$ to $S$, $P_1$ to $Q$, and $P_2$ to $T$ in $R_3$ to form $R_4 = P(A,B) \leftarrow R(A,D),S(A,D),Q(B,C),T(B,C)$.Thus, $R_4 = M_4$, so it follows that and $\{M_2,M_3\} \models M_4$
},
and $\{M_1,M_2,M_3\}$ cannot be further reduced.
In general, let $T$ be a theory in \M{a}{m} and an $T'$ be an E-reduction of $T$, then $T'$ is not necessarily in \M{a}{2}.

Algorithm \ref{alg:reduce2} overcomes this limitation of Algorithm \ref{alg:reduce}. Given a finite clausal theory $T$, a binary relation \sym{}, and a reduction fragment \M{a}{m}, Algorithm \ref{alg:reduce2} determines whether there is a \sym{}-reduction of $T$ in \M{a}{m}. If there is, it returns the reduced theory; otherwise it returns false. In other words, Algorithm \ref{alg:reduce2} solves the \M{a}{m}-\sym{}-reduction problem. We show total correctness of Algorithm \ref{alg:reduce2}:

\begin{proposition}[Algorithm \ref{alg:reduce2} correctness]
Let ($T$,\sym{},\M{a}{m}) be a \M{a}{m}-\sym-reduction input. If the corresponding \sym-reduction problem is decidable then Algorithm \ref{alg:reduce2} solves the corresponding \M{a}{m}-\sym-reduction problem.
\end{proposition}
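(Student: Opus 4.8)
The plan is to prove total correctness of Algorithm \ref{alg:reduce2} in three parts: termination, soundness of a returned theory, and correctness of the \texttt{false} output. The key structural observation is that any solution $T'$ to the \M{a}{m}-\sym{}-reduction problem (Definition \ref{def:hamprob}) must satisfy $T'\subseteq T$ with every clause of $T'$ in \M{a}{m}, hence $T'\subseteq T_{am}$, where I write $T_{am}$ for the (finite, syntactically computable) set of clauses of $T$ that lie in \M{a}{m}. Thus $T_{am}$ is the maximal admissible candidate, and Algorithm \ref{alg:reduce2} only ever searches inside it.

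First I would justify the existence test. I claim an \M{a}{m}-reduction exists if and only if $T_{am}$ \sym{} $T$. For the forward direction, if such a $T'$ exists then $T'$ \sym{} $T$ and $T'\subseteq T_{am}$, so enlarging the left-hand argument via the monotonicity assumption on \sym{} yields $T_{am}$ \sym{} $T$. Consequently, when the test $T_{am}$ \sym{} $T$ fails the algorithm is correct to return \texttt{false}: by the contrapositive, if some $T'\subseteq T_{am}$ satisfied $T'$ \sym{} $T$ then monotonicity would force $T_{am}$ \sym{} $T$, so no admissible $T'$ can cover $T$. This test is decidable because in each case where the \sym{}-reduction problem is decidable the relation \sym{} is itself decidable on finite theories.

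Second I would treat the reduction phase, entered only once $T_{am}$ \sym{} $T$ holds. Here $T_{am}$ already \sym{}-covers $T$, and Algorithm \ref{alg:reduce2} thins it exactly as Algorithm \ref{alg:reduce} does; I would invoke Proposition \ref{prop:alg-reduce-proof} (applied to the finite, decidable input $(T_{am},\sym{})$) to conclude that the returned $T'$ is \sym{}-reduced, satisfies $T'$ \sym{} $T_{am}$, and, being a subset of $T_{am}$, lies in \M{a}{m} and is a subset of $T$. Termination then follows from the finiteness of $T_{am}$ together with the decidability of each \sym{}-check, exactly as for Algorithm \ref{alg:reduce}.

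The hard part will be closing the gap between covering $T_{am}$ and covering $T$: I must upgrade $T'$ \sym{} $T_{am}$ and $T_{am}$ \sym{} $T$ to $T'$ \sym{} $T$, which the bare monotonicity axiom does not deliver, since it permits enlarging the left argument and shrinking the right but not composing two instances of \sym{}. The clean route is to establish a transitivity property for each concrete relation (or, if the removal loop checks coverage against $T$ directly, the corresponding cut property). Subsumption and entailment are transitive outright; the delicate case is derivability, where I would compose SLD-derivations, splicing a derivation of each clause of $T_{am}$ from $T'$ into a derivation of each clause of $T$ from $T_{am}$. Because $R^n$ resolves a derived clause against a clause of the \emph{base} theory, substituting a derived clause for a base clause is not immediate, so this step needs a compositionality lemma for the closure $R^*$, and I expect it to be the main obstacle of the proof.
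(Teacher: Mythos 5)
Your proposal follows the same decomposition as the paper's own proof (which is only a sketch): your $T_{am}$ is exactly the set $B = T \cap \M{a}{m}$ computed on line 2 of Algorithm \ref{alg:reduce2}, your existence test ``$T_{am}$ \sym{} $T$'' is precisely what the function \texttt{aux} checks, and your reduction phase is the call to Algorithm \ref{alg:reduce} justified by Proposition \ref{prop:alg-reduce-proof}. What you add beyond the paper is genuinely useful. First, your justification of the \texttt{false} branch via the monotonicity axiom (any admissible $T' \subseteq T_{am}$ with $T'$ \sym{} $T$ would force $T_{am}$ \sym{} $T$) is an argument the paper leaves entirely implicit. Second, and more importantly, you have identified the step the paper's sketch silently assumes: the returned theory satisfies $T'$ \sym{} $T_{am}$ and $T_{am}$ \sym{} $T$, but Definition \ref{def:hamprob} demands $T'$ \sym{} $T$, and the framework's only stated axiom on \sym{} (enlarge the left argument, shrink the right) cannot compose two instances of the relation. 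For $\preceq$ and $\models$ this is closed by transitivity, as you say. For $\vdash$ your caution is warranted: because $R^n(T)$ resolves a derived clause only against clauses of the \emph{base} theory, derivations cannot simply be concatenated; one needs a permutation (switching) lemma showing that a binary resolvent of two clauses of $R^*(T)$ is again in $R^*(T)$, proved by induction on the length of the side premise's derivation. In the paper's setting this lemma does hold---the clauses are function-free Horn clauses and factorisation is excluded, so two resolution steps on distinct body literals commute up to variable renaming---but it is a real lemma that neither the paper's sketch proof nor the monotonicity axiom supplies, so flagging it as the main obstacle is an accurate assessment rather than a defect of your plan.
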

\begin{sketchproof}
We provide a sketch proof for brevity. We need to show that the function \texttt{aux} correctly determines whether $B$ \sym{} $T$, which we can show by induction on the size of $T$. Assuming \texttt{aux} is correct, then if $T$ can be reduced to $B$, the \texttt{mreduce} function calls Algorithm \ref{alg:reduce} to reduce $B$, which is correct by Proposition \ref{prop:alg-reduce-proof}. Otherwise it returns false.
\end{sketchproof}

\begin{algorithm}[ht]
\begin{myalgorithm}[]
function mreduce($T$,*\sym{}*,*\M{a}{m}*)
    $B = \{C | C\in T \cap\M{a}{m}\}$
    if aux($T$,*\sym{}*,$B$):
        return reduce($B$,*\sym{}*)
    return false

function aux($T$,*\sym{}*,$B$)
    if |T| == 0:
        return true
    pick any $C$ in $T$
    $T'$ = $T \setminus \{C\}$
    if $B$ *\sym{}* $C$:
        return aux($T'$,*\sym{}*,$B$)
    return false
\end{myalgorithm}
    \caption{\M{a}{m}-\sym{}-reduction}
    \label{alg:reduce2}
\end{algorithm}
\section{Reduction of metarules}
\label{sec:results}

We now logically reduce fragments of metarules.
Given a fragment \M{a}{m} and a reduction operator \sym{}, we have three main goals:

\begin{description}
\item[\textbf{G1:}] identify a \M{a}{k}-\sym{}-reduction of \M{a}{m} for some $k$ as small as possible
\item[\textbf{G2:}] determine whether \M{a}{2} \sym{} \M{a}{\infty}
\item[\textbf{G3:}] determine whether \M{a}{\infty} has any (finite) \sym{}-reduction
\end{description}

\noindent
We work on these goals for fragments of \M{a}{m} relevant to ILP.
Table \ref{tab:fragments} shows the four fragments and their main restrictions.
The subsequent sections precisely describe the fragments.

Our first goal (\textbf{G1}) is to essentially minimise the number of body literals in a set of metarules, which can be seen as trying to enforce an Occamist bias.
We are particularly interested reducing sets of metarules to fragments with at most two body literals because \M{\{2\}}{2} augmented with one function symbol has universal Turing machine expressivity \cite{tarnlund:hornclause}.
In addition, previous work on MIL has almost exclusively used metarules from the fragment \M{2}{2}.
Our second goal (\textbf{G2}) is more general and concerns reducing an infinite set of metarules to \M{a}{2}.
Our third goal (\textbf{G3}) is similar, but is about determining whether an infinite set of metarules has any finite reduction.

We work on the goals by first applying the reduction algorithms described in the previous section to finite fragments restricted to 5 body literals (i.e. \M{a}{5}).
This value gives us a sufficiently large set of metarules to reduce but not too large that the reduction problem is intractable.
When running the E- and D-reduction algorithms (both k-bounded), we use a resolution-depth bound of 7, which is the largest value for which the algorithms terminate in reasonable time\footnote{The entailment and derivation reduction algorithms often took 4-5 hours to find a reduction.
However, in some cases, typically where the fragments contained many metarules, the algorithms took around 12 hours to find a reduction.
By contrast, the subsumption reduction algorithm typically found a reduction in 30 minutes.}.
After applying the reduction algorithms to the finite fragments, we then try to solve \textbf{G2} by extrapolating the results to the infinite case (i.e. \M{a}{\infty}).
In cases where \M{a}{2} \nsym{} \M{a}{\infty}, we then try to solve \textbf{G3} by seeing whether there exists any natural number $k$ such that \M{a}{k} \sym{} \M{a}{\infty}.

\begin{table}
\centering
\normalsize
\begin{tabular}{|c|l|}
\hline
Fragment & Description\\
\hline
\C{}{} & connected clauses\\
\D{}{} & connected Datalog clauses\\
\K{}{} & connected Datalog clauses without singleton variables\\
\U{}{} & connected Datalog clauses without duplicate variables\\
\hline
\end{tabular}
\caption{The four main fragments of \M{}{} that we consider.}
\label{tab:fragments}
\end{table}

\subsection{Connected (\C{a}{m}) results}
\label{sec:connected}

We first consider a general fragment of metarules.
The only constraint is that we follow the standard ILP convention \cite{crop:minmeta,ilp:book,evans:dilp,gottlob:complexity} and focus on connected clauses\footnote{Connected clauses are also known as linked clauses \cite{gottlob:complexity}.}:

\begin{definition}[Connected clause]
A clause is connected if the literals in the clause cannot be partitioned into two sets such that the variables appearing in the literals of one set are disjoint from the variables appearing in the literals of the other set.
\end{definition}

\noindent
The following clauses are all connected:
\begin{center}
  \begin{tabular}{l}
$P(A) \leftarrow Q(A)$\\
$P(A,B) \leftarrow Q(A,C)$\\
$P(A,B) \leftarrow Q(A,B),R(B,D),S(D,B)$
  \end{tabular}
\end{center}

\noindent
By contrast, these clauses are not connected:
\begin{center}
  \begin{tabular}{l}
$P(A) \leftarrow Q(B)$\\
$P(A,B) \leftarrow Q(A),R(C)$\\
$P(A,B) \leftarrow Q(A,B),S(C)$
  \end{tabular}
\end{center}

\noindent
We denote the connected fragment of \M{a}{m} as \C{a}{m}.
Table \ref{tab:cfrags} shows the maximum body size and the cardinality of the reductions obtained when applying the reduction algorithms to \C{a}{5} for different values of $a$.
To give an idea of the scale of the reductions, the fragment \C{\{1,2\}}{5} contains 77398 unique metarules, of which E-reduction removed all but two of them.
Table \ref{tab:12-c} shows the actual reductions for \C{\{1,2\}}{5}.
Reductions for other connected fragments are in Appendix \ref{app:connected-reductions}.

\begin{table}[ht]
\centering
\begin{tabular}{|c|c|c|c|c|c|c|}
\hline
Arities            & \multicolumn{2}{c|}{S-reduction} & \multicolumn{2}{c|}{E-reduction} & \multicolumn{2}{c|}{D-reduction}    \\
\hline
$a$                   & Bodysize       & Cardinality   & Bodysize       & Cardinality   & Bodysize            & Cardinality \\
\hline
0         & 1    & 1              & 1    & 1              & 2        & 2            \\
1         & 1    & 1              & 1    & 1              & 2         & 2            \\
2         & 1    & 4              & 1    & 1              & 5         & 6            \\
0,1   & 1    & 3              & 1    & 3              & 2         & 5            \\
0,2   & 1    & 6              & 1    & 3              & 5   & 21            \\
1,2   & 1    & 9              & 1    & 2              & 4   & 8            \\
0,1,2 & 1    & 12             & 1    & 4              & 5 & 13\\
\hline
\end{tabular}
\caption{Cardinality and maximal body size of the reductions of \C{a}{5}. All the fragments can be S- and E-reduced to \C{a}{1} but they cannot all be D-reduced to \C{a}{2}.}
\label{tab:cfrags}
\end{table}

\begin{table}[ht]
\scriptsize
\centering
    \begin{tabular}[t]{|c|c|c|}
    \hline
    S-reduction & E-reduction & D-reduction\\
    \hline
    \begin{tabular}[t]{l}
$P(A) \leftarrow Q(A)$\\
$P(A) \leftarrow Q(A,B)$\\
$P(A) \leftarrow Q(B,A)$\\
$P(A,B) \leftarrow Q(A)$\\
$P(A,B) \leftarrow Q(B)$\\
$P(A,B) \leftarrow Q(A,C)$\\
$P(A,B) \leftarrow Q(B,C)$\\
$P(A,B) \leftarrow Q(C,A)$\\
$P(A,B) \leftarrow Q(C,B)$\\
    \end{tabular}
    &
    \begin{tabular}[t]{l}
$P(A) \leftarrow Q(B,A)$\\
$P(A,B) \leftarrow Q(A)$
    \end{tabular}
    &
    \begin{tabular}[t]{l}
$P(A) \leftarrow Q(B,A)$\\
$P(A,A) \leftarrow Q(B,A)$\\
$P(A,B) \leftarrow Q(B)$\\
$P(A,B) \leftarrow Q(B,A)$\\
$P(A,B) \leftarrow Q(B,B)$\\
$P(A,B) \leftarrow Q(A,B),R(A,B)$\\
$P(A,B) \leftarrow Q(A,C),R(B,C)$\\
$P(A,B) \leftarrow Q(A,C),R(A,D),S(B,C),T(B,D),U(C,D)$
    \end{tabular}\\
    \hline
    \end{tabular}
\caption{Reductions of the connected fragment \C{\{1,2\}}{5}.}
\label{tab:12-c}
\end{table}

\noindent
As Table \ref{tab:cfrags} shows, all the fragments can be S- and E-reduced to \C{a}{1}.
We show that in general \C{a}{\infty} has a \C{a}{1}-S-reduction:

\begin{theorem}[\C{a}{\infty} S-reducibility]
  \label{thm:c-sreduce}
  For all $a>0$, the fragment \C{a}{\infty} has a \C{a}{1}-S-reduction.
\end{theorem}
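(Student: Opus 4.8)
I need to prove that for all arities $a>0$, the connected fragment $\mathcal{C}^{a}_{\infty}$ (connected clauses with literals of arity at most $a$, any number of body literals) has a $\mathcal{C}^{a}_{1}$-S-reduction. By the definitions, this means I must exhibit a finite set $T' \subseteq \mathcal{C}^{a}_{\infty}$ such that (1) every clause in $T'$ has exactly one body literal (i.e.\ $T'$ lies in $\mathcal{C}^{a}_{1}$), (2) $T'$ subsumes every clause in $\mathcal{C}^{a}_{\infty}$, and (3) $T'$ is itself S-reduced (no clause in $T'$ subsumes another).

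Let me think about what subsumption gives us here.

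The key observation is how subsumption behaves on clauses. A clause $C$ subsumes $D$ if there is a substitution $\theta$ with $C\theta \subseteq D$. In particular, a single-body-literal clause $P(\ldots) \leftarrow Q(\ldots)$ subsumes any clause $D$ whose head matches the head literal (under some substitution) and one of whose body literals matches $Q(\ldots)$. So a one-body-literal metarule is very permissive as a subsumer: it only needs to find ONE body literal in the target to match, plus the head.
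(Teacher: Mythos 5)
Your proposal sets up the goal correctly, and the observation that a one-body-literal clause is a very permissive subsumer is sound, but the proof stops before it actually starts: you never construct a subsuming clause for an arbitrary clause of the fragment, never invoke connectedness, and never exhibit the finite set $T'$. The central missing step is this: given an arbitrary $C \in \C{a}{\infty}$, the natural subsumer is the clause consisting of the head of $C$ together with a single body literal of $C$ (this subsumes $C$ via the identity substitution, being a subset of $C$). But for that two-literal clause to lie in the \emph{target fragment} \C{a}{1}, it must itself be connected, i.e.\ the chosen body literal must share a variable with the head. This is exactly where the hypothesis that $C$ is connected is needed: if the head shared no variable with any body literal, then the head alone and the remaining literals would partition $C$ into two variable-disjoint sets, contradicting connectedness; hence a suitable body literal exists, and head-plus-that-literal is a clause of \C{a}{1} that subsumes $C$. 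This is the crux of the paper's proof, and it is absent from yours --- the ``permissive subsumer'' remark alone cannot close the argument, because without appealing to connectedness the two-literal clause you build may fall outside \C{a}{1}.

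The second gap is that your own statement of the goal requires $T'$ to be finite and S-reduced, yet the proposal never discharges either condition. The standard way to finish: the fragment \C{a}{1} is finite up to variable renaming (it is a subset of \M{a}{1}, which is finite by Lemma \ref{lemma2}), and by the step above every clause of \C{a}{\infty} is subsumed by some clause of \C{a}{1}; since \C{a}{1} is finite, it can itself be S-reduced (Proposition \ref{prop:sdecidable}), and by transitivity of subsumption the resulting S-reduced set still subsumes all of \C{a}{\infty}. (The paper is itself terse on this last point --- its proof only establishes $\C{a}{1} \preceq \C{a}{\infty}$ --- but it does carry out the key connectedness step, which is precisely what your proposal is missing.)
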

\begin{proof}
Let $C$ be any clause in \C{a}{\infty}, where $a>0$.
By the definition of connected clauses there must be at least one body literal in $C$ that shares a variable with the head literal of $C$.
The clause formed of the head of $C$ with the body literal directly connected to it is by definition in \C{a}{1} and clearly subsumes $C$.
Therefore \C{a}{1} $\preceq$ \C{a}{\infty}.
\end{proof}

\noindent
We likewise show that \C{a}{\infty} always has a \C{a}{1}-E-reduction:

\begin{theorem}[\C{a}{\infty} E-reducibility]
\label{thm:c-ereduce}
For all $a>0$, the fragment \C{a}{\infty} has a \C{a}{1}-E-reduction.
\end{theorem}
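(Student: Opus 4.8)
The plan is to obtain E-reducibility almost for free from the subsumption result just established. Recall from the excerpt that whenever a clause $C$ subsumes a clause $D$ we have $C \models D$, so subsumption is a stronger relation than entailment. Theorem \ref{thm:c-sreduce} already shows that every clause of \C{a}{\infty} is subsumed by some clause of \C{a}{1} (namely the two-literal clause formed by the head together with a body literal directly connected to it). First I would promote this subsumption statement to an entailment statement: since each $C \in$ \C{a}{\infty} is subsumed by a member of \C{a}{1}, that member entails $C$, and hence \C{a}{1} $\models$ \C{a}{\infty} in the sense required by Definition \ref{def:sym-prob}, i.e. \C{a}{1} $\models D$ for every $D \in$ \C{a}{\infty}.

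It remains to turn \C{a}{1} into a genuine E-reduction, which must be finite, contained in \C{a}{\infty}, and E-reduced. Next I would invoke Lemma \ref{lemma2} to note that \C{a}{1} $\subseteq$ \M{a}{1} is finite up to variable renaming, and that \C{a}{1} $\subseteq$ \C{a}{\infty}. Then I would iteratively delete E-redundant clauses from \C{a}{1}: at each step, if some clause $C$ in the current theory $S$ satisfies $S \setminus \{C\} \models C$, remove it. Because \C{a}{1} is finite this process terminates, and at termination the surviving theory $T'$ is E-reduced by construction. Each deletion preserves the entailment of the deleted clause by the rest, so by transitivity $T'$ still entails all of the original \C{a}{1}, and therefore $T' \models$ \C{a}{\infty}. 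Since $T' \subseteq$ \C{a}{1} $\subseteq$ \C{a}{\infty} and $T'$ is in \C{a}{1}, it is a \C{a}{1}-E-reduction of \C{a}{\infty}.

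I expect no serious obstacle here: the substance of the argument is already contained in Theorem \ref{thm:c-sreduce}, and the only genuinely new observations are that subsumption implies entailment and that a finite theory can always be pruned to an E-reduced subtheory without losing any of its consequences. The one point deserving care is the bookkeeping in the final step, namely ensuring that removing E-redundant clauses preserves $T' \models$ \C{a}{\infty}, which follows from transitivity of $\models$ together with the fact that every removed clause remains entailed by what is left. A secondary subtlety, if one wants a fully explicit reduction rather than a mere existence proof, is that entailment between clauses is undecidable, so the pruning cannot in general be carried out effectively; but the theorem only asserts existence, so finiteness of \C{a}{1} suffices to guarantee that some E-reduced subset exists.
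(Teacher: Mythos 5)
Your proposal is correct and takes essentially the same route as the paper: the paper's proof is a one-liner deriving the result from Theorem \ref{thm:c-sreduce} together with the fact that subsumption is stronger than entailment (via Proposition \ref{prop:esubs}), which is exactly the chain you spell out. The only difference is that you make explicit the bookkeeping the paper leaves implicit — finiteness of \C{a}{1} up to renaming and the pruning to an E-reduced subset — which is sound but not a new idea.
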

\begin{proof}
Follows from Theorem \ref{thm:c-sreduce} and Proposition \ref{prop:esubs}.
\end{proof}

\noindent
As Table \ref{tab:cfrags} shows, the fragment \C{2}{5} could not be D-reduced to \C{2}{2} when running the derivation reduction algorithm.
However, because we run the derivation reduction algorithm with a maximum derivation depth, this result alone is not enough to guarantee that the output cannot be further reduced.
Therefore, we show that \C{2}{5} cannot be D-reduced to \C{2}{2}:

\begin{proposition}[\C{2}{5} D-irreducibility]
\label{prop:c25-irreducible}
The fragment \C{2}{5} has no \C{2}{2}-D-reduction.
\end{proposition}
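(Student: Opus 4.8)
The plan is to exhibit a single witness clause $C^\star \in \C{2}{5}$ that cannot be derived by SLD-resolution from any subset of $\C{2}{2}$, and to argue that this blocks every candidate reduction. Concretely I would take
$$C^\star = P(A,B) \leftarrow Q(A,C), R(A,D), S(B,C), T(B,D), U(C,D),$$
the five-literal clause already appearing in the D-reduction of Table~\ref{tab:12-c}. By Definition~\ref{def:hamprob}, any $\C{2}{2}$-D-reduction $T'$ of $\C{2}{5}$ must satisfy $T' \vdash \C{2}{5}$ with $T' \subseteq \C{2}{2}$; in particular $T' \vdash C^\star$ while every clause of $T'$ has at most two body literals. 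So it suffices to prove the key claim: for every finite $T' \subseteq \C{2}{2}$ we have $T' \not\vdash C^\star$, i.e.\ $C^\star \notin R^*(T')$.

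To prove the key claim I would use a graph invariant. Associate to each connected dyadic clause $C$ the multigraph $G(C)$ whose vertices are the first-order variables of $C$ and whose edges are the dyadic literals of $C$ (head included), with a literal of the form $L(X,X)$ giving a loop and monadic literals recorded only as vertex labels. Under this encoding $G(C^\star)$ is exactly the complete graph $K_4$ on the four variables $\{A,B,C,D\}$. The point is that binary resolution acts on $G$ as a local edge-replacement: resolving a body literal $L_1$ of $C_1$ against the head of $C_2$ deletes the edge coming from $L_1$ and pastes in the body graph of $C_2$, with the two head-vertices of $C_2$ identified with the endpoints of $L_1$ and the remaining body variables of $C_2$ added as fresh vertices. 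Since $C_2 \in \C{2}{2}$, the pasted graph has at most two edges and at most two fresh vertices, so each step is one of a short list of moves: subdividing an edge, doubling an edge, or attaching a path or pendant of length at most two (monadic literals only ever contribute the latter, tree-like, attachments).

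I would then show by induction on resolution depth that every clause in $R^*(T')$ has a graph with no $K_4$ minor, equivalently of treewidth at most $2$. For the base case, each clause of $T'$ has at most three literals, hence at most three edges, so $G$ trivially has no $K_4$ minor. For the inductive step I would verify that each admissible move preserves $K_4$-minor-freeness. The cleanest route is to exploit that $K_4$ is subcubic, so a graph contains $K_4$ as a minor if and only if it contains $K_4$ as a topological minor; edge subdivision and un-subdivision preserve topological minors, parallel edges are irrelevant to the simple-graph topological minor, and vertices of degree at most $1$ cannot serve as branch vertices of a $K_4$ subdivision. Hence if the resolvent's graph contained a $K_4$ minor, then so would $G(C_1)$, contradicting the induction hypothesis. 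Since $G(C^\star) = K_4$ manifestly contains $K_4$, this gives $C^\star \notin R^*(T')$, which establishes the key claim and therefore the proposition.

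The main obstacle is making the ``resolution equals edge replacement'' correspondence fully rigorous, because unification can do more than the idealised picture suggests: the most general unifier may identify two of $C_1$'s variables (for instance when the selected literal or $C_2$'s head contains a repeated variable), collapsing vertices and creating loops, and a monadic body literal can be turned into a dyadic one, and vice versa, along a derivation. I would handle this by checking that every such degeneracy only \emph{merges} vertices or \emph{adds} structure of the restricted form above, and that vertex contraction and loop deletion can never create a $K_4$ minor where none existed. Once the closure lemma is stated for this slightly enlarged family of moves, the induction carries through unchanged.
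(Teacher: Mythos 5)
Your proposal is correct in substance but follows a genuinely different route from the paper's proof. The paper uses the same witness clause (there called $C_I$) but argues by induction on derivation length, strengthened to the whole permutation closure \perm{C_I}: in the last inference of a putative derivation, either all literals of $C_I$ lie in one premise (impossible by Lemma \ref{lemma1}), or exactly one literal lies apart (then the other premise is itself in \perm{C_I}, contradicting the induction hypothesis), or else the split of the literals forces at least three variable pairs to be unified, which a single at-most-dyadic resolved literal cannot achieve. You replace this case analysis by a global invariant: every clause derivable from \C{2}{2} has a series-parallel ($K_4$-minor-free) variable graph, whereas $G(C^\star)=K_4$. Your route buys several things: invariance under argument permutation is automatic (no \perm{C_I} bookkeeping, since the undirected graph encoding forgets argument order); you obtain a reusable criterion (no clause whose variable graph has a $K_4$ minor is derivable from \C{2}{2}), which is essentially what powers the paper's generalisations in Theorems \ref{prop:c2-irreducible} and \ref{thm:ca-irreducible}; and it isolates exactly why the result reverses when factorisation is permitted, as in \cite{crop:sldres} — factoring can identify non-adjacent variables, which is not a minor operation and can create $K_4$ minors, whereas binary SLD-resolution between standardised-apart clauses only ever identifies the two adjacent endpoints of the resolved literal. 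Indeed, your method is close in spirit to the graph-theoretic techniques of the authors' earlier work \cite{crop:dreduce,crop:sldres}, even though this paper's own proof avoids them. What the paper's route buys is elementarity (no minor theory needed) and the immediate by-product that $C_I$ is also not derivable from \C{2}{3} or \C{2}{4}, though your base case yields that as well. Two points must be added to make your induction airtight. First, the closure $R^n$ also admits the resolvent in which the \emph{head} of the derived clause $C_1$ is resolved against a body literal of the \C{2}{2} clause $C_2$; then it is the head edge of $G(C_1)$ that is deleted, and $C_2$'s head edge plus its remaining body (at most two edges, attached only at the deleted edge's endpoints) that are pasted in, so the same edge-replacement argument applies, but your description covers only the opposite orientation. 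Second, your worry that monadic literals can turn into dyadic ones along a derivation is vacuous, since unification preserves arity; the genuine degeneracies are exactly the repeated-variable cases you name, and, as you correctly observe, these only contract the resolved edge, which cannot create a $K_4$ minor.
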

\begin{proof}
  We denote by $\perm{C}$ the set of all clauses that can be obtained from a given clause $C$ by permuting the arguments in its literals up to variable renaming.
  For example if $C=P(A,B)\leftarrow Q(A,C)$ then $\perm{C}=\{(C),(P(A,B)\leftarrow Q(C,A)),(P(B,A)\leftarrow Q(A,C)),(P(B,A)\leftarrow Q(C,A))\}$ up to variable renaming.

  Let $C_I$ denote the clause $P(A,B) \leftarrow Q(A,C),R(A,D),S(B,C),T(B,D),U(C,D)$.
  We prove that no clause in $\perm{C_I}$ can be derived from \C{2}{2} by induction on the length of derivations.
  Formally, we show that there exist no derivations of length $n$ from \C{2}{2} to a clause in \perm{C_I}.
  We reason by contradiction and w.l.o.g.\ we consider only the clause $C_I$.

  For the base case $n=0$, assume that there is a derivation of length $0$ from \C{2}{2} to $C_I$.
  This assumption implies that $C_I\in\C{2}{2}$, but this clearly cannot hold given the body size of $C_I$.

  For the general case, assume that the property holds for all $k<n$ and by contradiction consider the final inference in a derivation of length $n$ of $C_I$ from \C{2}{2}.
  Let $C_1$ and $C_2$ denote the premises of this inference.
  Then the literals occurring in $C_I$ must occur up to variable renaming in at least one of $C_1$ and $C_2$.
  We consider the following cases separately.
  \begin{itemize}
  \item
  All the literals of $C_I$ occur in the same premise: because of Lemma
  \ref{lemma1}, this case is impossible because this premise would contain more literals than $C_I$ (the ones from $C_I$ plus the resolved literal).

  \item
  Only one of the literals of $C_I$ occurs separately from the others: w.l.o.g., assume that the literal $Q(A,C)$ occurs alone in $C_2$ (up to variable renaming).
  Then $C_2$ must be of the form $H(A,C)\leftarrow Q(A,C)$ or $H(C,A)\leftarrow Q(A,C)$ for some $H$, where the $H$-headed literal is the resolved literal of the inference that allows the unification of $A$ and $C$ with their counterparts in $C_1$\footnote{
  Those are the only options to derive $C_I$.Otherwise, e.g. with $C_2 = H(A',C') \leftarrow Q(A',D')$, the resulting clause is not $C_I$ because $D'$ is not unified with any of the variables in $C_1$ (whereas $A'$ unifies with $A$ and $C'$ with $C$), e.g. the result includes the literal $Q(A,D')$ instead of $Q(A,C)$ hence it is not $C_I$.
  }.
  In this case, $C_1$ belongs to $\perm{C_I}$ and a derivation of $C_1$ from \C{2}{2} of length smaller than $n$ exists as a strict subset of the derivation to $C_I$ of length $n$.
  This contradicts the induction hypothesis, thus the assumed derivation of $C$ cannot exist.

  \item Otherwise, the split of the literals of $C_I$ between $C_1$ and $C_2$ is always such that at least three variables must be unified during the inference.
  For example, consider the case where $P(A,B) \leftarrow Q(A,C) \subset C_1$ and the set $\{R(A',D),S(B',C'),T(B',D),U(C',D)\}$ occurs in the body of $C_2$ (up to variable renaming).
  Then $A'$, $B'$ and $C'$ must unify respectively with $A$, $B$ and $C$ for $C_I$ to be derived (up to variable renaming).
  However the inference can at most unify two variable pairs since the resolved literal must be dyadic at most and thus this inference is impossible, a contradiction.
  \end{itemize}
  Thus $C_I$ and all of \perm{C_I} cannot be derived from \C{2}{2}.
  Note that, since \perm{C_I} is also neither a subset of \C{2}{3} nor of \C{2}{4}, this proof also shows that \perm{C_I} cannot be derived from \C{2}{3} and from \C{2}{4}.
\end{proof}

\noindent
We generalise this result to \C{2}{\infty}:

\begin{theorem}[\C{2}{\infty} D-irreducibility]
\label{prop:c2-irreducible}
The fragment \C{2}{\infty} has no D-reduction.
\label{thm:irreduicble}
\end{theorem}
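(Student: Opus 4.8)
The plan is to show that \emph{no} finite subset of \C{2}{\infty} derives all of \C{2}{\infty}; since every D-reduction is by definition finite, this immediately yields the theorem. To force infinitude, I would exhibit an infinite family of ``derivation-prime'' clauses generalising the clause $C_I$ of Proposition \ref{prop:c25-irreducible}. For each $k \geq 2$, let $C_I^{(k)}$ be the connected dyadic clause with head $P(A,B)$ and exactly one body literal $L_{ij}(V_i,V_j)$ for every unordered pair of distinct variables $V_i,V_j$ drawn from $\{A,B,C_1,\dots,C_k\}$ other than the head pair $\{A,B\}$. Its underlying graph is the complete graph $K_{k+2}$ with $AB$ singled out as the head edge; it lies in \C{2}{\infty}, has $\binom{k+2}{2}-1$ body literals, and recovers $C_I = C_I^{(2)}$. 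In particular the body sizes are strictly increasing in $k$.

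The heart of the argument is the following generalisation of Proposition \ref{prop:c25-irreducible}: for every theory $T\subseteq\C{2}{\infty}$, if $T \vdash C_I^{(k)}$ then $T \cap \perm{C_I^{(k)}} \neq \emptyset$, i.e.\ $T$ already contains a member of the permutation class of $C_I^{(k)}$. I would prove this by strong induction on the length $n$ of a derivation from $T$ of a clause in $\perm{C_I^{(k)}}$, reusing the three-case analysis of Proposition \ref{prop:c25-irreducible}. The base case $n=0$ is immediate. For the inductive step, because factorisation is disallowed, every body literal of the conclusion occurs up to renaming in one of the two premises $C_1,C_2$ of the final inference, so the body literals are partitioned between them. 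If all of them fall in a single premise, Lemma \ref{lemma1} is violated. If exactly one body literal is separated off, the small premise supplies it through its head while the other premise is itself a member of \perm{C_I^{(k)}} admitting a strictly shorter derivation, so the induction hypothesis applies. Otherwise the partition splits the body non-trivially, and here I would invoke the high vertex-connectivity of $K_{k+2}$: since the resolved literal is dyadic it can identify at most two variables across the two premises, whereas any non-trivial split of $C_I^{(k)}$ that also respects the placement of the head $P(A,B)$ forces at least three variables to be unified, which is impossible.

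Granting this lemma, the finiteness conclusion is short. Suppose for contradiction that some finite $R\subseteq\C{2}{\infty}$ satisfies $R \vdash \C{2}{\infty}$. Then $R \vdash C_I^{(k)}$ for every $k\geq 2$, so $R$ meets each class \perm{C_I^{(k)}}. These classes are pairwise disjoint, because permuting arguments preserves body size and the $C_I^{(k)}$ have distinct body sizes, so $R$ would have to contain a clause from infinitely many disjoint classes, contradicting its finiteness. Hence no finite theory derives \C{2}{\infty}, and in particular \C{2}{\infty} has no D-reduction.

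The main obstacle is case~3 of the inductive lemma: verifying, uniformly in $k$, that every non-trivial bipartition of the body literals of $C_I^{(k)}$ requires more than two variable identifications to reconstruct the conclusion once the position of the head literal is taken into account. This is the point where connectivity of the underlying graph must be translated into a bound on admissible unifications, and the delicate part is that (as already visible for $C_I$, whose body graph \emph{does} admit a two-vertex separator) it is the interaction between a genuine graph separator and the two head vertices, rather than raw vertex-connectivity alone, that produces the third forced unification. I would isolate this as a self-contained combinatorial sublemma about head-respecting two-separations of $K_{k+2}$ and discharge it once for all $k$.
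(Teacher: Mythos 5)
Your proposal is correct and, at the strategic level, mirrors the paper's proof: both arguments note that a D-reduction, being finite, admits a bound on body size, and then defeat every bound by generalising the witness $C_I$ of Proposition \ref{prop:c25-irreducible} to an infinite family of clauses of strictly increasing body size, each protected by the same three-case induction on derivation length. The routes differ in the choice of family and in the closing step. The paper grows $C_I$ sparsely, iteratively replacing the $Q$ and $R$ literals with five new ones to obtain clauses $C_{I_m}$ of body size $3m+5$, and concludes that no $\C{2}{m}$ (hence no finite reduction) derives $C_{I_m}$; it introduces no new combinatorics, asserting that ``the same reason'' as for $C_I$ applies. You instead take the complete graphs $K_{k+2}$ minus the head edge and close by pigeonhole over the pairwise disjoint classes \perm{C_I^{(k)}}. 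Your family makes the delicate case-3 step uniform and provable in one stroke: if some vertex occurring in the $C_2$-side literals lay outside $\{A,B\}\cup V(S_1)$, then all $k+1$ of its incident edges would lie in $S_2$, forcing $V(S_2)$ to contain every vertex and the overlap with the head side to be at least $|\{A,B\}\cup V(S_1)|\geq 3$; otherwise the overlap is all of $V(S_2)$, which has size at least $3$ since $|S_2|\geq 2$. So the sublemma you isolate as the main obstacle does hold, with a short proof, and your disjoint-class pigeonhole is a clean (indeed slightly stronger) substitute for the paper's body-size argument. One slip to repair: your case 1 claims Lemma \ref{lemma1} is violated whenever all body literals fall in a single premise, but if they all fall in the \emph{side} premise $C_2$ (head alone in $C_1$), Lemma \ref{lemma1} is satisfied with equality, not violated; that sub-case must instead be closed like your case 2, where connectedness of $C_1 = P(A,B)\leftarrow L$ forces the resolved literal $L$ to sit on $\{A,B\}$, whence $C_2\in\perm{C_I^{(k)}}$ and the induction hypothesis applies. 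The paper's ``w.l.o.g.'' in its second case glosses over the same point, so this does not distinguish the two proofs in rigour.
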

\begin{proof}
  It is enough to prove that \C{2}{\infty} does not have a \C{2}{m}-D-reduction for an arbitrary $m$ because any D-reduced theory, being finite, admits a bound on the body size of the clauses it contains.
  Starting from $C_I$ as defined in the proof of Proposition \ref{prop:c25-irreducible}, apply the following transformation iteratively for $k$ from 1 to $m$: replace the literals containing $Q$ and $R$ (i.e. at first $Q(A,C)$ and $R(A,D)$) with the following set of literals $Q(A,C_k)$, $R(A,D_k)$, $V_k(C_k,D_k)$, $Q_k(C_k,C)$, $R_k(D_k,D)$ where all variables and predicate variables labeled with $k$ are new.
  Let the resulting clause be denoted $C_{I_m}$.
  This clause is of body size $3m+5$ and thus does not belong to \C{2}{m}.
  Moreover, for the same reason that $C_I$ cannot be derived from any \C{2}{m'} with $m'<5$ (see the proof of Proposition \ref{prop:c25-irreducible}) $C_{I_m}$ cannot be derived from any \C{2}{m'} with $m'<3m+5$.
  In particular, $C_{I_m}$ cannot be derived from \C{2}{m}.
\end{proof}

\noindent
Another way to generalise Proposition \ref{prop:c25-irreducible} is the following:

\begin{theorem}[\C{a}{\infty} D-irreducibility]
  \label{thm:ca-irreducible}
  For $a\geq 2$, the fragment \C{a}{\infty} has no \C{a}{a^2+a-2}-D-reduction.
\end{theorem}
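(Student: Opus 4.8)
The plan is to reduce the statement to the non-derivability of a single clause, exactly as at the start of the proof of Theorem~\ref{thm:irreduicble}. Any \C{a}{a^2+a-2}-D-reduction of \C{a}{\infty} is by definition a subset $T'\subseteq\C{a}{a^2+a-2}$, so its closure satisfies $R^*(T')\subseteq R^*(\C{a}{a^2+a-2})$. Hence it suffices to exhibit one clause $C_a\in\C{a}{\infty}$ with more than $a^2+a-2$ body literals (so that $C_a\notin\C{a}{a^2+a-2}$ and the base case below holds) for which $C_a\notin R^*(\C{a}{a^2+a-2})$: no candidate reduction could then re-derive $C_a$, yet a reduction must derive all of \C{a}{\infty}, a contradiction. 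Note the factorisation $a^2+a-2=(a-1)(a+2)$.

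First I would construct $C_a$ as the arity-$a$ generalisation of the witness $C_I$ from Proposition~\ref{prop:c25-irreducible} (which for $a=2$ can be pictured as the complete graph $K_4$ with its head edge removed). Concretely, $C_a$ should be a connected, \emph{singleton-free} clause (every variable occurring in at least two literals, counting the head), built from literals of arity at most $a$, with exactly $a^2+a-1$ body literals, and enjoying the following \emph{rigidity} property: for every set $S$ of body literals with $2\le|S|\le a^2+a-2$, the number of variables occurring both in $S$ and in the remaining part of $C_a$ (its head together with the literals outside $S$) is at least $a+1$. The value $a^2+a-2$ is intended to be exactly the largest body size for which such a clause can still be assembled while keeping every non-trivial cut above the $a$-variable bottleneck.

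Then I would prove $C_a\notin R^*(\C{a}{a^2+a-2})$, and more generally that no clause of \perm{C_a} is derivable, by induction on derivation length, following the three-case analysis of Proposition~\ref{prop:c25-irreducible}. For the base case, every clause of \perm{C_a} has body size $a^2+a-1>a^2+a-2$ and so lies outside \C{a}{a^2+a-2}; permuting arguments preserves body size, so this covers all of \perm{C_a}. For the inductive step, consider the final inference of a length-$n$ derivation of $C_a$ with premises $C_1,C_2$, and partition the literals of $C_a$ between them. If all of them occur in a single premise, Lemma~\ref{lemma1} is violated, since that premise would carry the $a^2+a-1$ literals of $C_a$ plus the resolved literal. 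If exactly one literal occurs separately, singleton-freeness guarantees that the resolved literal carries all of that literal's variables, so the other premise lies in \perm{C_a} and admits a strictly shorter derivation, contradicting the induction hypothesis. Otherwise the separated side $S$ satisfies $2\le|S|\le a^2+a-2$, and the rigidity property forces at least $a+1$ variables to be identified through the single resolved literal; since that literal has arity at most $a$ it can identify at most $a$ distinct variables across the two premises, a contradiction.

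The hard part will be the construction of $C_a$ together with the verification of its rigidity property, i.e.\ the purely combinatorial claim that every split of the $a^2+a-1$ body literals with at least two literals on the separated side shares at least $a+1$ variables with its complement. This is exactly where the arithmetic $a^2+a-1=(a-1)(a+2)+1$ must be forced: the construction has to simultaneously push the number of body literals up to $a^2+a-1$ and keep all admissible cuts above the $a$-variable threshold imposed by resolution on arity-$\le a$ literals (note that for $a\ge 3$ purely dyadic gadgets such as $K_n$ no longer suffice, since an arity-$a$ resolved literal can bridge up to $a$ shared variables). A secondary but essential point is to guarantee singleton-freeness, since without it the ``one literal separate'' case need not descend inside \perm{C_a} and the induction would break.
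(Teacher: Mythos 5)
Your reduction of the theorem to the non-derivability of a single witness clause, your induction over derivation length with the three-case analysis (Lemma~\ref{lemma1} for the all-in-one-premise case, the induction hypothesis for the one-literal-separate case, and variable counting against the arity of the resolved literal for the remaining splits), and your ``rigidity'' condition are exactly the skeleton of the paper's proof; your rigidity condition is even a sharper formulation of what the paper asserts. But there is a genuine gap: you never exhibit $C_a$. The entire mathematical content of the theorem is the existence of a clause in \C{a}{\infty} of body size $a^2+a-1$ satisfying that rigidity condition, and you explicitly defer this (``the hard part will be the construction''), offering only the numerology $a^2+a-2=(a-1)(a+2)$ as a guide. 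An existence claim postponed is a proof not given: for all your argument shows, no such clause exists and the bound $a^2+a-2$ is wrong.

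The paper closes precisely this gap with an explicit construction, which you could adopt essentially verbatim. Take head variables $A_1,\dots,A_a$ and body variables $B_{j,k}$ for $j\in\{1,\dots,a\}$ and $k\in\{1,\dots,a-1\}$, and let the body of $C_a$ consist of the $a^2$ literals $Q_{i,j}(A_i,B_{j,1},\dots,B_{j,a-1})$ for $i,j\in\{1,\dots,a\}$ together with the $a-1$ literals $R_k(B_{1,k},\dots,B_{a,k})$ for $k\in\{1,\dots,a-1\}$. This clause has exactly $a^2+a-1$ body literals (so the count falls out of the construction rather than being forced by your factorisation); every literal has exactly $a$ distinct variables, which is what makes your one-literal-separate case airtight (the resolved literal has no spare argument positions, so the bulk premise lies literally in \perm{C_a}); every variable occurs in $a+1$ distinct literals, so the clause is singleton-free; and every split $S$ of the body with $2\le|S|\le a^2+a-2$ shares at least $a+1$ variables with its complement, the minimum $a+1$ being attained, e.g., by $S=\{Q_{1,1},Q_{2,1}\}$, whose shared variables are $A_1,A_2,B_{1,1},\dots,B_{1,a-1}$. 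For $a=2$ this clause coincides with the $C_I$ of Proposition~\ref{prop:c25-irreducible}. Your side remark that purely dyadic gadgets cannot work for $a\geq 3$ is correct, and indeed the construction uses literals of full arity $a$.
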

\begin{proof}
  Let $C_a$ denote the clause
  \begin{equation*}
    \begin{split}
    C_a = P(A_1,\dots,A_a)\leftarrow &Q_{1,1}(A_1,B_{1,1},\dots,B_{1,a-1})\dots Q_{1,a}(A_1,B_{a,1},\dots,B_{a,a-1})\\
    & \dots\\
    & Q_{a,1}(A_a,B_{1,1},\dots,B_{1,a-1})\dots Q_{a,a}(A_a,B_{a,1},\dots,B_{a,a-1})\\
    & R_1(B_{1,1},\dots,B_{a,1}),\dots,R_{a-1}(B_{1,a-1},\dots,B_{a,a-1})
    \end{split}
  \end{equation*}
  Note that for $a = 2$, the clauses $C_a$ and $C_I$ from the proof of Proposition \ref{prop:c25-irreducible} coincide.
  In fact to show that $C_a$ is irreducible for any $a$, it is enough to consider the proof of Proposition \ref{prop:c25-irreducible} where $C_a$ is substituted to $C_I$ and where the last case is generalised in the following way:
  \begin{itemize}
  \item the split of the literals of $C_a$ between $C_1$ and $C_2$ is always such that at least $a+1$ variables must be unified during the inference, which is impossible since the resolved literal can at most hold $a$ variables.
  \end{itemize}
  The reason this proof holds is that any subset of $C_a$ contains at least $a+1$ distinct variables.
  Since $C_a$ is of body size $a^2+a-1$, this counter-example proves that \C{a}{\infty} has no \C{a}{a^2+a-2}-D-reduction.
\end{proof}
Note that this is enough to conclude that \C{a}{\infty} cannot be reduced to \C{a}{2} but it does not prove that \C{a}{\infty} is not D-reducible.



\subsubsection{Summary}
Table \ref{tab:csummary} summarises our theoretical results from this section.
Theorems \ref{thm:c-sreduce} and \ref{thm:c-ereduce} show that \C{a}{\infty} can always be S- and E-reduced to \C{a}{1} respectively.
By contrast, Theorem \ref{prop:c2-irreducible} shows that \C{2}{\infty} cannot be D-reduced to \C{2}{2}.
In fact, Theorem \ref{prop:c2-irreducible} says that \C{2}{\infty} has no D-reduction.
Theorem \ref{prop:c2-irreducible} has direct (negative) implications for MIL systems such as Metagol and HEXMIL.
We discuss these implications in more detail in Section \ref{sec:conclusions}.

\begin{table}[ht]
\centering
\normalsize
\begin{tabular}{|c|ccc|}
\hline
Arity & S & E & D\\
\hline
1   & \y{} & \y{} & \y{} \\
2   & \y{} & \y{} & \n{} \\
>2 & \y{} & \y{} & \n{}\\
\hline
\end{tabular}
\caption{
  Existence of a S-, E- or D-reduction of \C{a}{\infty} to \C{a}{2}.
  The symbol \y{} denotes that the fragment does have a reduction.
  The symbol \n{} denotes that the fragment does not have a reduction.
  }
\label{tab:csummary}
\end{table}
\subsection{Datalog (\D{a}{m}) results}

We now consider Datalog clauses, which are often used in ILP \cite{mugg:metagold,crop:metafunc,hexmil,albarghouthi2017constraint,ALPS,evans:dilp}. The relevant Datalog restriction is that if a variable appears in the head of a clause then it must also appear in a body literal. If we look at the S-reductions of \C{\{1,2\}}{5} in Table \ref{tab:12-c} then the clause $P(A,B) \leftarrow Q(B)$ is not a Datalog clause because the variable $A$ appears in the head but not in the body. We denote the Datalog fragment of \C{a}{m} as \D{a}{m}. Table \ref{tab:dfrags} shows the results of applying the reduction algorithms to \D{a}{5} for different values of $a$. Table \ref{tab:12-d} shows the reductions for the fragment \D{\{1,2\}}{5}, which are used in Experiment 3 (Section \ref{sec:egames}) to induce Datalog game rules from observations. Reductions for other Datalog fragments are in Appendix \ref{app:Datalog-reductions}.

\begin{table}[ht]
\centering
\begin{tabular}{|c|c|c|c|c|c|c|}
\hline
Arities            & \multicolumn{2}{c|}{S-reduction} & \multicolumn{2}{c|}{E-reduction} & \multicolumn{2}{c|}{D-reduction}    \\
\hline
$a$                   & Bodysize       & Cardinality   & Bodysize       & Cardinality   & Bodysize            & Cardinality \\
\hline
0         & 1    & 1              & 1    & 1              & 2        & 2            \\
1         & 1    & 1              & 1    & 1              & 2         & 2            \\
2         & 2    & 4              & 2    & 2              & 5         & 10            \\
0,1   & 1    & 2              & 1    & 2              & 2         & 5            \\
0,2   & 2    & 5              & 2    & 3              & 5   & 38            \\
1,2   & 2    & 10              & 2    & 3              & 5   & 11            \\
0,1,2 & 2    & 11             & 2    & 4              & 5 & 14\\
\hline
\end{tabular}
\caption{Cardinality and maximal body size of the reductions of \D{a}{5}. All the fragments can be S- and E-reduced to \D{a}{2} but they cannot all be D-reduced to \D{a}{2}.}
\label{tab:dfrags}
\end{table}

\begin{table}[ht]
\scriptsize
\centering
    \begin{tabular}[t]{|c|c|c|}
    \hline
    S-reduction & E-reduction & D-reduction\\
    \hline
    \begin{tabular}[t]{l}
$P(A) \leftarrow Q(A)$\\
$P(A) \leftarrow Q(A,B)$\\
$P(A) \leftarrow Q(B,A)$\\
$P(A,A) \leftarrow Q(B,A)$\\
$P(A,B) \leftarrow Q(A),R(B)$\\
$P(A,B) \leftarrow Q(A),R(B,C)$\\
$P(A,B) \leftarrow Q(A,B)$\\
$P(A,B) \leftarrow Q(B),R(A,C)$\\
$P(A,B) \leftarrow Q(B,A)$\\
$P(A,B) \leftarrow Q(B,C),R(A,D)$\\
    \end{tabular}
    &
    \begin{tabular}[t]{l}
$P(A) \leftarrow Q(A,B)$\\
$P(A,B) \leftarrow Q(B,A)$\\
$P(A,B) \leftarrow Q(A),R(B)$\\
    \end{tabular}
    &
    \begin{tabular}[t]{l}
$P(A) \leftarrow Q(B,A)$\\
$P(A,A) \leftarrow Q(A)$\\
$P(A,A) \leftarrow Q(A,A)$\\
$P(A,B) \leftarrow Q(B,A)$\\
$P(A,B) \leftarrow Q(A,B),R(A,B)$\\
$P(A,B) \leftarrow Q(A,C),R(B,C)$\\
$P(A,B) \leftarrow Q(B,C),R(A,D)$\\
$P(A,B) \leftarrow Q(B,C),R(A,D),S(B,D),T(C,E)$\\
$P(A,B) \leftarrow Q(A,C),R(A,D),S(B,C),T(B,D),U(C,D)$\\
$P(A,B) \leftarrow Q(B,C),R(A,D),S(C,E),T(B,F),U(D,F)$\\
$P(A,B) \leftarrow Q(B,C),R(B,D),S(C,E),T(A,F),U(D,F)$
    \end{tabular}\\
    \hline
    \end{tabular}
\caption{Reductions of the Datalog fragment \D{\{1,2\}}{5}.}
\label{tab:12-d}
\end{table}

\noindent
We show that \D{2}{\infty} can be S-reduced to \D{2}{2}:

\begin{proposition}[\D{2}{\infty} S-reducibility]
  \label{prop:d22-sreduce}
  The fragment \D{2}{\infty} has a \D{2}{2}-S-reduction.
\end{proposition}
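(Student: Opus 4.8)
The plan is to follow the strategy behind Theorem \ref{thm:c-sreduce} but to account for the extra Datalog constraint, which is what pushes the body-size bound from $1$ up to $2$. The core claim I would establish is that every clause in \D{2}{\infty} is subsumed by some clause already lying in \D{2}{2}. Since \D{2}{2} is finite up to variable renaming (Lemma \ref{lemma2}), I would fix a finite set $S$ containing one representative of each renaming class of \D{2}{2}; by the core claim $S$ subsumes every clause of \D{2}{\infty}. I would then run Algorithm \ref{alg:reduce} on $(S,\preceq)$, which terminates and is correct by Propositions \ref{prop:sdecidable} and \ref{prop:alg-reduce-proof}, obtaining a subsumption-reduced $T' \subseteq S$. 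Because subsumption is transitive and $T'$ subsumes all of $S$, it subsumes all of \D{2}{\infty}; and since $T' \subseteq S \subseteq$ \D{2}{\infty} with $T'$ \sym{}-reduced, it is exactly the desired \D{2}{2}-S-reduction.

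The heart of the argument is the core claim, and I would prove it constructively by building the subsuming clause as a literal sub-clause of the given $D \in$ \D{2}{\infty}: keep the head and at most two of its body literals, so that the identity substitution witnesses subsumption ($C \subseteq D$ implies $C \preceq D$). The two conditions to maintain are that every head variable still occurs in the retained body (Datalog) and that the sub-clause stays connected. I would case split on the number of distinct first-order variables in the head. If the head has a single distinct variable (e.g.\ $P(A)$ or $P(A,A)$), then one body literal containing that variable suffices, giving a clause in \D{2}{1} $\subseteq$ \D{2}{2}. If the head has two distinct variables $A,B$, then, $D$ being Datalog, each of $A,B$ occurs in some body literal; if a single literal contains both I retain just that literal, otherwise I retain one literal containing $A$ and one containing $B$. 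In every case each retained body literal shares a head variable with the head, so the sub-clause is connected, Datalog, of arity at most $2$, and has at most two body literals, hence lies in \D{2}{2}.

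The main obstacle, and the precise reason the bound here is $2$ rather than the $1$ of Theorem \ref{thm:c-sreduce}, is the Datalog constraint: a single body literal connected to the head need not mention the \emph{other} head variable, so the head-plus-one-literal clause used in Theorem \ref{thm:c-sreduce} may fail to be Datalog because a head variable would be absent from the body. Covering both head variables can genuinely require two body literals, and the point that makes the proof go through is that arity $2$ guarantees two dyadic literals always suffice to cover both. I would additionally dispatch the low-arity and nullary corner cases (monadic or propositional head literals) to confirm that connectedness forces no further body literals; these are routine once the variable-covering count is settled.
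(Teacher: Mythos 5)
Your proposal is correct and takes essentially the same approach as the paper: the paper's proof likewise adapts Theorem \ref{thm:c-sreduce} by keeping the head together with body literals covering the head variables, noting that the Datalog constraint (every head variable must reappear in the body) is precisely what pushes the bound from one body literal to two when the two head variables occur in distinct body literals. Your explicit wrapper — finiteness of \D{2}{2} up to renaming via Lemma \ref{lemma2}, then reducing that finite set with Algorithm \ref{alg:reduce} and invoking transitivity of subsumption — just spells out what the paper leaves implicit.
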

\begin{proof}
  Follows using the same argument as in Theorem \ref{thm:c-sreduce} but the reduction is to \D{2}{2} instead of \D{2}{1}.
  This difference is due to the Datalog constraint that states: if a variable appears in the head it must also appear in the body.
  For clauses with dyadic heads, if the two head argument variables occur in two distinct body literals then the clause cannot be further reduced beyond \D{2}{2}.
\end{proof}

\noindent
We show how this result cannot be generalised to \D{a}{\infty}:

\begin{theorem}[\D{a}{\infty} S-irreducibility]
  \label{thm:da-nosreduce}
  For $a>0$, the fragment \D{a}{\infty} does not have a \D{a}{a-1}-S-reduction.
\end{theorem}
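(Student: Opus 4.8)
The plan is to prove the statement by exhibiting, for each $a>0$, a single witness clause $C_a \in \D{a}{\infty}$ that no clause with at most $a-1$ body literals from the fragment can subsume. Since a $\D{a}{a-1}$-S-reduction of $\D{a}{\infty}$ must be a subset of $\D{a}{a-1}$ that subsumes every member of $\D{a}{\infty}$, and subsumption by a theory means subsumption by one of its clauses, the existence of a single unsubsumable $C_a$ immediately rules out any such reduction.

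For the witness I would take
$$C_a = P(A_1,\dots,A_a) \leftarrow Q_1(A_1),\dots,Q_a(A_a),$$
with $A_1,\dots,A_a$ and $P,Q_1,\dots,Q_a$ all distinct. First I would verify that $C_a$ genuinely lies in \D{a}{\infty}: every body literal is monadic and hence of arity at most $a$, each head variable occurs in the body so the Datalog constraint holds, and the clause is connected because the head literal shares a variable with each $Q_i(A_i)$. The essential feature of $C_a$, which the argument exploits, is that each head variable $A_i$ occurs in exactly one body literal, namely $Q_i(A_i)$, and these $a$ body literals are pairwise distinct.

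The core of the proof is a counting step. Suppose some $C'\in\D{a}{a-1}$ subsumes $C_a$ via a substitution $\theta$ with $C'\theta\subseteq C_a$. Because subsumption preserves polarity and $C_a$ is definite, the (unique) head of $C'$ must map to the head $P(A_1,\dots,A_a)$ of $C_a$; since the $A_i$ are pairwise distinct, the head of $C'$ has arity $a$ and carries $a$ pairwise distinct variables $Y_1,\dots,Y_a$ with $Y_i\theta = A_i$. As $C'$ is itself a Datalog clause, each $Y_i$ occurs in some body literal $L$ of $C'$, whence $A_i$ occurs in the body literal $L\theta$ of $C_a$; but the only body literal of $C_a$ containing $A_i$ is $Q_i(A_i)$, so $L\theta = Q_i(A_i)$. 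Ranging over $i=1,\dots,a$ shows that the image of the body of $C'$ under $\theta$ contains all $a$ distinct literals $Q_1(A_1),\dots,Q_a(A_a)$, forcing $C'$ to have at least $a$ body literals and contradicting $C'\in\D{a}{a-1}$. (For $a=1$ this degenerates to the observation that $\D{1}{0}$ is empty, since \M{}{} excludes facts, and so cannot subsume the nonempty theory $\D{1}{\infty}$.)

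I expect the point needing the most care, rather than a genuine obstacle, to be the bookkeeping around the Datalog constraint on the \emph{subsuming} clause $C'$: it is precisely this constraint that rules out clauses such as $P(A,B)\leftarrow Q(A)$, which would otherwise subsume $C_a$, and it is what makes the result hold. This also clarifies the contrast with the purely connected case, where Theorem \ref{thm:c-sreduce} permits a reduction all the way down to body size $1$. I would finally remark that monadic body literals already suffice for the witness, so the single construction $C_a$ covers every arity $a$ uniformly.
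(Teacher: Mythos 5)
Your proof is correct and takes essentially the same approach as the paper: the paper uses the identical witness clause $C_a=P(X_1,\dots,X_a)\leftarrow Q_1(X_1),\dots,Q_a(X_a)$ and the same key observation that the Datalog head-variable constraint forces any subsuming clause to retain all $a$ body literals. Your write-up is somewhat more careful than the paper's (which argues via literal subsets of $C_a$ and leaves the substitution bookkeeping implicit, whereas you track the subsuming clause $C'$ and its $\theta$ explicitly, and also handle the degenerate case $a=1$), but the witness and the core idea coincide.
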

\begin{proof}
As a counter-example to a \D{a}{a-1}-S-reduction, consider $C_a=P(X_1,\dots,X_a)\leftarrow Q_1(X_1),$ $\dots,Q_a(X_a)$.
The clause $C_a$ does not belong to \D{a}{a-1} and cannot be S-reduced to it because the removal of any subset of its literals leaves argument variables in the head without their counterparts in the body.
Hence, any subset of $C_a$ does not belong to the Datalog fragment.
Thus $C_a$ cannot be subsumed by a clause in \D{a}{a-1}.
\end{proof}

\noindent
However, we can show that \D{a}{\infty} can always be S-reduced to \D{a}{a}:

\begin{theorem}[\D{a}{\infty} to \D{a}{a} S-reducibility]
  \label{prop:daa-sreduce}
  For $a>0$, the fragment \D{a}{\infty} has a \D{a}{a}-S-reduction.
\end{theorem}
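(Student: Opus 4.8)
The plan is to reuse the subsumption argument behind Theorem \ref{thm:c-sreduce} and Proposition \ref{prop:d22-sreduce}, but to retain up to $a$ body literals so that the Datalog constraint is never violated. The guiding observation is subsumption monotonicity: deleting body literals from a clause $C$ produces a sub-clause $C'$ with $C' \subseteq C$, so that $C' \preceq C$ via the identity substitution. Hence it suffices to show that every clause of \D{a}{\infty} contains a sub-clause lying in \D{a}{a}, and then to extract a finite irreducible subset.

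First I would prove that \D{a}{a} $\preceq$ \D{a}{\infty}, i.e.\ that every $C \in$ \D{a}{\infty} is subsumed by some clause of \D{a}{a}. Let $P(\dots)$ be the head of $C$ and let $X_1,\dots,X_k$ be its distinct head variables; since every literal has arity at most $a$, we have $k \leq a$. By the Datalog constraint, each $X_i$ occurs in some body literal, so I pick one such body literal $L_i$ for each $i$ and form $C' = P(\dots) \leftarrow L_1,\dots,L_k$. The clause $C'$ has at most $a$ body literals (the $L_i$ may coincide), is still Datalog since every head variable is covered, and is connected because each $L_i$ shares the variable $X_i$ with the head, so the head is adjacent to every retained body literal and the literal-connectivity graph is a star. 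Thus $C' \in$ \D{a}{a}, and since $C' \subseteq C$ we obtain $C' \preceq C$.

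Then I would assemble the reduction. By Lemma \ref{lemma2} the fragment \D{a}{a} is finite up to variable renaming, so by Proposition \ref{prop:sdecidable} the S-reduction problem on \D{a}{a} is decidable; let $T'$ be an S-reduced theory obtained by running Algorithm \ref{alg:reduce} on \D{a}{a}. Then $T'$ is finite and S-reduced, $T' \subseteq$ \D{a}{a} $\subseteq$ \D{a}{\infty}, and $T' \preceq$ \D{a}{a} $\preceq$ \D{a}{\infty}; by transitivity of subsumption $T' \preceq$ \D{a}{\infty}. Hence $T'$ is a \D{a}{a}-S-reduction of \D{a}{\infty}, as required.

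The step I expect to be the main obstacle is the connectedness verification together with the correct handling of $0$-arity literals, since the definition of \D{a}{\infty} permits arities up to $a$ and in particular arity $0$. I would argue that for $a>0$ no connected clause with a nonempty body can contain a $0$-ary literal (in head or body): such a literal shares no variable with the remaining literals and could be split off, contradicting connectedness. This justifies $k \geq 1$, so that the star-shaped sub-clause $C'$ has at least one body literal and is genuinely connected. The remaining ingredients — subsumption monotonicity, transitivity of $\preceq$, and finiteness up to renaming — are routine and directly supplied by the earlier results.
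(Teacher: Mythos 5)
Your proof is correct and takes essentially the same approach as the paper: select one body literal per distinct head variable, forming a star-shaped subclause of body size at most $a$ that subsumes the original clause, the worst case being clauses whose head variables all occur in distinct body literals. The only difference is that you make explicit two points the paper leaves implicit, namely the assembly of the finite S-reduced theory (via Lemma \ref{lemma2} and Algorithm \ref{alg:reduce}) and the treatment of $0$-arity literals under the connectedness constraint.
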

\begin{proof}
To prove that \D{a}{\infty} has a \D{a}{a}-S-reduction it is enough to remark that any clause in \D{a}{\infty} has a subclause of body size at most $a$ that is also in \D{a}{\infty}, the worst case being clauses such as $C_a$ where all argument variables in the head occur in a distinct literal in the body.
\end{proof}

\noindent
We also show that \D{a}{\infty} always has a \D{a}{2}-E-reduction, starting with the following lemma:

\begin{lemma}
\label{lemmaeresd}
For $a>0$ and $n\in \{1,\dots,a\}$, the clause \[P_0(A_1,A_2,\dots,A_n) \leftarrow P_1(A_1), P_2(A_2), \dots, P_n(A_n)\] is \D{a}{2}-E-reducible.
\end{lemma}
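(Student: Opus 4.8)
The plan is to exhibit a finite subset $T'$ of \D{a}{2} from which $C_n$ can be derived by SLD-resolution, and then to invoke the fact, established earlier in the paper, that derivability implies entailment (via the SLD-subsumption theorem): if $T' \vdash C_n$ then $T' \models C_n$, so $C_n$ is E-redundant in $T' \cup \{C_n\}$, i.e.\ \D{a}{2}-E-reducible. The idea is to build a ``ladder'' of metarules that assembles an $n$-ary predicate out of the $n$ monadic body literals one argument at a time, with each rung using only two body literals.

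Concretely, I would take as $T'$ the base metarule $X_2(A_1,A_2) \leftarrow Y_1(A_1), Y_2(A_2)$ together with, for each $k \in \{3,\dots,n\}$, the extension metarule $X_k(A_1,\dots,A_k) \leftarrow X_{k-1}(A_1,\dots,A_{k-1}), Y_k(A_k)$. The first step is to verify that every rung lies in \D{a}{2}: each has exactly two body literals, each is connected because its head literal mentions all of $A_1,\dots,A_k$ and so links the two body literals, each is Datalog because every head variable reappears in the body, and the largest arity used on any rung is $k \le n \le a$. This last inequality is precisely where the hypothesis $n \le a$ enters: were $n > a$, the intermediate collector $X_n$ would need arity greater than $a$ and would leave the fragment.

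The second step is the derivation itself. Beginning from the top rung with head $X_n(A_1,\dots,A_n)$, I would resolve its $X_{n-1}$ body literal against the head of the $k=n-1$ rung, then resolve the resulting $X_{n-2}$ literal against the $k=n-2$ rung, and so on down to resolving the $X_2$ literal against the base metarule. Each step unifies one collector literal with a head of the same arity, eliminates the intermediate predicate, and leaves the monadic literals untouched; after renaming $X_n$ to $P_0$ and each $Y_i$ to $P_i$, the final resolvent is exactly $C_n$. Equivalently, the argument can be organised as an induction on $n$, the inductive hypothesis providing an $(n-1)$-ary collector entailed by \D{a}{2} and one extension rung adjoining the last argument.

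The step I expect to require the most care is the semantic justification that the ``invented'' intermediate predicates do not obstruct entailment, since under encapsulation the predicate variables $X_2,\dots,X_n$ are universally quantified. The point to check is that once the body of a rung is satisfied for some binding, its conclusion $enc_k(X_k,\dots)$ holds for every predicate symbol in the head position, so the chain propagates upward and forces $enc_n(P_0,A_1,\dots,A_n)$ whenever $enc_1(P_1,A_1),\dots,enc_1(P_n,A_n)$ hold; this, rather than the resolution bookkeeping, is the genuinely non-routine part, and it also shows that a direct model-theoretic argument would establish the entailment just as well as the derivation above.
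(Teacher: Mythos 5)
Your proposal is correct and takes essentially the same route as the paper: your base rung and extension rungs are exactly the clauses $P_0(A_1,A_2) \leftarrow P_1(A_1),P_2(A_2)$ and $X_k(A_1,\dots,A_k) \leftarrow X_{k-1}(A_1,\dots,A_{k-1}),Y_k(A_k)$ that the paper uses in its induction on $n$, and your resolution ladder is just that induction unrolled (as you yourself note). The semantic worry in your last paragraph is a non-issue once one appeals, as the paper implicitly does, to the soundness of SLD-resolution on encapsulated clauses, so the resolution bookkeeping alone already yields \D{a}{2} $\models C_n$.
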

\begin{proof}
By induction on $n$.
\begin{itemize}
\item For the base case $n=2$, by definition \D{a}{2} contains $P_0(A_1,A_2) \leftarrow P_1(A_1), P_2(A_2)$
\item
For the inductive step, assume the claim holds for $n-1$.
We show it holds for $n$.
By definition \D{a}{2} contains the clause $D_1=P(A_1,A_2,\dots,A_{n}) \leftarrow P_0(A_1,A_2,\dots,A_{n-1}), P_n(A_{n})$.
By the inductive hypothesis, $D_2=P_0(A_1,A_2,\dots,A_{n-1})\leftarrow P_1(A_1),\dots,P_{n-1}(A_{n-1})$ is \D{a-1}{2}-E-reducible, and thus also \D{a}{2}-E-reducible.
Together, $D_1$ and $D_2$ entail $D=P_0(A_1,A_2,\dots,A_n) \leftarrow P_1(A_1), P_2(A_2), \dots,$ $ P_n(A_n)$, which can be seen by resolving the literal $P_0(A_1,A_2,\dots,A_{n-1})$ from $D_1$ with the same literal from $D_2$ to derive $D$.
Thus D is \D{a}{2}-E-reducible.
\end{itemize}
\end{proof}

\begin{theorem}[\D{a}{\infty} E-reducibility]
\label{thm:d-ereduce}
For $a>0$, the fragment \D{a}{\infty} has a \D{a}{2}-E-reduction.
\end{theorem}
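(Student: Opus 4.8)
The plan is to prove the stronger statement that the single finite theory \D{a}{2} entails every clause of \D{a}{\infty}, and then to extract the reduction from it. Once \D{a}{2} $\models$ \D{a}{\infty} is established, I would observe that \D{a}{2} is finite (Lemma \ref{lemma2}) and that the E-reduction problem for finite Datalog theories is decidable (Proposition \ref{prop:eproblemdecidable}); running Algorithm \ref{alg:reduce} on \D{a}{2} therefore returns an E-reduced subset $T' \subseteq$ \D{a}{2} with $T' \models$ \D{a}{2}. By transitivity of entailment $T' \models$ \D{a}{\infty}, and since $T'$ is E-reduced and contained in \D{a}{2}, it is exactly the desired \D{a}{2}-E-reduction. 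So the whole theorem collapses to the entailment claim \D{a}{2} $\models$ \D{a}{\infty}.

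To prove that claim I would first bound the body size. By Theorem \ref{prop:daa-sreduce}, \D{a}{\infty} has a \D{a}{a}-S-reduction, so every clause $C \in$ \D{a}{\infty} is subsumed by some $D \in$ \D{a}{a}; since subsumption implies entailment, $D \models C$. It therefore suffices to show \D{a}{2} $\models D$ for every $D \in$ \D{a}{a}, i.e.\ that every connected Datalog clause of arity at most $a$ with at most $a$ body literals is \D{a}{2}-E-reducible.

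This last statement I would prove by induction on the body size $m$, generalising Lemma \ref{lemmaeresd}. For $m \le 2$ the clause already lies in \D{a}{2}. For $3 \le m \le a$, write $D = P_0(\vec H) \leftarrow L_1, \dots, L_m$ and peel off one body literal $L_j$: introduce a fresh predicate variable whose arguments are exactly the variables shared between the remaining literals $\{L_i : i \neq j\}$ and $\vec H \cup \mathit{vars}(L_j)$, forming a two-literal clause $C_1 = P_0(\vec H) \leftarrow \mathit{Acc}(\dots), L_j$ and an $(m-1)$-literal clause $C_2 = \mathit{Acc}(\dots) \leftarrow \{L_i : i \neq j\}$ whose binary resolvent on $\mathit{Acc}$ is $D$. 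If $C_1 \in$ \D{a}{2} and the induction hypothesis applies to $C_2$, then \D{a}{2} $\models C_1$ and \D{a}{2} $\models C_2$, and hence \D{a}{2} $\models D$. Lemma \ref{lemmaeresd} is precisely this construction for the monadic ``star'' clauses, where the accumulator naturally has arity $m-1 \le a$.

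I expect the main obstacle to be showing that a suitable literal $L_j$ always exists. One must argue that the variable-sharing structure of a connected Datalog clause of arity at most $a$ and body size at most $a$ always admits an elimination order under which the accumulator arity never exceeds $a$ — equivalently, that some literal can be separated so that the interface between it-together-with-the-head and the rest of the body fits into at most $a$ arguments — while keeping both $C_1$ and $C_2$ connected and Datalog. This is exactly where the three constraints (arity at most $a$, body size at most $a$, and the Datalog/connectedness conditions) must be played off against one another, since a naive ``peel the last literal'' choice can force an interface of size up to $2a$. The star clause of Theorem \ref{thm:da-nosreduce}, handled by Lemma \ref{lemmaeresd}, is the extremal case showing that the bound $a$ is both necessary and attainable.
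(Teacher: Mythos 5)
Your outer scaffolding (restricting to \D{a}{a} via Theorem \ref{prop:daa-sreduce}, then extracting the reduction from finiteness and decidability) is sound, but the core inductive step has a genuine gap, and the ``main obstacle'' you flag at the end is not a technicality to be checked --- it is fatal as stated. Consider, for $a=3$, the clause
$$D \;=\; P(A_1,A_2,A_3) \leftarrow Q_1(A_1,B_1,B_2),\; Q_2(A_2,B_1,B_3),\; Q_3(A_3,B_2,B_3),$$
which lies in \D{3}{3}, so your induction must handle it at $m=3$. By the symmetry of $D$, whichever literal $L_j$ you peel off, the variables shared between the two remaining literals and $\vec H \cup \mathit{vars}(L_j)$ are four in number (two head variables plus two of the $B$'s), so your accumulator must have arity $4 > a$; no choice of $L_j$, and hence no elimination order, avoids this. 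Moreover $D$ cannot instead be subsumed away by anything in \D{3}{2}: every body literal of $D$ contains exactly one head variable, so a connected Datalog clause with at most two body literals subsuming $D$ would leave some head variable with no occurrence in its body, violating the Datalog condition. So the exact-interface decomposition you propose simply does not exist for this clause.

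The missing idea is that you do not need the resolvent to be \emph{exactly} $D$: entailment leaves room for a final subsumption step (this is the slack between $\vdash$ and $\models$ given by the SLD-subsumption theorem). The accumulator only ever needs to carry head variables --- the variable sharing \emph{between body literals} (the $B$'s above) can be thrown away, since the resulting resolvent, in which those shared variables are split into fresh ones, is strictly more general than $D$ and subsumes it. This is precisely how the paper's proof works: it picks one witness body literal $L_i$ per head variable $A_i$, proves by your kind of accumulator induction that the ``star'' clause $P(A_1,\dots,A_n) \leftarrow P_1(A_1),\dots,P_n(A_n)$ is \D{a}{2}-E-reducible (Lemma \ref{lemmaeresd}, where the interface is just $A_1,\dots,A_{n-1}$ and so never exceeds $a$), resolves it against the bridge clauses $P_i(A_i) \leftarrow L_i \in \D{a}{2}$, and then observes that the derived clause $D'$ --- in which connections between distinct $L_i$'s are lost --- still subsumes, hence entails, the target clause. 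If you relax your accumulator to carry only the head variables outside $L_j$ and close the argument with subsumption, your induction goes through; with exact interfaces it cannot.
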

\begin{proof}
Let $C$ be any clause in \D{a}{\infty}.
We denote the head of $C$ by $P(A_1,\dots,A_n)$, where $0<n\leq a$.
The possibility that some of the $A_i$ are equal does not impact the reasoning.

If $n=1$, then by definition, there exists a literal $L_1$ in the body of $C$ such that $A_1$ occurs in $L_1$.
It is enough to consider the clause $P(A_1)\leftarrow L_1$ to conclude, because $P(A_1)$ is the head of $C$ and $L_1$ belongs to the body of $C$, thus $P(A_1)\leftarrow L_1$ entails $C$, and this clause belongs to \D{a}{2}.

In the case where $n>1$, there must exist literals $L_1,\dots,L_n$ in the body of $C$ such that $A_i$ occurs in $L_i$ for $i\in\{1,\dots,n\}$.
Consider the clause $C' = P(A_1,\dots,A_n) \leftarrow L_1,\dots,L_n$.
There are a few things to stress about $C'$:
\begin{itemize}
\item The clause $C'$ belongs to \D{a}{\infty}. 
\item Some $L_i$ may be identical with each other, since the $A_i$s may occur together in literals or simply be equal, but this scenario does not impact the reasoning.
\item The clause $C'$ entails $C$ because $C'$ is equivalent to a subset of $C$ (but this subset may be distinct from $C'$ due to $C'$ possibly including some extra duplicated literals).
\end{itemize}
Now consider the clause $D = P(A_1,\dots,A_n)\leftarrow P_1(A_1),\dots,P_n(A_n)$.
For $i\in\{1,\dots,n\}$, the clause $P_i(A_i)\leftarrow L_i$ belongs to \D{a}{2} by definition, thus $\D{a}{2}\cup D\vdash D'$ where $D'=P(A_1,\dots,A_n)\leftarrow L_1,\dots,L_n$.
Moreover, by Lemma \ref{lemmaeresd}, $D$ is \D{a}{2}-E-reducible, hence $D'$ is also \D{a}{2}-E-reducible.
Note that this notation hides the fact that if a variable occurs in distinct body literals $L_i$ in $C'$, this connection is not captured in $D'$ where distinct variables will occur instead, thus there is no guarantee that $D'=C'$.
For example, if $C'=P(A_1,A_2)\leftarrow Q(A_1,B,A_2),R(A_2,B)$ then $D'=P(A_1,A_2)\leftarrow Q(A_1,B,A_2'),Q(A_1,B,A_2'),R(A_2,B'),R(A_2,B')$
However, it always holds that $D'\models C'$, because $D'$ subsumes $C'$.
In our small example, it is enough to consider the substitution $\theta=\{B'/B,A_2'/A_2\}$ to observe this.
Thus by transitivity of entailment, we can conclude that $C$ is \D{a}{2}-E-reducible.

\end{proof}

\noindent
As Table \ref{tab:dfrags} shows, not all of the fragments can be D-reduced to \D{a}{2}.
In particular, the result that \D{2}{\infty} has no \D{2}{2}-D-reduction follows from Theorem \ref{prop:c2-irreducible} because the counterexamples presented in the proof also belong to \D{2}{\infty}.


\subsubsection{Summary}
Table \ref{tab:dsummary} summarises our theoretical results from this section.
Theorem \ref{thm:da-nosreduce} shows that \D{a}{\infty} never has a \D{a}{a-1}-S-reduction.
This result differs from the connected fragment where \C{a}{\infty} could always be S-reduced to \C{a}{2}.
However, Theorem \ref{thm:da-nosreduce} shows that \D{a}{\infty} can always be S-reduced to \D{a}{a}.
As with the connected fragment, Theorem \ref{thm:d-ereduce} shows that \D{a}{\infty} can always be E-reduced to \C{a}{2}.
The result that \D{2}{\infty} has no D-reduction follows from Theorem \ref{prop:c2-irreducible}.


\begin{table}[ht]
\centering
\normalsize
\begin{tabular}{|c|ccc|}
\hline
Arity & S & E & D\\
\hline
1   & \y{} & \y{} & \y{} \\
2   & \y{} & \y{} & \n{} \\
>2 & \n{} & \y{} & \n{}\\
\hline
\end{tabular}
\caption{
  Existence of a S-, E- or D-reduction of \D{a}{\infty} to \D{a}{2}.
  }
\label{tab:dsummary}
\end{table}

\subsection{Singleton-free (\K{a}{m}) results}
\label{sec:singletonfree}

It is common in ILP to require that all the variables in a clause appear at least twice \cite{raedt:clint,crop:minmeta,mugg:golem}, which essentially eliminates singleton variables. We call this fragment the \emph{singleton-free} fragment:

\begin{definition}[Singleton-free]
A clause is singleton-free if each first-order variable appears at least twice
\end{definition}

\noindent
For example, if we look at the E-reductions of the connected fragment \C{\{1,2\}}{5} shown in Table \ref{tab:12-c} then the clause $P(A) \leftarrow Q(B,A)$ is not singleton-free because the variable $B$ only appears once.
We denote the singleton-free fragment of \D{a}{m} as \K{a}{m}. Table \ref{tab:kfrags} shows the results of applying the reduction algorithms to \K{a}{5}.
Table \ref{fig:2-s} shows the reductions of \K{\{2\}}{5}.
Reductions for other singleton-free fragments are in Appendix \ref{app:singleton-reductions}.

\begin{table}[ht]
\centering
\begin{tabular}{|c|c|c|c|c|c|c|}
\hline
Arities            & \multicolumn{2}{c|}{S-reduction} & \multicolumn{2}{c|}{E-reduction} & \multicolumn{2}{c|}{D-reduction}    \\
\hline
$a$                    & Bodysize       & Cardinality   & Bodysize       & Cardinality   & Bodysize            & Cardinality \\
\hline
0         & 1    & 1              & 1    & 1              & 2        & 2            \\
1         & 1    & 1              & 1    & 1              & 2         & 2            \\
2         & 4    & 3              & 2    & 3              & 5         & 7            \\
0,1   & 1    & 2              & 1    & 2              & 2         & 5            \\
0,2   & 5    & 4              & 2    & 3              & 5   & 23            \\
1,2   & 4    & 8              & 2    & 4              & 5   & 8            \\
0,1,2 & 4    & 9             & 2    & 5              & 5 & 11\\
\hline
\end{tabular}
\caption{Cardinality and maximal body size of the reductions of \K{a}{5}.}
\label{tab:kfrags}
\end{table}

\begin{table}[ht]
\footnotesize
\centering
    \begin{tabular}[t]{|c|c|c|}
    \hline
    S-reduction & E-reduction & D-reduction\\
    \hline
    \begin{tabular}[t]{l}
$P(A,B) \leftarrow Q(A,B)$\\
$P(A,B) \leftarrow Q(B,A)$\\
$P(A,B) \leftarrow Q(B,C),R(A,D),$\\
$\hspace{4.5em} S(A,D),T(B,C)$
    \end{tabular}
    &
    \begin{tabular}[t]{l}
$P(A,B) \leftarrow Q(B,A)$\\
$P(A,B) \leftarrow Q(A,A),R(B,B)$\\
$P(A,B) \leftarrow Q(A,C),R(B,C)$\\
    \end{tabular}
    &
    \begin{tabular}[t]{l}
$P(A,A) \leftarrow Q(A,A)$\\
$P(A,B) \leftarrow Q(B,A)$\\
$P(A,A) \leftarrow Q(A,B),R(B,B)$\\
$P(A,B) \leftarrow Q(A,A),R(B,B)$\\
$P(A,B) \leftarrow Q(A,B),R(A,B)$\\
$P(A,B) \leftarrow Q(A,C),R(B,C)$\\
$P(A,B) \leftarrow Q(A,C),R(A,D),$\\
$\hspace{4.5em} S(B,C),T(B,D),U(C,D)$

    \end{tabular}\\
    \hline
    \end{tabular}
\caption{Reductions of the singleton-free fragment \K{\{2\}}{5}}
\label{fig:2-s}
\end{table}

\noindent
Unlike in the connected and Datalog cases, the fragment \K{\{2\}}{5} is no longer S-reducible to \K{\{2\}}{2}.
We show that \K{2}{\infty} cannot be reduced to \K{2}{2}.

\begin{proposition}[\K{2}{\infty} S-reducibility]
  \label{prop:ksred}
  The fragment \K{2}{\infty} does not have a \K{2}{2}-S-reduction.
\end{proposition}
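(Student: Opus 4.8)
The plan is to reduce the claim to producing a single clause of \K{2}{\infty} that no clause of \K{2}{2} subsumes. This suffices because a \K{2}{2}-S-reduction of \K{2}{\infty} would be a finite $T'\subseteq$ \K{2}{\infty} with $T'\subseteq$ \K{2}{2} and $T'\preceq$ \K{2}{\infty}; since $T'\preceq C$ means that some member of $T'$ subsumes $C$, exhibiting one clause subsumed by nothing in \K{2}{2} contradicts the existence of any such $T'$. For the witness I would take the body-size-four clause already flagged as irreducible in Table~\ref{fig:2-s},
\[
C_K = P(A,B)\leftarrow Q(B,C),R(A,D),S(A,D),T(B,C),
\]
and first verify $C_K\in$ \K{2}{\infty}: it is dyadic, connected through the head, Datalog (both $A$ and $B$ occur in the body), and singleton-free ($A$ in $R,S$; $B$ in $Q,T$; $C$ in $Q,T$; $D$ in $R,S$).

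The heart of the argument is a case analysis on a hypothetical subsumer. Working at the first-order level through encapsulation, so that predicate symbols are treated as ordinary variables, suppose $D_0\in$ \K{2}{2} subsumes $C_K$ via $\theta$ with $D_0\theta\subseteq C_K$. The positive literal of $D_0$ must map onto the unique head literal $P(A,B)$, forcing the head of $D_0$ to be $P(X,Y)$ with $X\theta=A$ and $Y\theta=B$ (it cannot be $P(X,X)$ since $A$ and $B$ are distinct). As $D_0$ is singleton-free, both $X$ and $Y$ must recur in its body; the literal carrying $X$ maps to a body literal of $C_K$ containing $A$, and the literal carrying $Y$ to one containing $B$.

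The decisive combinatorial fact is that no body literal of $C_K$ contains both $A$ and $B$, so $X$ and $Y$ are anchored by two \emph{distinct} body literals, which exhausts the at-most-two-literal budget of $D_0$. Since $A$ occurs only as the first argument of $R(A,D)$ and $S(A,D)$, the $X$-literal has the form $P_1(X,Z)$ with $Z\theta=D$; likewise the $Y$-literal is $P_2(Y,W)$ with $W\theta=C$. But then $Z\theta$ and $W\theta$ are the internal variables $D$ and $C$ of $C_K$, which are distinct from one another and from $A$ and $B$; hence $Z$ differs from $X$, $Y$, and $W$ and does not occur in the head, so $Z$ occurs exactly once in $D_0$, contradicting singleton-freeness. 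Therefore no clause of \K{2}{2} subsumes $C_K$, and \K{2}{\infty} has no \K{2}{2}-S-reduction.

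The step I expect to be the main obstacle is making this case analysis genuinely exhaustive under encapsulation, where the freedom to match predicate symbols could appear to give extra room. I would have to rule out the degenerate mappings in which $X$ or $Y$ is matched in a second argument position, in which a single variable of $D_0$ is made to anchor both $A$ and $B$ (which again uses up both literals and strands the other head variable), and in which the two body literals of $D_0$ coincide after applying $\theta$. Each case collapses to the same two observations: $A$ and $B$ never share a body literal of $C_K$, and the partner of an anchored head variable is always a genuinely fresh internal variable, so it is stranded as a singleton.
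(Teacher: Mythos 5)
Your proposal is correct and takes essentially the same route as the paper: it uses the identical witness clause (up to renaming of predicate variables and reordering of body literals) and the same key insight that the singleton-free constraint prevents any clause of \K{2}{2} from subsuming it. Your anchoring case analysis is in fact a more rigorous spelling-out of the paper's terse claim that ``for any other clause to subsume $C$ it must be more general than $C$, but that is not possible again because of the singleton-free constraint.''
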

\begin{proof}
As a counter-example, consider the clause:

\[C=P(A,B) \leftarrow Q(A,D), R(A,D), S(B,C), T(B,C)\]

\noindent
Consider removing any non-empty subset of literals from the body of $C$.
Doing so leads to a singleton variable in the remaining clause, so it is not a singleton-free clause.
Moreover, for any other clause to subsume $C$ it must be more general than $C$, but that is not possible again because of the singleton-free constraint\footnote{Note that this proof also shows that \K{2}{\infty} does not have a \K{2}{3}-S-reduction.}.
\end{proof}

\noindent
We can likewise show that this result holds in the general case:

\begin{theorem}[\K{a}{\infty} S-reducibility]
  \label{thm:sirreducible}
  For $a\geq 2$, the fragment \K{a}{\infty} does not have a \K{a}{2a-1}-S-reduction.
\end{theorem}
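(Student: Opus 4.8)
The plan is to prove the theorem by exhibiting a single counterexample, exactly in the spirit of Proposition \ref{prop:ksred} but scaled with the arity. Since an S-reduction requires every clause of the fragment to be subsumed by a \emph{single} clause of the reduction, it suffices to produce one clause $C_a\in\K{a}{\infty}$ that no clause of $\K{a}{2a-1}$ subsumes; its existence then rules out any \K{a}{2a-1}-S-reduction. Concretely, I would take
\[
C_a = P(A_1,\dots,A_a) \leftarrow Q_1(A_1,C_1),R_1(A_1,C_1),\dots,Q_a(A_a,C_a),R_a(A_a,C_a),
\]
built from $a$ independent ``gadgets'', the $i$-th gadget being the pair $Q_i(A_i,C_i),R_i(A_i,C_i)$ in which the fresh variable $C_i$ occurs nowhere else. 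First I would verify that $C_a$ lies in \K{a}{\infty}: it is connected (all head variables are tied together through the head, and each $A_i$ reaches its gadget), Datalog (every $A_i$ occurs in the body), singleton-free (each $A_i$ occurs three times and each $C_i$ twice), and every literal has arity at most $a$ whenever $a\geq 2$. For $a=2$ this is precisely the counterexample used in Proposition \ref{prop:ksred}.

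The core is then to show that no clause $D\in\K{a}{2a-1}$ subsumes $C_a$. I would assume $D\theta\subseteq C_a$ for some substitution $\theta$ and derive a contradiction. Because the only positive literal of $C_a$ is its head, the head of $D\theta$ must equal $P(A_1,\dots,A_a)$; since $A_1,\dots,A_a$ are distinct, this forces $D$'s head to be $P(X_1,\dots,X_a)$ with distinct variables $X_j$ satisfying $X_j\theta=A_j$. As $D$ is Datalog and singleton-free, each $X_j$ must reappear in some body literal $L_j$, whose image necessarily lies in the unique gadget containing $A_j$; hence $L_j$ has the form $\mathit{Pred}(X_j,W_j)$ with $W_j\theta=C_j$ and $W_j$ a non-head variable. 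The key observation is that $C_j$ occurs in $C_a$ \emph{only} inside gadget $j$, so every body literal of $D$ mentioning $W_j$ maps into gadget $j$, and singleton-freeness forces $W_j$ to occur in at least two distinct body literals of $D$. Literals mapping into different gadgets are distinct literals of $D$ (no literal of $C_a$ lies in two gadgets, since each carries only one $C_i$), so the two literals accounting for each of the $a$ gadgets are pairwise disjoint. Summing gives $D$ at least $2a$ body literals, contradicting the bound of at most $2a-1$.

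The hard part will be the bookkeeping when $\theta$ is not a mere renaming but genuinely identifies variables of $D$. I would have to rule out two degenerate collapses. First, the two forced occurrences of $W_j$ cannot fall inside a single literal: such a literal would need to contain both $X_j\mapsto A_j$ and $W_j\mapsto C_j$, leaving no room for a second $W_j$ in a dyadic literal whose image is a gadget literal $Q_j(A_j,C_j)$ or $R_j(A_j,C_j)$. Second, $W_j$ cannot coincide with any $W_{j'}$ for $j'\neq j$, since their images $C_j$ and $C_{j'}$ are distinct. Once these identifications are excluded, the gadget-wise disjointness is clean and the inequality $2a>2a-1$ closes the argument; as in the footnote to Proposition \ref{prop:ksred}, the same construction simultaneously shows that no body-size bound strictly below $2a$ admits a subsumption reduction.
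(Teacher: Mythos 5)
Your proof is correct and takes essentially the same approach as the paper: the paper's counterexample $C_a = P(A_1,\dots,A_a)\leftarrow P_1(A_1,B_1),P_2(A_1,B_1),\dots,P_{2a-1}(A_a,B_a),P_{2a}(A_a,B_a)$ is exactly your gadget clause up to renaming of variables and predicate variables, and the paper likewise concludes that its body size $2a$ rules out any \K{a}{2a-1}-S-reduction. The only difference is one of rigour: you spell out the counting argument (each of the $a$ gadgets forces two pairwise-disjoint body literals in any subsuming Datalog, singleton-free clause) that the paper leaves implicit by appealing to ``the same reasoning'' as Proposition \ref{prop:ksred}.
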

\begin{proof}
  We generalise the clause $C$ from the proof of Proposition \ref{prop:ksred} to define the clause $C_a = P(A_1,\dots,A_a)\leftarrow P_1(A_1,B_1),P_2(A_1,B_1),\dots,P_{2a-1}(A_a,B_a),P_{2a}(A_a,B_a)$.
  The same reasoning applies to $C_a$ as to $C (= C_2)$, making $C_a$ irreducible in \K{a}{\infty}.
  Moreover $C_a$ is of body size $2a$, thus $C_a$ is a counterexample to a \K{a}{2a-1}-S-reduction of \K{a}{\infty}.
\end{proof}

\noindent
However, all the fragments can be E-reduced to \K{a}{2}.

\begin{theorem}[\K{a}{\infty} E-reducibility]
\label{thm:s-ereduce}
For $a>0$, the fragment \K{a}{\infty} has a \K{a}{2}-E-reduction.
\end{theorem}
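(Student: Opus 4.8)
The plan is to reduce the statement to a single entailment claim: for every clause $C\in\K{a}{\infty}$ we have $\K{a}{2}\models C$. Once this is established the theorem follows by bookkeeping. Indeed, $\K{a}{2}\subseteq\K{a}{\infty}$, so $\K{a}{2}$ is a candidate reduction living inside the target fragment; since $\K{a}{2}$ is finite up to variable renaming (Lemma \ref{lemma2}) and $\models$ satisfies the monotonicity condition required of \sym{}, we may extract from $\K{a}{2}$ a finite \emph{E-reduced} subset that still entails all of $\K{a}{\infty}$, which is exactly a \K{a}{2}-E-reduction. So all the work is in the entailment.

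To prove $\K{a}{2}\models C$ I would mirror the strategy of Theorem \ref{thm:d-ereduce} and invoke the SLD-subsumption theorem: it suffices to exhibit a clause $D$ with $\K{a}{2}\vdash D$ and $D\preceq C$. As in the Datalog case, I would first pass to a subclause $C'$ of $C$ retaining only body literals that cover the head variables, so that $C'\models C$, and then build $D$ by resolution from singleton-free two-literal clauses. The reason the Datalog argument cannot simply be transplanted is the \textbf{main obstacle}: there, each real body literal $L_i$ is attached to the head through a monadic projection clause $P_i(A_i)\leftarrow L_i$, which lies in \D{a}{2} but is in general \emph{not} singleton-free, since the variables of $L_i$ other than $A_i$ occur only once. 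Hence the proof must be redone with singleton-free building blocks.

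The device I would use is to peel the body off a \emph{group} $Y$ of literals at a time, replacing $Y$ by a single link literal $S(\bar v)$ whose arguments $\bar v$ are exactly the variables shared between $Y$ and the rest of the clause (head included). This splits $C$ into $C_1 = H\leftarrow X, S(\bar v)$ and $C_2 = S(\bar v)\leftarrow Y$, and resolving $C_1$ and $C_2$ on $S$ recovers $C$. Because $\bar v$ is precisely the set of shared variables, every variable of $C_1$ and of $C_2$ still occurs at least twice: the variables internal to $Y$ (which must already occur at least twice inside $Y$ by singleton-freeness of $C$) keep both occurrences in $C_2$, while each variable of $\bar v$ occurs once in $S$ and again on the other side. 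Thus $C_1$ and $C_2$ remain connected, Datalog, and singleton-free, and one can then induct on body size (using Lemma \ref{lemma1} to control the sizes of the premises) to reduce both $C_1$ and $C_2$ into \K{a}{2}.

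The hard part, and where I expect the real effort to lie, is guaranteeing that such a cut always exists with \emph{both} the link arity bounded, $|\bar v|\le a$, and both pieces strictly smaller than $C$ in body size. Bounding $|\bar v|$ is delicate, since a group of literals can a priori share more than $a$ variables with the remainder of the clause; I would handle this by choosing $Y$ so as to respect the variable-sharing structure (for instance grouping literals around a single shared variable and exploiting connectivity together with the per-literal arity bound), in the same spirit in which Lemma \ref{lemmaeresd} peels one head argument at a time. Concretely, I would first establish a singleton-free analogue of Lemma \ref{lemmaeresd} for the canonical clause produced by this construction, and then, exactly as in Theorem \ref{thm:d-ereduce}, finish by transitivity of entailment (using that subsumption implies entailment and Proposition \ref{prop:esubs}) to conclude that every $C\in\K{a}{\infty}$ is \K{a}{2}-E-reducible.
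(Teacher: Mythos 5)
Your high-level framing (reduce to showing $\K{a}{2}\models C$ for every $C\in\K{a}{\infty}$, invoke the SLD-subsumption theorem, first pass to a covering subclause $C'$) matches the paper's, and you correctly isolate the obstacle: the Datalog building blocks $P_i(A_i)\leftarrow L_i$ are not singleton-free. But the device you propose to repair this, the link-literal cut with induction on body size, cannot work, and the step you flag as ``the hard part'' is not merely hard but false. Your cut construction is an exact resolution derivation of the target clause, so if a cut with $|\bar v|\le a$ and both pieces strictly smaller always existed for singleton-free clauses, your induction would prove $\K{a}{2}\vdash C$ for all $C\in\K{a}{\infty}$, i.e.\ that \K{a}{\infty} has a \K{a}{2}-D-reduction. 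The paper proves the opposite for $a\ge 2$ (Proposition~\ref{prop:c25-irreducible}, Theorem~\ref{thm:irreduicble}, Table~\ref{tab:ssummary}). Concretely, for $C_I=P(A,B)\leftarrow Q(A,C),R(A,D),S(B,C),T(B,D),U(C,D)$, which is in \K{2}{5}, every bipartition of the body into parts that both make the pieces strictly smaller shares at least three variables, and the remaining cuts leave one piece with the same body size as $C_I$; indeed $C_I$ admits no derivation from \C{2}{2} whatsoever. Your fallback step happens to dispose of $C_I$ itself, since its covering subclause $P(A,B)\leftarrow Q(A,C),S(B,C)$ lies in \K{2}{2}. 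But the combined strategy still fails on the clauses $C_{I_m}$ ($m\ge 1$) from the proof of Theorem~\ref{thm:irreduicble}, which are singleton-free (even duplicate-free): there the literals covering $A$, namely $Q(A,C_m)$ and $R(A,D_m)$, share no first-order variable with the literals covering $B$, namely $S(B,C)$ and $T(B,D)$, so every covering subclause has two singletons and lies outside \K{2}{2}, while $C_{I_m}$ itself is underivable from \C{2}{2} and hence from \K{2}{2}, so no choice of cuts can succeed either. For $C_{I_1}$ neither branch of your proposal applies.

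The missing idea is the paper's \emph{literal-duplication} trick, which abandons exact derivation altogether. The building blocks become $P_i(A_i)\leftarrow L_i,L_i$, with the body literal written twice so that every variable of $L_i$ occurs at least twice; these clauses are in \K{a}{2}. Resolving them against $D=P(A_1,\dots,A_n)\leftarrow P_1(A_1),\dots,P_n(A_n)$, which is handled by (the adaptation of) Lemma~\ref{lemmaeresd}, yields $D'=P(A_1,\dots,A_n)\leftarrow L_1,L_1,\dots,L_n,L_n$, singleton-free by construction, and $D'$ \emph{subsumes} $C'$ because the duplicated literals and their fresh variables collapse under the subsumption substitution; entailment of $C$ then follows by the SLD-subsumption theorem and transitivity, exactly as in Theorem~\ref{thm:d-ereduce}. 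For $C_{I_1}$ this produces, e.g., $P(A,B)\leftarrow Q(A,X),Q(A,X),S(B,Y),S(B,Y)$, which subsumes $C_{I_1}$ via $\{X/C_1,Y/C\}$ even though neither $C_{I_1}$ nor any clause subsuming it can be \emph{derived} from \K{2}{2}. In short, derivability is the wrong currency for this theorem (the paper's own D-irreducibility results close that road); subsumption of a deliberately redundant, duplicated clause is what replaces your cut device.
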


\begin{proof}
  The proof of Theorem \ref{thm:s-ereduce} is an adaptation of that of Theorem \ref{thm:d-ereduce}.
  The only difference is that if $n=1$ then $P(A_1)\leftarrow L_1,L_1$ must be considered instead of $P(A_1)\leftarrow L_1$ to ensure the absence of singleton variables in the body of the clause, and for the same reason, in the general case, the clause $D'=P(A_1,\dots,A_n)\leftarrow L_1,...,L_n$ must be replaced by $D'=P(A_1,\dots,A_n)\leftarrow L_1,L_1,\dots,L_n,L_n$.
  Note that $C'$ is not modified and thus may or may not belong to \K{a}{\infty}.
  However, it is enough that $C'\in\D{a}{\infty}$.
  With these modifications, the proof carries from \K{a}{\infty} to \K{a}{2} as from \D{a}{\infty} to \D{a}{2}, including the results in Lemma \ref{lemmaeresd}.

\end{proof}

\subsubsection{Summary}
Table \ref{tab:ssummary} summarises our theoretical results from this section.
Theorem \ref{thm:sirreducible} shows that for $a\geq 2$, the fragment \K{a}{\infty} does not have a \K{a}{2a-1}-S-reduction.
This result contrasts with the Datalog fragment where \D{a}{\infty} always has a \D{a}{a}-S-reduction.
As is becoming clear, adding more restrictions to a fragment typically results in less S-reducibility.
By contrast, as with the connected and Datalog fragments, Theorem \ref{thm:s-ereduce} shows that fragment \K{a}{\infty} always has a \K{a}{2}-E-reduction.
In addition, as with the other fragments, \K{a}{\infty} has no D-reduction for $a\geq 2$.

\begin{table}[ht]
\centering
\normalsize
\begin{tabular}{|c|ccc|}
\hline
Arity & S & E & D\\
\hline
1   & \y{} & \y{} & \y{} \\
2   & \n{} & \y{} & \n{} \\
>2 & \n{} & \y{} & \n{}\\
\hline
\end{tabular}
\caption{
  Existence of a S-, E- or D-reduction of \K{a}{\infty} to \K{a}{2}.
  }
\label{tab:ssummary}
\end{table}
\subsection{Duplicate-free (\U{a}{m}) results}

The previous three fragments are general in the sense that they have been widely used in ILP.
By contrast, the final fragment that we consider is of particular interest to MIL.
Table \ref{tab:metarules} shows a selection of metarules commonly used in the MIL literature.
These metarules have been successfully used despite no theoretical justification.
However, if we consider the reductions of the three fragments so far, the \emph{identity}, \emph{precon}, and \emph{postcon} metarules do not appear in any reduction.
These metarules can be derived from the reductions, typically using either the $P(A) \leftarrow Q(A,A)$ or $P(A,A) \leftarrow Q(A)$ metarules.
To try to identify a reduction which more closely matches the metarules shown in Table \ref{tab:metarules}, we consider a fragment that excludes clauses in which a literal contains multiple occurrences of the same variable.
For instance, this fragment excludes the previously mentioned metarules and also excludes the metarule $P(A,A) \leftarrow Q(B,A)$, which was in the D-reduction shown in Table \ref{tab:12-c}.
We call this fragment \emph{duplicate-free}.
It is a sub-fragment of \K{a}{m} and we denote it as \U{a}{m}.

Table \ref{tab:12-u} shows the reductions for the fragment \U{\{1,2\}}{5}.
Reductions for other duplicate-free fragments are in Appendix \ref{app:duplicate-reductions}.
As Table \ref{tab:12-u} shows, the D-reduction of \U{\{1,2\}}{5} contains some metarules commonly used in the MIL literature.
For instance, it contains the \emph{identity$_1$}, \emph{didentity$_2$}, and \emph{precon} metarules.
We use the metarules shown in Table \ref{tab:12-u} in Experiments 1 and 2 (Sections \ref{sec:etrains} and \ref{sec:estrings}) to learn Michalski trains solutions and string transformation programs respectively.

Table \ref{tab:ufrags} shows the results of applying the reduction algorithms to \U{a}{5} for different values of $a$.
All the theoretical results that hold for the singleton-free fragments hold similarly for the duplicate-free fragments for the following reasons:
\begin{itemize}
  \item (S) The clauses in the proofs of Proposition \ref{prop:ksred} and Theorem \ref{thm:sirreducible} belong to \U{a}{\infty}.
  \item (E) If the clause $C$ considered initially in the proof of Theorem \ref{thm:s-ereduce} belongs to \U{a}{\infty}, then all the subsequent clauses in that proof are also duplicate-free.
\item (D) In the proof of Theorem \ref{thm:irreduicble}, the $C_{I_m}$ family of clauses all belong to \U{a}{\infty}.
\end{itemize}
Thus Table \ref{tab:ssummary} is also a summary of the S-, E- and D-reduction results of \U{a}{\infty} to \U{a}{2}.


\begin{table}[ht]
\scriptsize
\centering
    \begin{tabular}[t]{|c|c|c|}
    \hline
    S-reduction & E-reduction & D-reduction\\
    \hline
    \begin{tabular}[t]{l}
$P(A) \leftarrow Q(A)$\\
$P(A) \leftarrow Q(A,B),R(A,B)$\\
$P(A,B) \leftarrow Q(A,B)$\\
$P(A,B) \leftarrow Q(B,A)$\\
$P(A,B) \leftarrow Q(A),R(B)$\\
$P(A,B) \leftarrow Q(B),R(A,C),S(A,C)$\\
$P(A,B) \leftarrow Q(A),R(B,C),S(B,C)$\\
$P(A,B) \leftarrow Q(B,C),R(A,D),$\\
$\hspace{4.5em} S(A,D),T(B,C)$
    \end{tabular}
    &
    \begin{tabular}[t]{l}
$P(A) \leftarrow Q(A,B),R(A,B)$\\
$P(A,B) \leftarrow Q(B,A)$\\
$P(A,B) \leftarrow Q(A),R(B)$\\
    \end{tabular}
    &
    \begin{tabular}[t]{l}
$P(A) \leftarrow Q(A)$\\
$P(A) \leftarrow Q(A),R(A)$\\
$P(A) \leftarrow Q(A,B),R(B)$\\
$P(A) \leftarrow Q(A,B),R(A,B)$\\
$P(A,B) \leftarrow Q(B,A)$\\
$P(A,B) \leftarrow Q(A),R(B)$\\
$P(A,B) \leftarrow Q(A),R(A,B)$\\
$P(A,B) \leftarrow Q(A,B),R(A,B)$\\
$P(A,B) \leftarrow Q(A,C),R(B,C)$\\
$P(A,B) \leftarrow Q(A,C),R(A,D),S(B,C),$\\
$\hspace{4.5em} T(B,D),U(C,D)$\\
$P(A,B) \leftarrow Q(B,C),R(A,D),S(B,D)$\\
$\hspace{4.5em} T(C,E),U(E)$\\
$P(A,B) \leftarrow Q(B,C),R(A,D),S(B,D),$\\
$\hspace{4.5em} T(C,E),U(C,E)$
    \end{tabular}\\
    \hline
    \end{tabular}
\caption{Reductions of the fragment \U{\{1,2\}}{5}}
\label{tab:12-u}
\end{table}

\begin{table}[ht]
\centering
\begin{tabular}{|c|c|c|c|c|c|c|}
\hline
Arities            & \multicolumn{2}{c|}{S-reduction} & \multicolumn{2}{c|}{E-reduction} & \multicolumn{2}{c|}{D-reduction}    \\
\hline
$a$                    & Bodysize       & Cardinality   & Bodysize       & Cardinality   & Bodysize            & Cardinality \\
\hline
0         & 1    & 1              & 1    & 1              & 2        & 2            \\
1         & 1    & 1              & 1    & 1              & 2         & 2            \\
2         & 4    & 3              & 5    & 2              & 5         & 10            \\
0,1   & 1    & 2              & 1    & 2              & 2         & 5            \\
0,2   & 5    & 4              & 5    & 3              & 5   & 38            \\
1,2   & 4    & 8              & 2    & 3              & 5   & 12            \\
0,1,2 & 4    & 9             & 2    & 4              & 5 & 16\\
\hline
\end{tabular}
\caption{Cardinality and body size of the reductions of \U{a}{5}.}
\label{tab:ufrags}
\end{table}



\subsection{Summary}

We started this section with three goals (\textbf{G1}, \textbf{G2}, and \textbf{G3}).
Table \ref{tab:summary} summarises the results towards these goals for fragments of metarules relevant to ILP (Table \ref{tab:fragments}).
For \textbf{G1}, our results are mostly empirical, i.e. the results are the outputs of the reduction algorithms.
For \textbf{G2}, Table \ref{tab:summary} shows that the results are all positive for E-reduction, but mostly negative for S- and D-reduction, especially for Datalog fragments.
Similarly, for \textbf{G3} the results are again positive for E-reduction but negative for S- and D-reduction for Datalog fragments.
We discuss the implications of these results in Section \ref{sec:conclusions}.

\begin{table}[ht]
\centering
\normalsize
\begin{tabular}{c|ccc|ccc|ccc|ccc}

 Arities  & \multicolumn{3}{c}{\C{a}{\infty}}
  & \multicolumn{3}{c}{\D{a}{\infty}}
  & \multicolumn{3}{c}{\K{a}{\infty}}
  & \multicolumn{3}{c}{\U{a}{\infty}}\\
  \hline
$a$ & S & E & D
& S & E & D
& S & E & D
& S & E & D \\


1   & \y{} & \y{} & \y{}
    & \y{} & \y{} & \y{}
    & \y{} & \y{} & \y{}
    & \y{} & \y{} & \y{} \\

2   & \y{} & \y{} & \n{}
    & \y{} & \y{} & \n{}
    & \n{} & \y{} & \n{}
    & \n{} & \y{} & \n{} \\





>2 & \y{} & \y{} & \n{}
    & \n{} & \y{} & \n{}
    & \n{} & \y{} & \n{}
    & \n{} & \y{} & \n{}
\end{tabular}
\caption{
  Existence of a S-, E- or D-reduction of \M{a}{\infty} to \M{a}{2}.
  The symbol \y{} denotes that the fragment does have such a reduction.
  The symbol \n{} denotes that the fragment does not have such a reduction.
  }
\label{tab:summary}
\end{table}
\section{Experiments}
\label{sec:experiments}

As explained in Section \ref{sec:intro}, deciding which metarules to use for a given learning task is a major open problem.
The problem is the trade-off between efficiency and expressivity: the hypothesis space grows given more metarules (Theorem \ref{thm:hypspace}), so we wish to use fewer metarules, but if we use too few metarules then we lose expressivity.
In this section we experimentally explore this trade-off.
As described in Section \ref{sec:related}, Cropper and Muggleton \cite{crop:minmeta} showed that learning with E-reduced sets of metarules can lead to higher predictive accuracies and lower learning times compared to learning with non-E-reduced sets.
However, as argued in Section \ref{sec:intro}, we claim that E-reduction is not always the most suitable form of reduction because it can remove metarules necessary to learn programs with the appropriate specificity.
To test this claim, we now conduct experiments that compare the learning performance of Metagol 2.3.0\footnote{https://github.com/metagol/metagol/releases/tag/2.3.0}, the main MIL implementation, when given different reduced sets of metarules\footnote{Experimental data is available at http://github.com/andrewcropper/mlj19-reduce
}.
We test the null hypothesis:

\begin{description}
\item[\textbf{Null hypothesis 1}] There is no difference in the learning performance of Metagol when using different reduced sets of metarules
\end{description}

\noindent
To test this null hypothesis, we consider three domains: Michalski trains, string transformations, and game rules.

\subsection{Michalski trains}
\label{sec:etrains}

In the Michalski trains problems \cite{michalski:trains} the task is to induce a program that distinguishes eastbound trains from westbound trains. Figure \ref{fig:trains-prog} shows an example target program, where the target concept (\tw{f/1}) is that the train has a long carriage with two wheels and another with three wheels.


\begin{figure}[ht]
\centering
\normalsize
\begin{tabular}{|c|}
\hline
\begin{lstlisting}
f(X):-
    has_car(X,C1),
    long(C1),
    two_wheels(C1),
    has_car(X,C2),
    long(C2),
    three_wheels(C2).
\end{lstlisting}\\
\hline
\end{tabular}
\caption{
An example Michalski trains target program.
In the Michalski trains domain, a carriage (\tw{car}) can be long or short.
A short carriage always has two wheels.
A long carriage has either two or three wheels.
}
\label{fig:trains-prog}
\end{figure}

\subsubsection{Materials}
To obtain the experimental data, we first generated 8 random target train programs where the programs are progressively more difficult, where difficulty is measured by the number of literals in the generated program from the easiest task T$_1$ to the most difficult task T$_8$.
Figure \ref{fig:trains-bk} shows the background predicates available to Metagol.
We vary the metarules given to Metagol.
We use the S-, E-, and D-reductions of the fragment \U{\{1,2\}}{5} (Table \ref{tab:12-u}).
In addition, we also consider the \U{\{1,2\}}{2} fragment of the D-reduction of \U{\{1,2\}}{5}, i.e. a subset of the D-reduction consisting only of metarules with at most two body literals.
This fragment, which we denote as $D^{*}$, contains three fewer metarules than the D-reduction of \U{\{1,2\}}{5}.
Table \ref{tab:d2fragment} shows this fragment.

\begin{figure}[ht]
\centering
\normalsize
\begin{tabular}{|ll|}
\hline
\texttt{has\_car/2} & \texttt{has\_load/2}\\
\texttt{short/1} & \texttt{long/1}\\
\texttt{two\_wheels/1} & \texttt{three\_wheels/1}\\
\texttt{roof\_open/1} & \texttt{roof\_closed/1}\\
\texttt{zero\_load/1} & \texttt{one\_load/1}\\
\texttt{two\_load/1} & \texttt{circle/1}\\
\texttt{triangle/1} & \texttt{rectangle/1}\\
\hline
\end{tabular}
\caption{Background relations available in the trains experiment.}
\label{fig:trains-bk}
\end{figure}

\begin{table}[ht]
\centering
\normalsize
\begin{tabular}[t]{|ll|}
\hline
$P(A) \leftarrow Q(A)$ & $P(A,B) \leftarrow Q(B,A)$\\
$P(A) \leftarrow Q(A),R(A)$ & $P(A,B) \leftarrow Q(A),R(B)$\\
$P(A) \leftarrow Q(A,B),R(B)$ & $P(A,B) \leftarrow Q(A),R(A,B)$\\
$P(A) \leftarrow Q(A,B),R(A,B)$ & $P(A,B) \leftarrow Q(A,B),R(A,B)$\\
& $P(A,B) \leftarrow Q(A,C),R(B,C)$\\
\hline
\end{tabular}
\caption{The D$^*$ fragment, which is the D-reduction of the fragment \U{\{1,2\}}{5} restricted to the fragment \U{\{1,2\}}{2}.}
\label{tab:d2fragment}
\end{table}

\subsubsection{Method}
For each train task $t_i$ in $\{T_1,\dots,T_8\}$:
\begin{enumerate}
    \item Generate 10 training examples of $t_i$, half positive and half negative
    \item Generate 200 testing examples of $t_i$, half positive and half negative
    \item For each set of metarules $m$ in the S-, E-, D-, and $D^*$-reductions:
    \begin{enumerate}
        \item Learn a program for task $t_i$ using the training examples and metarules $m$
        \item Measure the predictive accuracy of the learned program using the testing examples
    \end{enumerate}
\end{enumerate}

\noindent
If a program is not found in 10 minutes then no program is returned and every testing example is deemed to have failed. We measure mean predictive accuracies, mean learning times, and standard errors over 10 repetitions.

\begin{table}[ht]
\centering
\normalsize
\begin{tabular}{c|c|c|c|c}
\textbf{Task} &
\textbf{S} &
\textbf{E} &
\textbf{D} &
\textbf{D$^*$}\\
\hline
$T_1$ & \textbf{100} $\pm$ 0 & \textbf{100} $\pm$ 0 & \textbf{100} $\pm$ 0 & \textbf{100} $\pm$ 0\\
$T_2$ & \textbf{100} $\pm$ 0 & \textbf{100} $\pm$ 0 & \textbf{100} $\pm$ 0 & \textbf{100} $\pm$ 0\\
$T_3$ & 68 $\pm$ 5 & 62 $\pm$ 5 & \textbf{100} $\pm$ 0 & \textbf{100} $\pm$ 0\\
$T_4$ & 75 $\pm$ 6 & 75 $\pm$ 6 & \textbf{100} $\pm$ 0 & \textbf{100} $\pm$ 0\\
$T_5$ & 92 $\pm$ 4 & 78 $\pm$ 6 & 78 $\pm$ 6 & \textbf{100} $\pm$ 0\\
$T_6$ & 52 $\pm$ 2 & 50 $\pm$ 0 & 70 $\pm$ 6 & \textbf{100} $\pm$ 0\\
$T_7$ & 95 $\pm$ 3 & 65 $\pm$ 5 & 82 $\pm$ 5 & \textbf{100} $\pm$ 0\\
$T_8$ & 55 $\pm$ 3 & 52 $\pm$ 2 & 72 $\pm$ 6 & \textbf{98} $\pm$ 2\\
\hline
mean & 80 $\pm$ 1  & 73 $\pm$ 2  & 88 $\pm$ 2  & \textbf{100} $\pm$ 0
\end{tabular}
\caption{Predictive accuracies when using different reduced sets of metarules on the Michalski trains problems.}
\label{fig:train-accs}
\end{table}

\begin{table}[ht]
\centering
\normalsize
    \begin{tabular}{c|c|c|c|c}
\textbf{Task} &
\textbf{S} &
\textbf{E} &
\textbf{D} &
\textbf{D$^*$}\\
\hline

T1 & \textbf{0} $\pm$ 0 & \textbf{0} $\pm$ 0 & \textbf{0} $\pm$ 0 & \textbf{0} $\pm$ 0\\
T2 & \textbf{0} $\pm$ 0 & \textbf{0} $\pm$ 0 & \textbf{0} $\pm$ 0 & \textbf{0} $\pm$ 0\\
T3 & 424 $\pm$ 59 & 461 $\pm$ 56 & \textbf{0} $\pm$ 0 & \textbf{0} $\pm$ 0\\
T4 & 322 $\pm$ 64 & 340 $\pm$ 61 & \textbf{0} $\pm$ 0 & \textbf{0} $\pm$ 0\\
T5 & 226 $\pm$ 48 & 320 $\pm$ 59 & 361 $\pm$ 59 & \textbf{5} $\pm$ 2\\
T6 & 583 $\pm$ 17 & 600 $\pm$ 0 & 429 $\pm$ 51 & \textbf{7} $\pm$ 2\\
T7 & 226 $\pm$ 44 & 446 $\pm$ 55 & 243 $\pm$ 61 & \textbf{6} $\pm$ 1\\
T8 & 550 $\pm$ 35 & 570 $\pm$ 30 & 361 $\pm$ 64 & \textbf{183} $\pm$ 40\\
\hline
mean & 292 $\pm$ 16  & 342 $\pm$ 17  & 174 $\pm$ 16  & \textbf{25} $\pm$ 5
\end{tabular}
\caption{Learning times in seconds when using different reduced sets of metarules on the Michalski trains problems. Note that the values are rounded, so 0 represents that a solution was found in under half a second.}
\label{fig:train-times}
\end{table}

\begin{figure}[ht]
\centering
\normalsize

\begin{tabular}{|c|}
\hline
\begin{lstlisting}
f(A):-has_car(A,B),f1(A,B).
f1(A,B):-three_wheels(B),has_car(A,C),f2(A,C).
f2(A,B):-roof_open(B),has_car(A,C),has_car(A,C).
\end{lstlisting} \\
\hline
S program\\
\hline
\end{tabular}

\vspace{5mm}

\begin{tabular}{|c|}
\hline
\begin{lstlisting}
f(A):-has_car(A,B),f1(A,B).
f1(A,B):-f2(A),three_wheels(B).
f2(A):-has_car(A,B),has_car(A,B).
\end{lstlisting} \\
\hline
E program\\
\hline
\end{tabular}

\vspace{5mm}

\begin{tabular}{|c|}
\hline
\begin{lstlisting}
f(A):-f1(A),f2(A).
f1(A):-has_car(A,B),roof_open(B).
f2(A):-has_car(A,B),three_wheels(B).
\end{lstlisting} \\
\hline
D program\\
\hline
\end{tabular}

\vspace{5mm}

\begin{tabular}{|c|}
\hline
\begin{lstlisting}
f(A):-f1(A),f2(A).
f1(A):-has_car(A,B),three_wheels(B).
f2(A):-has_car(A,B),f3(B).
f3(A):-long(A),two_wheels(A).
\end{lstlisting} \\
\hline
D$^*$ program\\
\hline
\end{tabular}

\caption{Example programs learned by Metagol when varying the metarule set. The target program is shown in Figure \ref{fig:trains-prog}.}
\label{fig:T8-programs}
\end{figure}

\subsubsection{Results}
Table \ref{fig:train-accs} shows the predictive accuracies when learning with the different sets of metarules.
The $D$ set generally outperforms the $S$ and $E$ sets with a higher mean accuracy of 88\% vs 80\% and 73\% respectively.
Moreover, the $D^*$ set easily outperforms them all with a mean accuracy of 100\%.
A McNemar's test\footnote{A statistical test on paired \emph{nominal} data https://en.wikipedia.org/wiki/McNemar\%27s\_test} on the $D$ and $D^*$ accuracies confirmed the significance at the $p < 0.01$ level.

Table \ref{fig:train-times} shows the corresponding learning times when using different reduces sets of metarules.
The $D$ set outperforms (has lower mean learning time) the $S$ and $E$ sets, and again the $D^*$ set outperforms them all.
A paired t-test\footnote{A statistical test on paired \emph{ordinal} data http://www.biostathandbook.com/pairedttest.html} on the $D$ and $D^*$ learning times confirmed the significance at the $p < 0.01$ level.

The $D^*$ set performs particularly well on the more difficult tasks.
The poor performance of the $S$ and $E$ sets on the more difficult tasks is for one of two reasons.
The first reason is that the S- and E-reduction algorithms have removed the metarules necessary to express the target concept.
This observation strongly corroborates our claim that E-reduction can be too strong because it can remove metarules necessary to specialise a clause.
The second reason is that the S- and E-reduction algorithms produce sets of metarules that are still sufficient to express the target theory but doing so requires a much larger and more complex program, measured by the number of clauses needed.

The performance discrepancy between the $D$ and $D^*$ sets of metarules can be explained by comparing the hypothesis spaces searched.
For instance, when searching for a program with 3 clauses, Theorem \ref{thm:hypspace} shows that when using the $D$ set of metarules the hypothesis space contains approximately $10^{24}$ programs.
By contrast, when using the $D^*$ set of metarules the hypothesis space contains approximately $10^{14}$ programs.
As explained in Section \ref{sec:mil}, assuming that the target hypothesis is in both hypothesis spaces, the Blumer bound \cite{blumer:bound} tells us that searching the smaller hypothesis space will result in less error, which helps to explain these empirical results.
Of course, there is the potential for the $D^*$ set to perform worse than the $D$ set when the target theory requires the three removed metarules, but we did not observe this situation in this experiment.

Figure \ref{fig:T8-programs} shows the target program for T$_8$ and example programs learned by Metagol using the various reduced sets of metarules.
Only the D$^*$ program is success set equivalent\footnote{The success set of a logic program $P$ is the set of ground atoms $\{A \in hb(P)|P\cup\{ \neg A \}\;\text{has a SLD-refutation}\}$, where $hb(P)$ represents the Herband base of the logic program $P$. The success set restricted to a specific predicate symbol $p$ is the subset of the success set restricted to atoms containing the predicate symbol $p$.} to the target program when restricted to the target predicate \tw{f/1}.
In all three cases Metagol discovered that if a carriage has three wheels then it is a long carriage, i.e. Metagol discovered that the literal \tw{long(C2)} is redundant in the target program.
Indeed, if we unfold the D$^*$ program to remove the invented predicates then the resulting single clause program is one literal shorter than the target program.

Overall, the results from this experiment suggest that we can reject the null hypothesis, both in terms of predictive accuracies and learning times.

\subsection{String transformations}
\label{sec:estrings}

In \cite{mugg:metabias} and \cite{crop:metaopt} the authors evaluate Metagol on 17 real-world string transformation tasks using a predefined (hand-crafted) set of metarules. In this experiment, we compare learning with different metarules on an expanded dataset with 250 string transformation tasks.

\subsubsection{Materials}
Each string transformation task has 10 examples. Each example is an atom of the form $f(x,y)$ where $f$ is the task name and $x$ and $y$ are strings. Figure \ref{fig:p06} shows task \emph{p6} where the goal is to learn a program that filters the capital letters from the input. We supply Metagol with dyadic background predicates, such as \tw{tail}, \tw{dropLast}, \tw{reverse}, \tw{filter\_letter}, \tw{filter\_uppercase}, \tw{dropWhile\_not\_letter}, \tw{takeWhile\_uppercase}. The full details can be found in the code repository. We vary the metarules given to Metagol. We use the S-, E-, and D-reductions of the fragment \U{\{2\}}{5}. We again also use the D-reduction of the fragment \U{\{2\}}{5} restricted to the fragment \U{\{2\}}{2}, which is again denoted as $D^*$.

\begin{figure}[ht]
\centering
\normalsize
\begin{tabular}{l|l}
\textbf{Input} & \textbf{Output} \\ \hline
Arthur Joe Juan & AJJ\\
Jose Larry Scott & JLS\\
Kevin Jason Matthew & KJM\\
Donald Steven George & DSG\\
Raymond Frank Timothy & RFT
\end{tabular}
\caption{Examples of the \emph{p6} string transformation problem input-output pairs.}
\label{fig:p06}
\end{figure}

\subsubsection{Method}
Our experimental method is:
\begin{enumerate}
    \item Sample 50 tasks $Ts$ from the set $\{p1,\dots,p250\}$
    \item For each $t \in Ts$:
    \begin{enumerate}
        \item Sample 5 training examples and use the remaining examples as testing examples
        \item For each set of metarules $m$ in the S-, E-, D, and $D^*$-reductions:
        \begin{enumerate}
            \item Learn a program $p$ for task $t$ using the training examples and metarules $m$
            \item Measure the predictive accuracy of $p$ using the testing examples
        \end{enumerate}
    \end{enumerate}
\end{enumerate}

\noindent
If a program is not found in 10 minutes then no program is returned and every testing example is deemed to have failed. We measure mean predictive accuracies, mean learning times, and standard errors over 10 repetitions.

\subsubsection{Results}

Table \ref{fig:string-results} shows the mean predictive accuracies and learning times when learning with the different sets of metarules.
Note that we are not interested in the absolute predictive accuracy, which is limited by factors such as the low timeout and insufficiency of the BK.
We are instead interested in the relative accuracies.
Table \ref{fig:string-results} shows that the $D$ set outperforms the $S$ and $E$ sets, with a higher mean accuracy of 33\%, vs 22\% and 22\% respectively.
The $D^*$ set outperforms them all with a mean accuracy of 56\%.
A McNemar's test on the $D$ and $D^*$ accuracies confirmed the significance at the $p < 0.01$ level.

Table \ref{fig:string-results} shows the corresponding learning times when varying the metarules. Again, the $D$ set outperforms the $S$ and $E$ sets, and again the $D^*$ set outperforms them all.
A paired t-test on the $D$ and $D^*$ learning times confirmed the significance at the $p < 0.01$ level.

Overall, the results from this experiment give further evidence to reject the null hypothesis, both in terms of predictive accuracies and learning times.

\begin{table}[ht]
\centering
\normalsize
\begin{tabular}{c|c|c|c|c}
&
\textbf{S} &
\textbf{E} &
\textbf{D} &
\textbf{D$^*$}\\
\hline
Mean predictive accuracy (\%)  & 22 $\pm$ 0  & 22 $\pm$ 0  & 32 $\pm$ 0  & \textbf{56 $\pm$ 1}\\
Mean learning time (seconds) & 467 $\pm$ 1  & 467 $\pm$ 1  & 407 $\pm$ 3  & \textbf{270 $\pm$ 3}
\end{tabular}
\caption{Experimental results on the string transformation problems.}
\label{fig:string-results}
\end{table}
\subsection{Inducing game rules}
\label{sec:egames}

The general game playing (GGP) framework~\cite{genesereth} is a system for evaluating an agent's general intelligence across a wide range of tasks. In the GGP competition, agents are tested on games they have never seen before. In each round, the agents are given the rules of a new game. The rules are described symbolically as a logic program. The agents are given a few seconds to think, to process the rules of the game, and to then start playing, thus producing game traces. The winner of the competition is the agent who gets the best total score over all the games. In this experiment, we use the IGGP dataset \cite{iggp} which inverts the GGP task: an ILP system is given game traces and the task is to learn a set of rules (a logic program) that could have produced these traces.

\subsubsection{Materials}

The IGGP dataset contains problems drawn from 50 games.
We focus on the eight games shown in Figure \ref{fig:games} which contain BK compatible with the metarule fragments we consider (i.e. the BK contains predicates in the fragment \M{2}{m}).
The other games contain predicates with arity greater than two.
Each game has four target predicates \tw{legal}, \tw{next}, \tw{goal}, and \tw{terminal}, where the arities depend on the game. Figure \ref{fig:mindec} shows the target solution for the \tw{next} predicate for the \emph{minimal decay} game. Each game contains training/validate/test data, composed of sets of ground atoms, in a 4:1:1 split. We vary the metarules given to Metagol. We use the S-, E-, and D-reductions of the fragment \D{\{1,2\}}{5}. We again also use the D-reduction of the fragment \D{\{1,2\}}{5} restricted to the fragment \D{\{1,2\}}{2}, which is again denoted as $D^*$.

\begin{figure}[ht]
\centering
\normalsize
\begin{tabular}{|ll|}
\hline
GT attrition & GT chicken\\
GT prisoner & Minimal decay\\
Minimal even & Multiple buttons and lights\\
Scissors paper stone & Untwisty corridor\\
\hline
\end{tabular}
\caption{IGGP games used in the experiments.}
\label{fig:games}
\end{figure}

\begin{figure}[ht]
\centering
\normalsize
\begin{tabular}{|c|}
\hline
\begin{lstlisting}
next_value(X):-
    true_value(Y),
    succ(X,Y),
    does_player(noop).
next_value(5):-
    does_player(pressButton).
\end{lstlisting}\\
\hline
\end{tabular}
\caption{Target solution for the \tw{next} predicate for the \emph{minimal decay} game.}
\label{fig:mindec}
\end{figure}

\subsubsection{Method}

The majority of game examples are negative. We therefore use \emph{balanced accuracy} to evaluate the approaches. Given background knowledge $B$, sets of positive $E^+$ and negative $E^-$ testing examples, and a logic program $H$, we define the number of positive examples as $p=|E^+|$, the number of negative examples as $n=|E^-|$, the number of true positives as $tp=|\{e \in E^+ | B \cup H \models e\}|$, the number of true negatives as $tn=|\{e \in E^- | B \cup H \not\models e\}|$, and the balanced accuracy $ba = (tp/p + tn/n)/2$.

Our experimental method is as follows. For each game $g$, each task $g_t$, and each set of metarules $m$ in the S-, E-, D-, and $D^*$-reductions:
\begin{enumerate}
    \item Learn a program $p$ using all the training examples for $g_t$ using the metarules $m$ with a timeout of 10 minutes
    \item Measure the balanced accuracy of $p$ using the testing examples
\end{enumerate}

\noindent
If no program is found in 10 minutes then no program is returned and every testing example is deemed to have failed.



\subsubsection{Results}

Table \ref{fig:game-results} shows the balanced accuracies when learning with the different sets of metarules.
Again, we are not interested in the absolute accuracies only the relative differences when learning using different sets of metarules.
The $D$ set outperforms the $S$ and $E$ sets with a higher mean accuracy of 72\%, vs 66\% and 66\% respectively.
The $D^*$ set again outperforms them all with a mean accuracy of 73\%.
A McNemar's test on the $D$ and $D^*$ accuracies confirmed the significance at the $p < 0.01$ level.
Table \ref{fig:game-results} shows the corresponding learning times when varying the metarules.
Again, the $D$ set outperforms the $S$ and $E$ sets, and again the $D^*$ set outperforms them all.
However, a paired t-test on the $D$ and $D^*$ learning times confirmed the significance only at the $p < 0.08$ level, so the difference in learning times is insignificant.
Overall, the results from this experiment suggest that we can reject the null hypothesis in terms of predictive accuracies but not learning times.

\begin{table}[ht]
\centering
\normalsize
\begin{tabular}{c|c|c|c|c}
&
\textbf{S} &
\textbf{E} &
\textbf{D} &
\textbf{D$^*$}\\
\hline
Balanced accuracy (\%) & 66 & 66 & 72 & \textbf{73}\\
Learning time (seconds) & 316 & 316 & 327 & \textbf{296}
\end{tabular}
\caption{Experimental results on the IGGP data.}
\label{fig:game-results}
\end{table}

\section{Conclusions and further work}
\label{sec:conclusions}

As stated in Section \ref{sec:intro}, despite the widespread use of metarules, there is little work determining which metarules to use for a given learning task.
Instead, suitable metarules are assumed to be given as part of the background knowledge, or are used without any theoretical justification.
Deciding which metarules to use for a given learning task is a major open challenge \cite{crop:thesis,crop:minmeta} and is a trade-off between efficiency and expressivity: the hypothesis space grows given more metarules \cite{mugg:metabias,crop:minmeta}, so we wish to use fewer metarules, but if we use too few metarules then we lose expressivity.
To address this issue, Cropper and Muggleton \cite{crop:minmeta} used E-reduction on sets of metarules and showed that learning with E-reduced sets of metarules can lead to higher predictive accuracies and lower learning times compared to learning with non-E-reduced sets.
However, as we claimed in Section \ref{sec:intro}, E-reduction is not always the most appropriate form of reduction because it can remove metarules necessary to learn programs with the necessary specificity.

To support our claim, we have compared three forms of logical reduction: S-, E-, and D-reduction, where the latter is a new form of reduction based on SLD-derivations.
We have used the reduction algorithms to reduce finite sets of metarules.
Table \ref{tab:summary} summarises the results.
We have shown that many sets of metarules relevant to ILP do not have finite reductions (Theorem \ref{thm:irreduicble}).
These negative results have direct (negative) implications for MIL.
Specifically, our results mean that, in certain cases, a MIL system, such as Metagol or HEXMIL \cite{hexmil}, cannot be given a finite set of metarules from which it can learn any program, such as when learning arbitrary Datalog programs.
The results will also likely have implications for other forms of ILP which rely on metarules.

Our experiments compared learning the performance of Metagol when using the different reduced sets of metarules.
In general, using the D-reduced set outperforms both the S- and E-reduced sets in terms of predictive accuracy and learning time.
Our experimental results give strong evidence to our claim.
We also compared a $D^*$-reduced set, a subset of the D-reduced metarules, which, although derivationally incomplete, outperforms the other two sets in terms of predictive accuracies and learning times.


\subsection{Limitations and future work}

Theorem \ref{thm:irreduicble} shows that certain fragments of metarules do not have finite D-reductions.
However, our experimental results show that using D-reduced sets of metarules leads to higher predictive accuracies and lower learning times compared to the other forms of reduction.
Therefore, our work now opens up a new challenge of overcoming this negative theoretical result.
One idea is to explore whether special metarules, such as a currying metarule \cite{crop:metafunc}, could alleviate the issue.

In future work we would also like reduce more general fragments of logic, such as triadic logics, which would allow us to tackle a wider variety or problems, such as more of the games in the IGGP dataset.

We have compared the learning performance of Metagol when using different reduced sets of metarules.
However, we have not investigated whether these reductions are optimal.
For instance, when considering derivation reductions, it may, in some cases, be beneficial to re-add redundant metarules to the reduced sets to avoid having to derive them through SLD-resolution.
In future work, we would like to investigate identifying an optimal set of metarules for a given learning task, or preferably learning which metarules to use for a given learning task.

We have shown that although incomplete the $D^*$-reduced set of metarules outperforms the other reductions.
In future work we would like to explore other methods which sacrifice completeness for efficiency.


We have used the logical reduction techniques to remove redundant metarules.
It may also be beneficial to simultaneously reduce metarules and standard background knowledge.
The idea of purposely removing background predicates is similar to dimensionality reduction, widely used in other forms of machine learning \cite{skillicorn:understanding}, but which has been under researched in ILP \cite{furnkranz:dimensionality}.
Initial experiments indicate that this is possible \cite{crop:thesis,crop:minmeta}, and we aim to develop this idea in future work.



\begin{acknowledgements}
The authors thank Stephen Muggleton and Katsumi Inoue for discussions on this topic. We especially thank Rolf Morel for valuable feedback on the paper.
\end{acknowledgements}

\bibliographystyle{plain}
\bibliography{manuscript}

\newpage
\appendix

\section{Detailed Reduction Results}

\FloatBarrier
\subsection{Connected (\C{a}{m}) reductions}
\label{app:connected-reductions}

\begin{table}[h]
\footnotesize
\centering
    \begin{tabular}[t]{|c|c|c|}
    \hline
    S-reduction & E-reduction & D-reduction\\
    \hline
    \begin{tabular}[t]{l}
$P(A,B) \leftarrow Q(A,C)$\\
$P(A,B) \leftarrow Q(B,C)$\\
$P(A,B) \leftarrow Q(C,A)$\\
$P(A,B) \leftarrow Q(C,B)$\\
    \end{tabular}
    &
    \begin{tabular}[t]{l}
$P(A,B) \leftarrow Q(B,C)$
    \end{tabular}
    &
    \begin{tabular}[t]{l}
$P(A,A) \leftarrow Q(B,A)$\\
$P(A,B) \leftarrow Q(B,A)$\\
$P(A,B) \leftarrow Q(B,B)$\\
$P(A,B) \leftarrow Q(A,B),R(A,B)$\\
$P(A,B) \leftarrow Q(A,C),R(B,C)$\\
$P(A,B) \leftarrow Q(A,C),R(A,D),S(B,C),T(B,D),U(C,D)$
    \end{tabular}\\
    \hline
    \end{tabular}
\caption{Reductions of the connected fragment \C{\{2\}}{5}}
\end{table}

\begin{table}[h]
\footnotesize
\centering
    \begin{tabular}[t]{|c|c|c|}
    \hline
    S-reduction & E-reduction & D-reduction\\
    \hline
    \begin{tabular}[t]{l}
$P(A) \leftarrow Q(A)$\\
$P(A) \leftarrow Q(A,B)$\\
$P(A) \leftarrow Q(B,A)$\\
$P(A,B) \leftarrow Q(A)$\\
$P(A,B) \leftarrow Q(B)$\\
$P(A,B) \leftarrow Q(A,C)$\\
$P(A,B) \leftarrow Q(B,C)$\\
$P(A,B) \leftarrow Q(C,A)$\\
$P(A,B) \leftarrow Q(C,B)$\\
    \end{tabular}
    &
    \begin{tabular}[t]{l}
$P(A) \leftarrow Q(B,A)$\\
$P(A,B) \leftarrow Q(A)$
    \end{tabular}
    &
    \begin{tabular}[t]{l}
$P(A) \leftarrow Q(B,A)$\\
$P(A,A) \leftarrow Q(B,A)$\\
$P(A,B) \leftarrow Q(B)$\\
$P(A,B) \leftarrow Q(B,A)$\\
$P(A,B) \leftarrow Q(B,B)$\\
$P(A,B) \leftarrow Q(A,B),R(A,B)$\\
$P(A,B) \leftarrow Q(A,C),R(B,C)$\\
$P(A,B) \leftarrow Q(A,C),R(A,D),S(B,C),T(B,D),U(C,D)$
    \end{tabular}\\
    \hline
    \end{tabular}
\caption{Reductions of the connected fragment \C{\{1,2\}}{5}}
\end{table}

\FloatBarrier
\clearpage
\subsection{Datalog (\D{a}{m}) reductions}
\label{app:Datalog-reductions}
\begin{table}[h]
\footnotesize
\centering
    \begin{tabular}[t]{|c|c|c|}
    \hline
    S-reduction & E-reduction & D-reduction\\
    \hline
    \begin{tabular}[t]{l}
$P \leftarrow Q$\\
$P(A) \leftarrow Q(A)$\\
$P(A) \leftarrow Q(A,B)$\\
$P(A) \leftarrow Q(B,A)$\\
$P(A,A) \leftarrow Q(B,A)$\\
$P(A,B) \leftarrow Q(A,B)$\\
$P(A,B) \leftarrow Q(B,A)$\\
$P(A,B) \leftarrow Q(A),R(B)$\\
$P(A,B) \leftarrow Q(A),R(B,C)$\\
$P(A,B) \leftarrow Q(B),R(A,C)$\\
$P(A,B) \leftarrow Q(B,C),R(A,D)$
    \end{tabular}
    &
    \begin{tabular}[t]{l}
$P \leftarrow Q$\\
$P(A) \leftarrow Q(A,B)$\\
$P(A,B) \leftarrow Q(B,A)$\\
$P(A,B) \leftarrow Q(A),R(B)$\\
    \end{tabular}
    &
    \begin{tabular}[t]{l}
$P \leftarrow Q$\\
$P \leftarrow Q,R$\\
$P(A) \leftarrow Q(B,A)$\\
$P(A,A) \leftarrow Q(A)$\\
$P(A,A) \leftarrow Q(A,A)$\\
$P(A,B) \leftarrow Q(B,A)$\\
$P(A,B) \leftarrow Q,R(A,B)$\\
$P(A,B) \leftarrow Q(A,B),R(A,B)$\\
$P(A,B) \leftarrow Q(A,C),R(B,C)$\\
$P(A,B) \leftarrow Q(B,C),R(A,D)$\\
$P(A,B) \leftarrow Q(B,C),R(A,D),S(B,D),T(C,E)$\\
$P(A,B) \leftarrow Q(A,C),R(A,D),S(B,C),T(B,D),U(C,D)$\\
$P(A,B) \leftarrow Q(B,C),R(A,D),S(C,E),T(B,F),U(D,F)$\\
$P(A,B) \leftarrow Q(B,C),R(B,D),S(C,E),T(A,F),U(D,F)$\\
    \end{tabular}\\
    \hline
    \end{tabular}
\caption{Reductions of the Datalog fragment \D{\{0,1,2\}}{5}}
\vspace{2.5\baselineskip}
\footnotesize
\centering
    \begin{tabular}[t]{|c|c|c|}
    \hline
    S-reduction & E-reduction & D-reduction\\
    \hline
    \begin{tabular}[t]{l}
$P(A) \leftarrow Q(A)$\\
$P(A) \leftarrow Q(A,B)$\\
$P(A) \leftarrow Q(B,A)$\\
$P(A,A) \leftarrow Q(B,A)$\\
$P(A,B) \leftarrow Q(A),R(B)$\\
$P(A,B) \leftarrow Q(A),R(B,C)$\\
$P(A,B) \leftarrow Q(A,B)$\\
$P(A,B) \leftarrow Q(B),R(A,C)$\\
$P(A,B) \leftarrow Q(B,A)$\\
$P(A,B) \leftarrow Q(B,C),R(A,D)$\\
    \end{tabular}
    &
    \begin{tabular}[t]{l}
$P(A) \leftarrow Q(A,B)$\\
$P(A,B) \leftarrow Q(B,A)$\\
$P(A,B) \leftarrow Q(A),R(B)$\\
    \end{tabular}
    &
    \begin{tabular}[t]{l}
$P(A) \leftarrow Q(B,A)$\\
$P(A,A) \leftarrow Q(A)$\\
$P(A,A) \leftarrow Q(A,A)$\\
$P(A,B) \leftarrow Q(B,A)$\\
$P(A,B) \leftarrow Q(A,B),R(A,B)$\\
$P(A,B) \leftarrow Q(A,C),R(B,C)$\\
$P(A,B) \leftarrow Q(B,C),R(A,D)$\\
$P(A,B) \leftarrow Q(B,C),R(A,D),S(B,D),T(C,E)$\\
$P(A,B) \leftarrow Q(A,C),R(A,D),S(B,C),T(B,D),U(C,D)$\\
$P(A,B) \leftarrow Q(B,C),R(A,D),S(C,E),T(B,F),U(D,F)$\\
$P(A,B) \leftarrow Q(B,C),R(B,D),S(C,E),T(A,F),U(D,F)$
    \end{tabular}\\
    \hline
    \end{tabular}
\caption{Reductions of the Datalog fragment \D{\{1,2\}}{5}}
\vspace{2.5\baselineskip}
\footnotesize
\centering
    \begin{tabular}[t]{|c|c|c|}
    \hline
    S-reduction & E-reduction & D-reduction\\
    \hline
    \begin{tabular}[t]{l}
$P(A,A) \leftarrow Q(B,A)$\\
$P(A,B) \leftarrow Q(A,B)$\\
$P(A,B) \leftarrow Q(B,A)$\\
$P(A,B) \leftarrow Q(B,C),R(A,D)$\\
    \end{tabular}
    &
    \begin{tabular}[t]{l}
$P(A,A) \leftarrow Q(B,A)$\\
$P(A,B) \leftarrow Q(B,C),R(A,D)$
    \end{tabular}
    &
    \begin{tabular}[t]{l}
$P(A,A) \leftarrow Q(A,A)$\\
$P(A,A) \leftarrow Q(B,A)$\\
$P(A,B) \leftarrow Q(B,A)$\\
$P(A,B) \leftarrow Q(A,B),R(A,B)$\\
$P(A,B) \leftarrow Q(A,C),R(B,C)$\\
$P(A,B) \leftarrow Q(B,C),R(A,D)$\\
$P(A,B) \leftarrow Q(B,C),R(A,D),S(B,D),T(C,E)$\\
$P(A,B) \leftarrow Q(A,C),R(A,D),S(B,C),T(B,D),U(C,D)$\\
$P(A,B) \leftarrow Q(B,C),R(A,D),S(C,E),T(B,F),U(D,F)$\\
$P(A,B) \leftarrow Q(B,C),R(B,D),S(C,E),T(A,F),U(D,F)$
    \end{tabular}\\
    \hline
    \end{tabular}
\caption{Reductions of the Datalog fragment \D{\{2\}}{5}}

\end{table}

\FloatBarrier
\clearpage
\subsection{Singleton-free (\K{a}{m}) results}
\label{app:singleton-reductions}

\begin{table}[h]
\footnotesize
\centering
    \begin{tabular}[t]{|c|c|c|}
    \hline
    S-reduction & E-reduction & D-reduction\\
    \hline
    \begin{tabular}[t]{l}
$P(A,B) \leftarrow Q(A,B)$\\
$P(A,B) \leftarrow Q(B,A)$\\
$P(A,B) \leftarrow Q(B,C),R(A,D),S(A,D),T(B,C)$\\
    \end{tabular}
    &
    \begin{tabular}[t]{l}
$P(A,B) \leftarrow Q(B,A)$\\
$P(A,B) \leftarrow Q(A,A),R(B,B)$\\
$P(A,B) \leftarrow Q(A,C),R(B,C)$\\
    \end{tabular}
    &
    \begin{tabular}[t]{l}
$P(A,A) \leftarrow Q(A,A)$\\
$P(A,B) \leftarrow Q(B,A)$\\
$P(A,A) \leftarrow Q(A,B),R(B,B)$\\
$P(A,B) \leftarrow Q(A,A),R(B,B)$\\
$P(A,B) \leftarrow Q(A,B),R(A,B)$\\
$P(A,B) \leftarrow Q(A,C),R(B,C)$\\
$P(A,B) \leftarrow Q(A,C),R(A,D),S(B,C),T(B,D),U(C,D)$\\
    \end{tabular}\\
    \hline
    \end{tabular}
\caption{Reductions of the singleton-free fragment \K{\{2\}}{5}}
\end{table}

\begin{table}[h]
\footnotesize
\centering
    \begin{tabular}[t]{|c|c|c|}
    \hline
    S-reduction & E-reduction & D-reduction\\
    \hline
    \begin{tabular}[t]{l}
$P(A) \leftarrow Q(A)$\\
$P(A) \leftarrow Q(A,B),R(A,B)$\\
$P(A,B) \leftarrow Q(A,B)$\\
$P(A,B) \leftarrow Q(B,A)$\\
$P(A,B) \leftarrow Q(A),R(B)$\\
$P(A,B) \leftarrow Q(A),R(B,C),S(B,C)$\\
$P(A,B) \leftarrow Q(B),R(A,C),S(A,C)$\\
$P(A,B) \leftarrow Q(B,C),R(A,D),S(A,D),T(B,C)$\\
    \end{tabular}
    &
    \begin{tabular}[t]{l}
$P(A) \leftarrow Q(A,A)$\\
$P(A,B) \leftarrow Q(B,A)$\\
$P(A,B) \leftarrow Q(A),R(B)$\\
$P(A,B) \leftarrow Q(A,C),R(B,C)$\\
    \end{tabular}
    &
    \begin{tabular}[t]{l}
$P(A) \leftarrow Q(A,A)$\\
$P(A,A) \leftarrow Q(A)$\\
$P(A,B) \leftarrow Q(B,A)$\\
$P(A,A) \leftarrow Q(A,B),R(B,B)$\\
$P(A,B) \leftarrow Q(A,A),R(B,B)$\\
$P(A,B) \leftarrow Q(A,B),R(A,B)$\\
$P(A,B) \leftarrow Q(A,C),R(B,C)$\\
$P(A,B) \leftarrow Q(A,C),R(A,D),S(B,C),T(B,D),U(C,D)$
    \end{tabular}\\
    \hline
    \end{tabular}
\caption{Reductions of the singleton-free fragment \K{\{1,2\}}{5}}
\end{table}

\begin{table}[h]
\footnotesize
\centering
    \begin{tabular}[t]{|c|c|c|}
    \hline
    S-reduction & E-reduction & D-reduction\\
    \hline
    \begin{tabular}[t]{l}
$P \leftarrow Q$\\
$P(A) \leftarrow Q(A)$\\
$P(A) \leftarrow Q(A,B),R(A,B)$\\
$P(A,B) \leftarrow Q(A,B)$\\
$P(A,B) \leftarrow Q(B,A)$\\
$P(A,B) \leftarrow Q(A),R(B)$\\
$P(A,B) \leftarrow Q(A),R(B,C),S(B,C)$\\
$P(A,B) \leftarrow Q(B),R(A,C),S(A,C)$\\
$P(A,B) \leftarrow Q(B,C),R(A,D),S(A,D),T(B,C)$
    \end{tabular}
    &
    \begin{tabular}[t]{l}
$P \leftarrow Q$\\
$P(A) \leftarrow Q(A,A)$\\
$P(A,B) \leftarrow Q(B,A)$\\
$P(A,B) \leftarrow Q(A),R(B)$\\
$P(A,B) \leftarrow Q(A,C),R(B,C)$
    \end{tabular}
    &
    \begin{tabular}[t]{l}
$P \leftarrow Q$\\
$P \leftarrow Q,R$\\
$P(A) \leftarrow Q(A,A)$\\
$P(A,A) \leftarrow Q(A)$\\
$P(A,B) \leftarrow Q(B,A)$\\
$P(A,A) \leftarrow Q(A,B),R(B,B)$\\
$P(A,B) \leftarrow Q,R(A,B)$\\
$P(A,B) \leftarrow Q(A,A),R(B,B)$\\
$P(A,B) \leftarrow Q(A,B),R(A,B)$\\
$P(A,B) \leftarrow Q(A,C),R(B,C)$\\
$P(A,B) \leftarrow Q(A,C),R(A,D),S(B,C),T(B,D),U(C,D)$
    \end{tabular}\\
    \hline
    \end{tabular}
\caption{Reductions of the singleton-free fragment \K{\{0,1,2\}}{5}}
\end{table}

\FloatBarrier
\clearpage
\subsection{Duplicate-free (\U{a}{m}) results}
\label{app:duplicate-reductions}

\begin{table}[h]
\scriptsize
\centering
    \begin{tabular}[t]{|c|c|c|}
    \hline
    S-reduction & E-reduction & D-reduction\\
    \hline
    \begin{tabular}[t]{l}
$P(A,B) \leftarrow Q(A,B)$\\
$P(A,B) \leftarrow Q(B,A)$\\
$P(A,B) \leftarrow Q(B,C),R(A,D),S(A,D),T(B,C)$\\
    \end{tabular}
    &
    \begin{tabular}[t]{l}
      $P(A,B) \leftarrow Q(B,A)$\\
      $P(A,B) \leftarrow Q(B,C),R(A,D),$\\
      \qquad\qquad\quad$S(A,D),T(B,C)$\\
    \end{tabular}
    &
    \begin{tabular}[t]{l}
    $P(A,B) \leftarrow Q(B,A)$\\
    $P(A,B) \leftarrow Q(A,B),R(A,B)$\\
    $P(A,B) \leftarrow Q(A,C),R(B,C)$\\
    $P(A,B) \leftarrow Q(A,B),R(A,C),S(A,C)$\\
    $P(A,B) \leftarrow Q(A,B),R(A,C),S(C,D),T(C,D)$\\
    $P(A,B) \leftarrow Q(B,C),R(A,D),S(A,D),T(B,C)$\\
    $P(A,B) \leftarrow Q(B,C),R(A,D),S(B,C),T(B,D)$\\
    $P(A,B) \leftarrow Q(A,C),R(A,D),S(B,C),T(B,D),U(C,D)$\\
    $P(A,B) \leftarrow Q(B,C),R(A,D),S(B,D),T(C,E),U(C,E)$\\
    $P(A,B) \leftarrow Q(B,C),R(C,D),S(A,E),T(B,E),U(C,D)$\\
    \end{tabular}\\
    \hline
    \end{tabular}
\caption{Reductions of the fragment \U{\{2\}}{5}}
\end{table}

\begin{table}[h]
\scriptsize
\centering
    \begin{tabular}[t]{|c|c|c|}
    \hline
    S-reduction & E-reduction & D-reduction\\
    \hline
    \begin{tabular}[t]{l}
$P(A) \leftarrow Q(A)$\\
$P(A) \leftarrow Q(A,B),R(A,B)$\\
$P(A,B) \leftarrow Q(A,B)$\\
$P(A,B) \leftarrow Q(B,A)$\\
$P(A,B) \leftarrow Q(A),R(B)$\\
$P(A,B) \leftarrow Q(B),R(A,C),S(A,C)$\\
$P(A,B) \leftarrow Q(A),R(B,C),S(B,C)$\\
$P(A,B) \leftarrow Q(B,C),R(A,D),S(A,D),T(B,C)$\\
    \end{tabular}
    &
    \begin{tabular}[t]{l}
$P(A) \leftarrow Q(A,B),R(A,B)$\\
$P(A,B) \leftarrow Q(B,A)$\\
$P(A,B) \leftarrow Q(A),R(B)$\\
    \end{tabular}
    &
    \begin{tabular}[t]{l}
$P(A) \leftarrow Q(A)$\\
$P(A) \leftarrow Q(A),R(A)$\\
$P(A) \leftarrow Q(A,B),R(B)$\\
$P(A) \leftarrow Q(A,B),R(A,B)$\\
$P(A,B) \leftarrow Q(B,A)$\\
$P(A,B) \leftarrow Q(A),R(B)$\\
$P(A,B) \leftarrow Q(A),R(A,B)$\\
$P(A,B) \leftarrow Q(A,B),R(A,B)$\\
$P(A,B) \leftarrow Q(A,C),R(B,C)$\\
$P(A,B) \leftarrow Q(A,C),R(A,D),S(B,C),T(B,D),U(C,D)$\\
$P(A,B) \leftarrow Q(B,C),R(A,D),S(B,D),T(C,E),U(E)$\\
$P(A,B) \leftarrow Q(B,C),R(A,D),S(B,D),T(C,E),U(C,E)$\\
    \end{tabular}\\
    \hline
    \end{tabular}
\caption{Reductions of the fragment \U{\{1,2\}}{5}}
\end{table}

\begin{table}[h]
\scriptsize
\centering
    \begin{tabular}[t]{|c|c|c|}
    \hline
    S-reduction & E-reduction & D-reduction\\
    \hline
    \begin{tabular}[t]{l}
$P \leftarrow Q$\\
$P(A) \leftarrow Q(A)$\\
$P(A) \leftarrow Q(A,B),R(A,B)$\\
$P(A,B) \leftarrow Q(B,A)$\\
$P(A,B) \leftarrow Q(A,B)$\\
$P(A,B) \leftarrow Q(A),R(B)$\\
$P(A,B) \leftarrow Q(A),R(B,C),S(B,C)$\\
$P(A,B) \leftarrow Q(B),R(A,C),S(A,C)$\\
$P(A,B) \leftarrow Q(B,C),R(A,D),S(A,D),T(B,C)$\\
    \end{tabular}
    &
    \begin{tabular}[t]{l}
$P \leftarrow Q$\\
$P(A) \leftarrow Q(A,B),R(A,B)$\\
$P(A,B) \leftarrow Q(B,A)$\\
$P(A,B) \leftarrow Q(A),R(B)$\\
    \end{tabular}
    &
    \begin{tabular}[t]{l}
$P \leftarrow Q$\\
$P(A) \leftarrow Q(A)$\\
$P(A,B) \leftarrow Q(B,A)$\\
$P \leftarrow Q,R$\\
$P(A) \leftarrow Q,R(A)$\\
$P(A) \leftarrow Q(A),R(A)$\\
$P(A) \leftarrow Q(A,B),R(B)$\\
$P(A) \leftarrow Q(A,B),R(A,B)$\\
$P(A,B) \leftarrow Q,R(A,B)$\\
$P(A,B) \leftarrow Q(A),R(B)$\\
$P(A,B) \leftarrow Q(A),R(A,B)$\\
$P(A,B) \leftarrow Q(A,B),R(A,B)$\\
$P(A,B) \leftarrow Q(A,C),R(B,C)$\\
$P(A,B) \leftarrow Q(A,C),R(A,D),S(B,C),T(B,D),U(C,D)$\\
$P(A,B) \leftarrow Q(B,C),R(A,D),S(B,D),T(C,E),U(E)$\\
$P(A,B) \leftarrow Q(B,C),R(A,D),S(B,D),T(C,E),U(C,E)$
    \end{tabular}\\
    \hline
    \end{tabular}
\caption{Reductions of the fragment \U{\{0,1,2\}}{5}}
\end{table}

\end{document}